\newtheorem{theorem}{Theorem}[section]
\newtheorem{lemma}[theorem]{Lemma}
\newtheorem{proposition}[theorem]{Proposition}
\theoremstyle{definition}
\newtheorem{definition}[theorem]{Definition}
\newtheorem{assumption}[theorem]{Assumption}
\theoremstyle{remark}
\newtheorem{remark}[theorem]{Remark}
\numberwithin{equation}{section}
\newcommand{\E}{\mathbb{E}}
\newcommand{\R}{\mathbb{R}}
\newcommand{\N}{\mathbb{N}}
\newcommand{\Fc}{\mathcal{F}}
\newcommand{\Gc}{\mathcal{G}}
\newcommand{\var}{\operatorname{Var}}
\newcommand{\diag}{\operatorname{diag}}
\newcommand{\iid}{\textit{iid}\xspace}
\newcommand{\regimezero}{\emph{balistic}\xspace}
\newcommand{\regimeuno}{\emph{batch-equivalent}\xspace}
\newcommand{\tN}{{\lfloor tN\rfloor}}
\newcommand{\sN}{{\lfloor sN\rfloor}}
\newcommand{\ogamma}{\bar\gamma}
\newcommand{\dDelta}{\Delta^{(d)}}
\newcommand{\cDelta}{\Delta^{(c)}}
\newcommand{\cF}{\mathcal{F}^{(c)}}
\newcommand{\dF}{\mathcal{F}^{(d)}}
\newcommand{\cR}{\mathcal{R}^{(c)}}
\newcommand{\dR}{\mathcal{R}^{(d)}}
\begin{document}

\title[Effective continuous equations for adaptive SGD]{Effective continuous equations for adaptive SGD: a stochastic analysis view}
\author[L. Callisti]{Luca Callisti}
  \address{Dipartimento di Matematica, Universit\`a di Pisa, Largo Bruno Pontecorvo 5, I--56127 Pisa, Italia}
  \email{\href{mailto:l.callisti@studenti.unipi.it}{l.callisti@studenti.unipi.it}}
\author[M. Romito]{Marco Romito}
  \address{Dipartimento di Matematica, Universit\`a di Pisa, Largo Bruno Pontecorvo 5, I--56127 Pisa, Italia}
  \email{\href{mailto:marco.romito@unipi.it}{marco.romito@unipi.it}}
\author[F. Triggiano]{Francesco Triggiano}
  \address{Scuola Normale Superiore, Piazza dei Cavalieri, 7, 56126 Pisa, Italia}
  \email{\href{mailto:francesco.triggiano@sns.it}{francesco.triggiano@sns.it}}
\date{}

\begin{abstract}
We present a theoretical analysis of some popular adaptive Stochastic Gradient Descent (SGD) methods in the small learning rate regime.  Using the stochastic modified equations framework introduced by \cite{li_stochastic_2017}, we derive effective continuous stochastic dynamics for these methods. Our key contribution is that sampling-induced noise in SGD manifests in the limit as independent Brownian motions driving the parameter and gradient second momentum evolutions. Furthermore, extending the approach of \cite{malladi_sdes_2022}, we investigate scaling rules between the learning rate and key hyperparameters in adaptive methods, characterising all non-trivial limiting dynamics.
\end{abstract}
\maketitle

\section{Introduction}

Stochastic Gradient Descent (SGD), dating back to \cite{robbins_stochastic_1951,kiefer_stochastic_1952}, is a fundamental optimization algorithm in machine learning, essential for training large-scale models such as deep neural networks. Its core principle involves updating the model parameters by computing the gradient of a loss function, evaluated on a small, randomly selected, subsets of the data. It turns out that its stochastic nature makes SGD computationally efficient and scalable to massive datasets, forming the bedrock of how many complex models learn.

A crucial parameter in vanilla SGD is the learning rate, that measures the size of the steps taken during the update of the model's parameters. Algorithms like Adagrad (\cite{duchi_adaptive_2011}), RMSprop (\cite{hinton_coursera_2018}), Adam (\cite{kingma_adam_2015}) dynamically adjust the learning rate during training, leading to faster convergence and improved robustness compared to SGD. Due to their practical effectiveness, these adaptive methods are now standard implementations in all major machine learning frameworks (TensorFlow \cite{abadi_tensorflow_2015}, PyTorch \cite{paszke_pytorch_2019}, etc.) and represent the default choice for practitioners. We refer to \cite{ruder_overview_2017} for a review of adaptive algorithms.

It is therefore of paramount importance to have a deep understanding of their internal mechanisms. This understanding is essential not only for effectively applying and tuning these optimizers for specific problems but also for diagnosing training issues, designing novel optimization strategies, and comprehending the theoretical underpinnings of modern machine learning model training.

A significant theoretical advancement in analysing these optimizers has been proposed by the pioneering work of \cite{li_stochastic_2017}, see also \cite{li_stochastic_2019}, by describing the statistical properties of the discrete iterative updates as, within some error, instances of continuous-time stochastic processes, more precisely solutions of stochastic differential equations (SDE). The SDE formalism provides a powerful mathematical framework to study the impact of noise introduced by sampling to understand how optimizers move in complex, non-convex loss landscapes, and potentially reveal implicit regularization effects induced by the optimization process itself.

We exploit the theoretical framework of \emph{Stochastic Modified Equations} (SMEs) of \cite{li_stochastic_2019}, to derive continuous-time effective SDEs for some adaptive SGD algorithms like RMSprop and Adam. It turns out that this is far from being obvious, and uncovers complexities not present before. Indeed, adaptive methods maintain internal state variables (such as exponentially weighted moving averages of squared gradients) which dynamically adapt the learning rate, and that co-evolve with the model parameters, creating coupled dynamics that significantly complicate the derivation of accurate continuous-time approximations.

As one might anticipate from vanilla SGD, the inherent stochasticity arising from sampling random directions introduces a primary noise term into the effective SDE governing the model parameters themselves. The principal contribution of this work lies in identifying and characterizing the emergence of an additional layer of stochasticity that specifically impacts the effective SDE governing the evolution of the internal state variables.

From the perspective of stochastic analysis, the SDE for the squared gradients is driven by the quadratic variation of the primary noise process. More precisely, a newly identified "extra noise" term manifests as the fluctuations of the quadratic variation process. In essence, it captures the higher-order statistical effects of the gradient estimation noise on the adaptive learning rate mechanism itself.

In the course of our study, inspired by \cite{malladi_sdes_2022}, our contribution includes a detailed analysis of the scaling rules between the learning rate and the key hyper-parameters governing adaptive optimization methods (associated with momentum or second-momentum estimation). In particular, we characterise all cases of non-trivial SDE limit behaviours.

\subsection{Related work}

The idea of describing the discrete dynamics of SGD by means of a stochastic differential equation (the so-called stochastic modified equation) has been introduced in the seminal work of \cite{li_stochastic_2017}, and given rigorously in full details in \cite{li_stochastic_2019}. A similar idea appeared in \cite{mandt_stochastic_2017}, but only at a empirical level. Later \cite{gess_stochastic_2024,gess_stochastic_2024b} gave a version, the stochastic modified flow, that while not changing the overall covariance of the noise, allowed to relax the assumptions on the covariance and to study the SGD dynamics consistently as a flow with respect to the initial states.

The SME framework has been later used in a number of different works aiming at giving theoretical explanations of observed empirical facts. For instance, \cite{li_differential_2018} analyse a parallel optimization algorithm, the asynchronous SGD, via a modification of the SME approach, using stochastic differential delay equations. \cite{zhang_stochastic_2024} exploits the SME framework to produce an effective description of the discrete process of dropout. \cite{li_high_2024} proposes a variant (Hessian aware SME) that includes information on the values of the Hessian. \cite{gess_exponential_2024} obtains exponential convergence rates for momentum SGD and for its continuous counterpart, under the \emph{Polyak-{\L}ojasiewicz inequality} assumption (\cite{polyak_gradient_1963}).

The SME framework turns out to be particularly relevant when studying the implicit regularization, or implicit bias, of SGD. For instance, \cite{pesme_implicit_2021} studies the dynamics of SGD for a reparametrization of linear regression through a SDE model, proving that SGD has better generalization properties that standard GD, more precisely SGD performs $\ell^1$ optimization vs $\ell^2$ optimization for GD on a sparse true model. \cite{li_what_2022} performs a thorough analysis of the regularization effects of stochasticity through the limiting SDE. \cite{chen_stochastic_2023,chen_stochastic_2024} show how SGD drives complex networks towards much simpler networks, sparse of low-rank for instance, based on attractivity of some invariant set of the SDE model, that are roughly the same as the discrete SGD dynamics.

When turning to specific limiting models for adaptive methods, we recall \cite{MWE22}, that analyses the training process of adaptive gradient algorithms through a limit ODE. They are able to give a partial explanation of the fast initial convergence of these algorithms, and of the large spikes observed in the late phase of training, due to instabilities around the stationary points. In their setting, the limit is in small learning rate, when the momentum parameters are kept fixed.

An empirical observation, generally noticed several times in the literature, is that adaptive methods converge faster and are more effective in escaping saddle points, but generalise worse than vanilla SGD. This is for instance shown in \cite{xie_adaptive_2022}. Here a SDE model is used, but no continuous dynamics for Adam has been derived. \cite{zhou_towards_2020} carries out an accurate comparison of SGD and ADAM dynamics through the continuous approximation provided by the stochastic modified equations framework. In particular, in \cite{zhou_towards_2020}, noise is modelled through $\alpha$-stable processes and an effective continuous equation that describes the discrete dynamics is obtained only to first order. The effectiveness of SGD and Adam methods has been the subject of an intensive study in \cite{dereich_nonconvergence_2024,dereich_convergence_2024,do_nonconvergence_2025,dereich_averaged_2025,do_nonconvergence_2025b,dereich_sharp_2025}.

\subsection{Limitations and further developments}\label{s:limitations}

Understanding the nature, the magnitude (\cite{mignacco_effective_2022}), the type (\cite{chen_noise_2021}), or the distribution of SGD noise is a topic of current interest. Recently in \cite{simsekli_tail_2019}, see also \cite{wu_noisy_2020,zhou_towards_2020,battash_revisiting_2023}, it has been argued that stochastic gradient noise has heavy tails and should follow $\alpha$-stable Lévy distribution. In view of the role we have identified for the quadratic variation of the stochastic gradient noise, we expect a richer structure due to the Lévy process. We plan to address this issue in more detail in future work. 

It is worth noticing that the order-2 continuous counterparts of the gradient second momentum equations (see equations \eqref{eq:sde2_rmsprop_zero}, \eqref{eq:sde2_rmsprop_uno}, \eqref{eq:sde2_adam_zero}, \eqref{eq:sde2_adam_uno}) need not have non-negative values (with non-negative initial values), although this is the case for the discrete dynamics. This is due to the presence of fluctuations and not necessarily contradictory. See \cref{r:negative} for a more detailed discussion.

We expect that similar phenomena will emerge when turning to the stochastic modified flow framework introduced in \cite{gess_stochastic_2024,gess_stochastic_2024b}, since the latter describes the stochastic gradient noise with a richer structure, without changing the overall covariance. In particular this would allow to relax the regularity assumptions on the covariance (see \cref{r:limitInvert}).

\subsection{Contributions}
In summary we obtain the following main results:
\begin{enumerate}
  \item we identify a one parameter family of continuous SDEs that give a order-1 approximation of adaptive methods, the only one that preserve the memory effect strength and stochasticity of the discrete dynamics, and concentrate on the most relevant values;
  \item we establish that the stochasticity of the sampling mechanism gives rise to an additional, independent random component in order-2 continuous approximations of adaptive optimisation methods.
\end{enumerate}

\subsection{A toy model}\label{s:toymodel}

Before delving into the details of our results, we wish to give an explanation the emergence of additional noise in the effective equations for adaptive SGD. We will do this in a simplified model that emphasises the role of the noise ion the dynamics. Let us thus consider the following problem,
\begin{equation}\label{eq:toy}
  \begin{cases}
    \theta_{k+1}
      = \theta_k - \tau\psi(v_k)g_k,\\
    v_{k+1}
      = v_k - \tau v_k + \tau g_k^2,
  \end{cases}
\end{equation}
which can be seen as a one-dimensional version of \eqref{eq:RMSprop} when $\nabla f(\theta) = 0$, $B=1$. Here $\tau$ will play the role of the time-step. We model the \emph{noise} as a sequence $(\xi_k)_k$ of \iid random variables with
\begin{equation}\label{eq:toy_noise}
  \E[\xi_k] = 0,\qquad
  \var(\xi_k) = 1,\qquad
  \E[\xi_k^3] = 0,\qquad
  \var(\xi_k^2) = \kappa^2,
\end{equation}
and set
\[
  b_k
    = \sqrt\tau\,\xi_k,\qquad
  \omega_k
    = \frac{\sqrt\tau}\kappa(\xi_k^2 - 1).
\]
With these positions, system \eqref{eq:toy} becomes
\begin{equation}
  \begin{cases}
    \theta_{k+1}
      = \theta_k - \sqrt\tau\psi(v_k)m(\theta_k)b_k,\\
    v_{k+1}
      = v_k - \tau (v_k - m(\theta_k)^2) + \kappa\sqrt\tau\,m(\theta_k)^2\omega_k.
  \end{cases}
\end{equation}
It is elementary to see that $b_k$ and $\omega_k$ are uncorrelated, therefore by the Donsker invariance principle (see for instance \cite{karatzas_brownian_1998}) linear interpolations of
\begin{equation}
  B_k
    = \sum_{j=1}^k b_j,\qquad
  W_k
    = \sum_{j=1}^k \omega_j
\end{equation}
(with $B_0=W_0=0)$) converge in law to independent Brownian motions $(B_t,W_t)_{t\geq0}$. In turns, the law of the discrete dynamics \eqref{eq:toy} are well approximated, within order 1, by the SDE
\begin{equation}\label{eq:toy_sde}
  \begin{cases}
    d\theta_t
      = - \sqrt\tau\psi(v_t)m(\theta_t)\,dB_t,\\
    dv_t
      = - (v_t - m(\theta_t)^2)\,dt + \kappa\sqrt\tau\,m(\theta_t)^2\,dW_t.
  \end{cases}
\end{equation}
Let us make more precise the connection of the emergence of the additional randomness with the notion of quadratic variation. Define the $\sigma$-fields $\Fc_k=\sigma(\theta_0,v_0,\xi_1,\xi_2,\ldots,\xi_k)$, then the processes $(B_k)_{k\geq0}$, $(W_k)_{k\geq0}$ are martingales with respect to the filtration $(\Fc_k)_{k\geq0}$. Each has (discrete) quadratic variation $k\tau$, and their (discrete) quadratic \emph{co-variation} is
\[
  \E[(B_{k+1}W_{k+1} - B_kW_k\mid\Fc_k]
    = \E[(B_{k+1} - B_k)(W_{k+1} - W_k)\mid\Fc_k]
    = 0.
\]

\begin{remark}[Beyond the low skewness condition]\label{r:beyond}
  The assumption $\E[\xi_k^3] = 0$, in our simplification, corresponds to the \emph{low skewness condition} introduced in \cite{malladi_sdes_2022} and stated in \cref{LSC}. Let us assume that $\E[\xi_k^3] = \gamma>0$. This would introduces correlations between the two noises, so to recover independence we set
  \[
    \omega_k
      = \frac{\sqrt\tau}{\sqrt{\kappa^2 - \gamma^2}}(\xi_k^2 - \gamma\xi_k - 1).
  \]
  Now, once again, $b_k$ and $\omega_k$ are uncorrelated and $B$, $W$ converge to independent Brownian motions. The corresponding SDE is now,
  \begin{equation}\label{eq:toy_sde2}
    \begin{cases}
      d\theta_t
        = - \sqrt\tau\psi(v_t)m(\theta_t)\,dB_t,\\
      dv_t
        = - (v_t - m(\theta_t)^2)\,dt
          +\gamma m(\theta_t)^2\,dB_t
          + \sqrt{\tau(\kappa^2 - \gamma^2)}m(\theta_t)^2\,dW_t.
    \end{cases}
  \end{equation}
  Notice that in particular the quadratic variation of the stochastic integral in the equation for $v$ in \eqref{eq:toy_sde} is the same as the quadratic variation of the same term in the same equation in \eqref{eq:toy_sde2}. Therefore, the overall effect of a non-vanishing skewness is essentially the redistribution of the randomness in the equation for the \emph{second momentum}. In the rest of the paper we shall rely on the low skewness condition (and of even stronger assumptions), but one can expect that a non-trivial skewness will produce similar effects in the (much more involved) effective continuous SDEs for adaptive SGD.
\end{remark}

\section{Preliminaries}

Let $(x_j)_{j=1}^M$ be a (labelled) dataset and $B$ be the batch size. Consider a loss function $f(x,\theta)$ and a family $(\gamma_{k,i})_{k\in \N,i\le B}$ of \iid random variables with uniform law over $\{1,\dots,M\}$.

The parameter update at step $k$ of any stochastic optimization algorithm depends on
\begin{equation}\label{eq: gradDecomposition}
    g_k(\theta)=\nabla f(\theta)+\frac{1}{B}\sum_{i=1}^B(\nabla  f_{\gamma_{k,i}}(\theta)-\nabla f(\theta))=:\nabla f(\theta)+\frac{z_k}{\sqrt{\tau B}},
\end{equation}
where $\nabla f_{\gamma_{k,i}}(\theta):=\nabla_{\theta}f (x_{\gamma_{k,i}},\theta)$, $\nabla f(\theta):=\E[\nabla f_{\gamma_{k,1}}(\theta)]$. The parameter $\tau>0$ denotes the time step of the algorithm, we will show that it depends on the scaling rule between the algorithm's hyper-parameters. Instead, $(z_k)_k$ is a family of independent centred random variables with covariance $\tau\Sigma(\theta)$, where $\Sigma(\theta)$ is given by
\begin{equation}\label{eq:covariance}
  \Sigma(\theta)
    := \E[(\nabla f_{\gamma_{k,1}}(\theta)-\nabla f(\theta))(\nabla f_{\gamma_{k,1}}(\theta)-\nabla f(\theta))^T].
\end{equation}
In the following, its diagonal is denoted by $\Sigma_d(\theta)$. Instead, $M(\theta)$ denotes the covariance matrix of $\hat{z_k}=(\tau^{-1/2} z_k)^{\odot2}$, namely
\begin{equation}\label{eq:M}
 M(\theta)=\E\left[\left((\hat{z}_k-\Sigma_d(\theta)\right)\left((\hat{z}_k-\Sigma_d(\theta)\right)^T\right].   
\end{equation}

\begin{definition}[RMSprop]\label{d:RMSprop}
The \emph{RMSprop} algorithm updates $\theta_k$ as follows
    \begin{equation}\label{eq:RMSprop}
      \begin{cases}
        \theta_{k+1}=\theta_{k}-\eta g_k \odot (\sqrt{v_k}+\epsilon)^{-1},\\
        v_{k+1}=\beta v_k+(1-\beta)g_k^{\odot 2},
      \end{cases}
    \end{equation}
    where $\eta,\beta$ and $\epsilon$ are positive hyperparameters.
\end{definition}

\begin{definition}[Adam]\label{d:Adam}
The \emph{Adam} algorithm updates $\theta_k$ as follows
  \begin{equation}\label{eq:Adam}
    \begin{aligned}
    \theta_{k+1}
      &=\theta_{k}-\eta \widehat{m}_{k+1} \odot (\sqrt{\widehat v_k}+\epsilon)^{-1},\\
    m_{k+1}
      &=\beta_1 m_k+(1-\beta_1)g_k, &
    \widehat{m}_{k+1}
      &= (1-\beta_1^{k+1})^{-1}m_{k+1},\\
    v_{k+1}
      &= \beta_2 v_k+(1-\beta_2)g_k^{\odot 2}, &
    \widehat{v}_{k+1}
      &= (1-\beta_2^{k+1})^{-1}v_{k+1}.
    \end{aligned}
  \end{equation}
  where $\eta,\beta_1,\beta_2$ and $\epsilon$ are positive hyper-parameters.
\end{definition}

\begin{remark}\label{r:Remark on phi}
  The term $\sqrt{v_k}$ in formulas above is not ``smooth''. To prove rigorously our results, \emph{without changing the discrete dynamics}, we will introduce a regularization of the square root that is relevant only in a small neighbourhood of the value $v=0$ which is not reached by the momentum variables in the algorithms above. This has an additional benefit, see \cref{r:negative}.
\end{remark}

In the definitions above and in the rest of the paper, the operator $\odot$ is the component-wise product of two vectors. Moreover, we overload the meaning of $\odot$ when applied to diagonal matrices to mean the $\odot$ product applied to the diagonal. In addition, whenever a scalar function is computed at a vector or matrix, then it is applied component-wise. At last, given a vector $v$, then $v^{\otimes2}$ and $\diag(v)$ denote $v\otimes v$ and a diagonal matrix with $v$ on the diagonal, respectively.

Let us conclude this preliminary section by introducing the concept of approximation and some assumptions on the noise. To this end denote by $\Gc^\alpha$ the space of $C^{\alpha +1}$-functions which, together with all the derivatives up to order $\alpha$, have polynomial growth (\cite{li_stochastic_2019}).

\begin{definition}\label{defn:approx}
    Let $T>0, \tau \in (0,1\wedge T)$ and $\alpha\in \N^*$. Set $N:=\lfloor T/\tau \rfloor$. A continuous-in-time stochastic process $(X_t)_{t \in [0,T]}$ is an $\alpha$ order weak approximation of the discrete-in-time stochastic process $(x_k)_{k=0}^N$ if there exists $C>0$ such that
    \[
      \sup_{k=0,\dots,N}|\E[f(X_{k\tau})-f(x_k)]|\le C\tau^{\alpha}
    \]
    for all $f:\R^d\to\R$ in $\Gc^{\alpha +1}$.
\end{definition}

In the following, we assume that the following conditions holds:
\begin{assumption}{(Low-Skewness Condition)}\label{LSC}
For all $k$, $z_k$'s third moments are controlled by $\tau^p$ for some $p\geq 3$. Namely, if $\theta\in \R^d$, then there exist $K(\theta) \in \Gc^0$, 
    \begin{equation}
        |\E[(z_k)_{i_1}(z_k)_{i_2}(z_k)_{i_3}]|\le K(\theta)\tau^p,
    \end{equation}
    for all $i_1,i_2,i_3\in \{1,\dots,d\}$.
\end{assumption}

\begin{remark}
   As already stated in \cref{s:toymodel}, it seems that this assumption might be weakened. Nevertheless, it will allow us to obtain more treatable SDEs without causing any loss of approximation accuracy in the experiments. Moreover, as pointed out in \cite{malladi_sdes_2022}, a similar condition is trivially satisfied whenever the $z_k$ are Gaussian which is a reasonable assumption since we are looking for SDEs driven by Brownian motion.
\end{remark}

\begin{assumption}{(Bounded Moments Condition)}\label{BMC}
    For all $m\in \N$ there exists $C_m$ such that
    \[
      \sup_{k}\E[|z_k|^{2m}]
        \le C_m(1+|\theta|^{2m}).
    \]
\end{assumption}

\section{Scaling rules: a heuristic derivation}\label{s:scaling}

Adaptive algorithms are characterized by various hyper-parameters: the learning rate, the momentum factors (denoted by $\beta$ in \eqref{eq:RMSprop} and $\beta_1$, $\beta_2$ in \eqref{eq:Adam}), and the batch size. Imposing a scaling rule between the learning rate $\eta$ and the momentum factors is known to preserve the memory effect strength (\cite{MWE22}). A scaling rule between the learning rate and batch size helps in dynamically adjusting these quantities (\cite{GZR22,malladi_sdes_2022}). We thus consider the problem under the assumption that the momentum factors $\beta$, $\beta_1$, $\beta_2$ converge to $1$ as long as $\eta \to 0$. Using formal stochastic analysis arguments, we shall derive all the possible scaling rules preserving the memory effect and stochasticity.

We first focus on RMSprop. Let 
\[
  1-\beta\simeq\eta^b,\qquad
  v_k=\eta^{-2c}u_k,\qquad
  \epsilon=\eta^{-c}\varepsilon,\qquad
  B\simeq\eta^d,
\]
where $x\simeq h(\eta)$ means that there exists $C$ independent from $\eta$ such that $x=C\,h(\eta)$. Then, \eqref{eq:RMSprop} can be reformulated as follows,
\begin{equation}\label{Update:RMS}
  \begin{cases}
    \theta_{k+1}
      = \theta_{k}-\eta^{1+c}(\sqrt{u_k}+\varepsilon )^{-1}\odot(\nabla f(\theta_k)+(\tau \eta^{d})^{-\frac{1}{2}}z_k),\\
    u_{k+1}
      = u_k-\eta^bu_k+\eta^{b+2c}(\nabla f(\theta_k)+(\tau \eta^{d})^{-1/2}z_k)^{\odot2}.
  \end{cases}
\end{equation}

Since we want to preserve the memory effect, that is the damping term $-\eta^bu_k$, and the gradient update, namely $\frac{\nabla f(\theta_k)}{ \sqrt{u_k}+\varepsilon}$, then 
\[
  \tau=\eta^{b}=\eta^{1+c}
    \quad\implies\quad b=1+c.
\]
Let us rewrite the remaining terms in integral form,
\[
    \sum_{k=0}^N\eta^{2c-d}z_k^{\odot 2},\quad
    \sum_{k=0}^N\eta^{2c+\frac{b-d}{2}}\nabla f(\theta_k)\odot z_k,\quad
    \sum_{k=0}^N\tau \eta^{2c}\nabla f(\theta_k)^{\odot 2},\quad
    \sum_{k=0}^N\eta^{\frac12(1+c-d)}z_k\odot(\sqrt{u_k}+\varepsilon )^{-1}.
\]
Then, additional conditions on $b,c$ and $d$ follow from requiring that, as long as $\eta$ goes to 0, the following conditions hold.
\begin{itemize}
  \item Every term vanishes or converges to some integral;
  \item the adaptive parameter, $v$, is updated by the square of the stochastic gradient, that is at least one of the first terms converges;
  \item the stochasticity has an effect, namely at least one of the three terms involving $z_k$ or $z_k^2$ \footnote{Recall that under the assumption that the $z_k$'s are Gaussian increments, $\sum_kz_k^2$ converges, heuristically, to a deterministic integral, the quadratic variation} converges.
\end{itemize}
The above requirements impose that $c\in[0,1]$ and $d=2c$.

Likewise, for Adam \eqref{eq:Adam}, take $1-\beta_1\simeq\eta^b$, $\beta_2\simeq\eta^b$, to keep the memory effect in both equations for first and second momentum, and $\tau=\eta^b$. Next, let
\[
  v_k
    \simeq \eta^{-2c}u_k,\qquad
  \epsilon
    \simeq \eta^{-c}\varepsilon,\qquad
  B
    \simeq \eta^d,\qquad
  m_k
    \simeq \eta^{-e}\ell_k.
\]
The equations for the first momentum is
\[
%  \begin{cases}
    \ell_{k+1}
      = \ell_k - \eta^b\ell_k
        + \eta^{b+e}\nabla f(\theta_k)
        + \eta^{\frac12(b+2e-d)}z_k,\\
%    u_{k+1}
%      = u_k - \eta^b u_k
%        + \eta^{b+2c}(\nabla f(\theta_k) + \eta^{-\frac12(b+d)}z_k)^2.
%  \end{cases}
\]
Thus $e=0$ to keep the gradient in the equation for the first momentum, therefore $b=1+c$ (as in the case of RMSprop), and $b-d\geq0$ to avoid explosion of the second term in the same equation. From now on the same considerations as the above RMSprop case apply, and again $c\in[0,1]$ and $d=2c$.

In the rest of the paper we shall focus on the scaling rules given by the endpoints $c=0$ and $c=1$. The value $c=0$, which we shall call \regimezero regime, is theoretically interesting because it makes it possible to preserve the term $\nabla f(\theta)^2$, which is the main idea behind adaptive algorithms. Instead, the value $c=1$, which we shall call \regimeuno regime, is experimentally relevant given its coherence with the square root scaling rule between $\eta$ and $B$ identified and analysed in \cite{GZR22,malladi_sdes_2022}. 

We first identify the limit SDEs (at first order in the time step) for both models and both regimes. Consider first the \regimezero regime, and set,
\begin{equation}\label{eq:values_zero}
1 - \beta
  = \lambda_0\eta,\qquad
1 - \beta_1
  = \lambda_1\eta,\qquad
1 - \beta_2
  = \lambda_2\eta,\qquad
B
  = \sigma^{-2},
\end{equation} 
\begin{equation}\label{eq:gammas}
    \gamma_1(t) = (1 - e^{-(t+\tau)\sum_{k=1}^5(\lambda_1^k\tau^{k-1})/k}),\quad \gamma_2(t) = (1 - e^{-t\sum_{k=1}^5(\lambda_2^k\tau^{k-1})/k}).
\end{equation} We also set $u_k=v_k$, for consistency with the other regime.
The order-1 SDEs for RMSprop and Adam are as follows,
\begin{equation}\label{eq:sde1_rmsprop_zero}
  \textmd{\footnotesize(RMSprop)}\qquad
  \begin{cases}
    d\theta_t
      = - \nabla f(\theta_t)\odot(\sqrt{u_t} + \epsilon)^{-1}\,dt,\\
    du_t
      = \lambda_0(\nabla f(\theta_t)^{\odot 2} + \sigma^2\Sigma_d(\theta_t) - u_t)\,dt.
  \end{cases}
\end{equation}
\begin{equation}\label{eq:sde1_adam_zero}
  \textmd{\footnotesize(Adam)}\qquad
  \begin{cases}
    d\theta_t
      = -\frac{\sqrt{\gamma_2(t)}}{\gamma_1(t)}m_t\odot(\sqrt{u_t} + \epsilon\sqrt{\gamma_2(t)})^{-1}\,dt,\\
    dm_t
      = \lambda_1 (\nabla f(\theta_t) - m_t)\,dt,\\
    du_t
      = \lambda_2(\sigma^2\Sigma_d(\theta_t) + \nabla f(\theta_t)^{\odot 2} - u_t)\,dt.
  \end{cases}
\end{equation}

We turn to the \regimeuno regime. Let $(W_t)_t$ be a $d$-dimensional Brownian motion, where $d$ is the dimension of the space of parameters $\theta$.  Set
\begin{equation}\label{eq:values_uno}
  1 - \beta
    = \lambda_0\eta^2,\qquad
  1 - \beta_1
    = \lambda_1\eta^2,\qquad
  1 - \beta_2
    = \lambda_2\eta^2,\qquad
  \epsilon
    = \eta^{-1}\epsilon_0,\qquad
  B
    = \sigma^{-2}\eta^2,
\end{equation}
and $u_k=\eta^2 v_k$. The order-1 SDE for RMSprop \eqref{eq:RMSprop} and Adam \eqref{eq:Adam} are as follows,
\begin{equation}\label{eq:sde1_rmsprop_uno}
  \textmd{\footnotesize(RMSprop)}\qquad
  \begin{cases}
    d\theta_t
      = -(\sqrt{u_t} + \epsilon_0)^{-1}\odot \bigl(\nabla f(\theta_t)\,dt + \sigma\Sigma(\theta_t)^{\frac12}\,dW_t\bigr),\\
    du_t
      = \lambda_0(\sigma^2\Sigma_d(\theta_t) - u_t)\,dt.
  \end{cases}
\end{equation}
\begin{equation}\label{eq:sde1_adam_uno}
  \textmd{\footnotesize(Adam)}\qquad
  \begin{cases}
    d\theta_t
      = -\frac{\sqrt{\gamma_2(t)}}{\gamma_1(t)}m_t\odot(\sqrt{u_t} + \epsilon_0\sqrt{\gamma_2(t)})^{-1}\,dt,\\
    dm_t
      = \lambda_1 (\nabla f(\theta_t) - m_t)\,dt
        + \lambda_1\sigma\Sigma(\theta_t)^{\frac12}\,dW_t,\\
    du_t
      = \lambda_2(\sigma^2\Sigma_d(\theta_t) - u_t)\,dt.
  \end{cases}
\end{equation}

We summarise our findings in the following theorem. We point out that the results for the \regimeuno case can be originally found in \cite{malladi_sdes_2022}.

\begin{theorem}\label{t:scaling}
  Assume \cref{LSC}, \cref{BMC} and that $f$, $\Sigma^{\frac12}$ are sufficiently regular. Let $\tau=\eta$ in the \regimezero regime, and $\tau=\eta^2$ in the \regimeuno regime. Let $T>0$, then $(X_t)_t$ is a order-1 weak approximation of $(x_k)_k$ (according to \cref{defn:approx}),
  where either $x_k=(\theta_k,u_k)$ is the rescaled dynamics of RMSprop (in either regime) and $X_t=(\theta_t,u_t)$ is solution of respectively \eqref{eq:sde1_rmsprop_zero} or \eqref{eq:sde1_rmsprop_uno}, or $x_k=(\theta_k,\ell_k,u_k)$ is the rescaled dynamics of Adam (in either regime) and $X_t=(\theta_t,\ell_t,u_t)$ is solution of respectively \eqref{eq:sde1_adam_zero} or \eqref{eq:sde1_adam_uno}.
\end{theorem}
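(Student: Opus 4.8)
The plan is to follow the one-step error analysis underlying the stochastic modified equations framework of \cite{li_stochastic_2019}: one reduces the global weak error to a telescoping sum of local (one-step) errors, and controls each local error by matching the low-order conditional moments of the discrete increments against those produced by the candidate SDE. Since we seek an order-1 approximation ($\alpha=1$ in \cref{defn:approx}, so $f\in\Gc^2$), it suffices to match the first and second conditional moments of the one-step increments up to $O(\tau^2)$ while showing that the third absolute moments are $O(\tau^2)$ as well; the telescoping argument then converts a local error of size $O(\tau^2)$ over the $N=\lfloor T/\tau\rfloor$ steps into a global error of size $O(\tau)$.

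Concretely, I would first rewrite the rescaled dynamics as a one-step map $x_{k+1}=x_k+\Delta(x_k,z_k,\tau)$, using the regularization of $v\mapsto\sqrt v$ from \cref{r:Remark on phi} to make all coefficients smooth with polynomial growth, and substituting the regime-specific scalings \eqref{eq:values_zero} and \eqref{eq:values_uno} into \eqref{Update:RMS} and its Adam analogue. I would then compute the conditional moments of $\Delta$ given $\Fc_k$ via the decomposition \eqref{eq: gradDecomposition}: the mean exploits $\E[z_k]=0$ and $\E[z_k^{\odot2}]=\tau\Sigma_d(\theta_k)$ (which is exactly what produces the $\sigma^2\Sigma_d$ terms in the $u$/second-momentum drifts), the second moment uses $\cov(z_k)=\tau\Sigma(\theta_k)$, and the control of the third moment uses \cref{LSC}. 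The decisive point is that the noise term in the $\theta$-update carries a prefactor $\eta^{1+c}(\tau\eta^{d})^{-1/2}$ whose size differs between the two regimes: in the \regimezero regime it contributes to $\Delta\theta$ at order $\tau$ (hence variance $O(\tau^2)$), so it is invisible at order 1 and $\theta$ is driven by a pure drift, whereas in the \regimeuno regime it contributes a genuine diffusion increment of variance $O(\tau)$, producing the Brownian term $\sigma\Sigma(\theta)^{1/2}\,dW$ in \eqref{eq:sde1_rmsprop_uno}. On the SDE side, writing the target equation as $dX=b(X)\,dt+a(X)\,dW$, an It\^o--Taylor expansion over $[k\tau,(k+1)\tau]$ gives $\E[\Delta X\mid X]=b(X)\tau+O(\tau^2)$ and $\E[\Delta X\,\Delta X^\top\mid X]=(aa^\top)(X)\tau+O(\tau^2)$; matching these with the discrete moments pins down $b$ and $a$ to be exactly the coefficients in \eqref{eq:sde1_rmsprop_zero}--\eqref{eq:sde1_adam_uno}, and in particular shows that the fluctuations of $z_k^{\odot2}$, governed by $M(\theta)$ in \eqref{eq:M}, enter only at order 2, so the $u$-equation is deterministic at this order.

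To pass from local to global error I would introduce the backward semigroup $u(t,x)=\E[f(X_t)\mid X_0=x]$, which solves the Kolmogorov equation associated with the SDE and inherits the relevant regularity and polynomial growth from $f\in\Gc^2$ by the assumed regularity of $f$ and $\Sigma^{1/2}$. Telescoping $\E[f(X_{N\tau})]-\E[f(x_N)]=\sum_{k}\bigl(\E[u((N-k)\tau,x_k)]-\E[u((N-k-1)\tau,x_{k+1})]\bigr)$ and inserting the one-step moment estimates bounds each summand by $C(x_k)\tau^2$; using uniform-in-$k\le N$ moment bounds on the discrete iterates (from \cref{BMC} and the structure of the updates) together with the polynomial growth of $u$ and its derivatives to integrate out the prefactors $C(x_k)$, and summing the $N\sim T/\tau$ contributions, yields the $O(\tau)$ bound of \cref{defn:approx}.

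The main obstacle I anticipate is twofold and concentrated in the \regimeuno regime. First, because $\Delta\theta$ is there an $O(\tau^{1/2})$ diffusion increment, controlling its skewness to $O(\tau^2)$ is not automatic and is precisely where \cref{LSC} is indispensable: one must verify that the symmetric Brownian increments of the SDE are matched by discrete increments whose odd third moments are negligible. Second, the coefficients $(\sqrt{u}+\epsilon)^{-1}$ and $\Sigma^{1/2}$ are not globally Lipschitz, so obtaining the uniform moment bounds and the regularity/polynomial-growth estimates for the semigroup $u(t,\cdot)$ of the coupled, non-globally-Lipschitz system $(\theta,u)$, and $(\theta,m,u)$ for Adam, is the delicate analytic step; the regularization of \cref{r:Remark on phi}, which keeps $\sqrt{u}+\epsilon$ bounded away from its singularity along the dynamics, is what makes these estimates go through without altering the discrete process.
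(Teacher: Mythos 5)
Your overall strategy is the same as the paper's: both follow the framework of \cite{li_stochastic_2017,li_stochastic_2019}, reducing the global weak error to a telescoping sum over the backward semigroup, matching first and second conditional moments of the one-step increments to $O(\tau^2)$, using $\E[g_k\mid\Fc_k]=\nabla f(\theta_k)$ and $\E[g_k^{\odot2}\mid\Fc_k]=\nabla f(\theta_k)^{\odot2}+\sigma^2\tau^{-1+2c}\Sigma_d(\theta_k)$ to identify the regime-dependent drifts and diffusions, invoking \cref{LSC} for the third moments in the \regimeuno regime, and using the regularization $\phi$ of \cref{r:Remark on phi} to restore smoothness of the preconditioner. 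For RMSprop your outline is essentially complete.

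There is, however, a concrete gap in your treatment of Adam. The continuous drift contains the bias-correction factor $\Gamma_t=\sqrt{\gamma_2(t)}/\gamma_1(t)$, which is \emph{singular at $t=0$} (since $\gamma_1(0^+)=0$), and its discrete counterpart $\bar\Gamma_k=\sqrt{\ogamma_2(k)}/\ogamma_1(k+1)$ diverges as $\tau\downarrow0$ for fixed small $k$. Your one-step matching step requires $\bar\Gamma_k\,\bar Q_k(u)-\Gamma_{k\tau}Q_{k\tau}(u)=O(\tau)$ uniformly in $k$, and your semigroup step requires the Kolmogorov backward equation to have coefficients that are regular with polynomial growth uniformly in time; both fail on any interval containing $t=0$, and no choice of the regularization $\phi$ (which only fixes the $\sqrt{u}$ degeneracy) repairs this. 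The paper resolves it by proving the Adam approximation only on $[t_0,T]$ for an arbitrary fixed $t_0>0$ (equivalently $k\geq k_0=\lfloor t_0/\tau\rfloor$), where $\Gamma$ and $\bar\Gamma$ are bounded and the identity $\bar\gamma_i(k)-\gamma_i(k\tau)=O(\tau^2)$, guaranteed by the specific choice \eqref{eq:gammas} of $\gamma_i$ as an exponential of a truncation of $-\log(1-\lambda_i\tau)/\tau$, yields the required $O(\tau^2)$ closeness of the first moments. You should either restrict to positive initial times as the paper does, or explicitly tame the singularity of $\Gamma$, and in either case verify the quantitative agreement between $\ogamma_i$ and $\gamma_i$; as written, your argument silently assumes the naive identification of the discrete and continuous bias corrections, which is exactly the point where the Adam case differs from RMSprop.
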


\section{Effective continuous equations for adaptive algorithms}

In this section we turn to investigate order-2 effective continuous equations for adaptive optimization methods within the \regimezero and \regimeuno regimes, thereby extending the findings of \cite{malladi_sdes_2022}. In contrast to their order-1 counterparts, these higher-order models incorporate supplementary terms arising from It\=o corrections. The inclusion of these corrections is crucial for mitigating discrepancies at finer scales and enhancing the order of approximation of the discrete optimization dynamics.

A significant departure from the order-1 effective models discussed in \cref{s:scaling} is observed in the order-2 representations for adaptive methods. Specifically, our analysis reveals a novel phenomenon: the emergence of an additional, independent random component within the governing equation for the second momentum. This emergent randomness is intrinsically linked to, and generated by, the fluctuations of squared magnitudes of the gradient estimators. It is worth noticing that the underlying SGD process possesses a singular source of randomness, namely the mini-batch sampling mechanism. The manifestation of independent noise components in the continuous dynamics of adaptive SGD arises because the equations governing these dynamics capture two distinct types of fluctuations, as illustrated in the elementary example of \cref{s:toymodel}.

Consider first the \regimezero regime. Set for simplicity $\lambda_0=\lambda_1=\lambda_2=1$ in \eqref{eq:values_zero}, and $u_k=v_k$ again for consistency with the other regime. The order-2 SDEs for RMSprop and Adam are given, respectively, as follows,
\begin{equation}\label{eq:sde2_rmsprop_zero}
  \begin{cases}
    \begin{aligned}
      d\theta_t
        &= - P_t\odot \nabla f(\theta_t)
           -\frac12\tau\bigl(P_t^{\otimes 2}\nabla^2 f(\theta_t)\nabla f(\theta_t)
           + \frac12P_t^{\otimes 2}u_t^{-\frac12}\odot(J_t-u_t)\odot \nabla f(\theta_t)\bigr)\,dt\\
        &\quad + \sqrt{\tau}\sigma \diag(P_t)\Sigma(\theta_t)^{1/2}\,dW_t,
    \end{aligned}\\
    \begin{aligned}
      du_t
        &=  \bigl(1+\tfrac12\tau\bigr) (J_t - u_t)
         + \frac12\tau\nabla J_t(\nabla f(\theta_t)\odot P_t)\\
         &\quad + -2\sqrt{\tau}\sigma\diag(\nabla f(\theta_t))\Sigma(\theta_t)^{1/2}\,dW_t
         + \sqrt\tau\sigma^2 M(\theta_t)^{1/2}\,dB_t,
    \end{aligned}
  \end{cases}
\end{equation}

\begin{equation}\label{eq:sde2_adam_zero}
  \begin{cases}
    \begin{aligned}
      d\theta_t
        & = - \Gamma_t m_t\odot Q_t
            + \frac12\tau\bigl(\Gamma_t(\nabla f(\theta_t)-m_t)\odot Q_t
            + m_t\odot (\partial_t(\Gamma_t Q_t(\cdot))(u_t))\\
        &\quad + \frac12\Gamma_tQ_t^{\otimes2}u_t^{-\frac12} \odot m_t\odot (J_t - u_t)\bigl)\,dt,
    \end{aligned}\\
      dm_t
        = \bigl(1+\tfrac12\tau\bigr)(\nabla f(\theta_t)-m_t)
        + \frac12\tau\Gamma_t\bigl(\nabla^2f(\theta_t)(m_t\odot Q_t)\bigr)\,dt
        + \sqrt{\tau}\sigma\Sigma(\theta_t)^{1/2}\,dW_t,\\
    \begin{aligned}
      du_t
        & = \bigl(1+\tfrac12\tau\bigl)(J_t-u_t)
          + \frac12\tau\bigl(\nabla J_t(m_t\odot \Gamma_t Q_t\bigr)\,dt\\
        & \quad - 2\sqrt{\tau}\sigma\diag(\nabla f(\theta_t))\Sigma(\theta_t)^{1/2}\,dW_t
          + \sigma^2 M(\theta_t)^{1/2}\,dB_t,
    \end{aligned}
  \end{cases}
\end{equation}
where $M$ is defined in \eqref{eq:M}, $\Gamma_t=\sqrt{\gamma_2(t)}/\gamma_1(t)$, with $\gamma_i$ given in \cref{eq:gammas},
\begin{equation}\label{eq:positions}
  P_t
    = (\sqrt{u_t} + \epsilon)^{-1},\qquad
  J_t
    = \nabla f(\theta_t)^{\otimes 2} + \sigma^2\diag\Sigma(\theta_t),\qquad
  Q_t
    = \bigl(\sqrt{u_t} + \epsilon\sqrt{\gamma_2(t)}\bigr)^{-1},
\end{equation}
and $(B_t)_{t\geq0}$, $(W_t)_{t\geq0}$ are two \emph{independent} $d$-dimensional Brownian motions.

In the \regimeuno regime, we set again for simplicity $\lambda_0=\lambda_1=\lambda_2=1$, $\epsilon_0=\epsilon$ in \eqref{eq:values_uno}, and $u_k=\eta^2 v_k$. The order-2 SDEs for RMSprop and Adam are given, respectively, as follows,
\begin{equation}\label{eq:sde2_rmsprop_uno}
  \begin{cases}
    \begin{aligned}
      d\theta_t
        &= - P_t\odot\nabla f(\theta_t)
           - \frac12\tau\bigl(P_t^{\otimes 2}\nabla^2 f(\theta_t)\nabla f(\theta_t)
           + \frac12P_t^{\otimes2}u_t^{-\frac12}\odot\nabla f(\theta_t)\bigr)\\
        &\quad -\tau\sigma^2\sum_{h,k=1}^d({P_t}^{\otimes 2}_{h,k}\Sigma_{h,k}(\theta_t)\nabla(\partial^2_{h,k}f(\theta_t))\odot P_t\,dt\\
        &\quad + \bigl(\sigma\diag(P_t)\Sigma(\theta_t)^{1/2}
           + \tau\Lambda_1(\theta_t,u_t)\bigr)\,dW_t,
    \end{aligned}\\
    \begin{aligned}
      du_t
        &= \bigl(1+\tfrac12\tau\bigr)(\sigma^2\Sigma_d(\theta_t)-u_t)
           + \tau\bigl(\nabla f(\theta_t)^{\otimes2}-\frac12\sigma^2\nabla \Sigma_d(\theta_t)(\nabla f(\theta_t)\odot P_t\bigr)\\
        &\quad - \tau\sigma^4 \sum_{h,k=1}^d {P_t}^{\otimes 2}_{h,k}\Sigma_{h,k}\partial^2_{h,k}\Sigma_d(\theta_t)\,dt
          + \tau\Lambda_2(\theta_t,u_t)\,dW_t
          + \sqrt{\tau}M(\theta_t)^{1/2}\,dB_t.
    \end{aligned}
  \end{cases}
\end{equation}
\begin{equation}\label{eq:sde2_adam_uno}
  \begin{cases}
    \begin{aligned}
      d\theta_t
        &= - \Gamma_t m_t\odot Q_t
           + \tau \bigl(\tfrac32(\Gamma_t\nabla f(\theta_t)-m_t)\odot Q_t
           + m_t\odot (\partial_t(\Gamma_t Q_t(\cdot))(u_t))\\
        &\quad + \frac12\Gamma_t Q_t^{\otimes2}u_t^{\frac12}\odot(\sigma^2\Sigma_d(\theta_t)-u_t)\bigr)\,dt
           + \frac12\tau\sigma\Gamma_t\diag(Q_t)\Sigma(\theta_t)^{1/2}\,dW_t,
    \end{aligned}\\
    \begin{aligned}
      dm_t
        &=  \bigl(1+\tfrac12\tau\bigr)(\nabla f(\theta_t)-m_t)
          + \frac12\tau \Gamma_t\nabla^2f(\theta_t)(m_t\odot Q_t)\,dt\\
        &\quad + \sigma\Bigl(\Sigma(\theta_t)^{1/2}
          + \frac12\tau\sum_{h=1}^d \Gamma_t(m_t\odot Q_t)_h\partial_h\Sigma(\theta_t)^{1/2}\Bigr)\,dW_t,
    \end{aligned}\\
    \begin{aligned}
      du_t
        &= \bigl(1-\tfrac12\tau\bigr)(\sigma^2\Sigma_d(\theta_t)-u_t)\,dt 
         + \frac12\tau\sigma^2\Gamma_t\nabla\Sigma_d(\theta_t)(m_t\odot Q_t)\,dt\\
        &\quad -2\tau\sigma\diag(\nabla f(\theta_t))\Sigma(\theta_t)^{1/2}\,dW_t
         + \sqrt{\tau}M(\theta_t)^{1/2}\,dB_t.
    \end{aligned}
  \end{cases}
\end{equation}
The terms $\Lambda_1$, $\Lambda_2$ in \eqref{eq:sde2_rmsprop_uno} are explicitly given in \cref{as:order2} because of their involved expressions.
\begin{remark}\label{r:limitInvert}
  In general, $\Lambda_1$ is well-defined only if $\Sigma(\theta_t)^{1/2}$ is invertible, although the assumption can be dropped if $\Sigma$ is diagonal. This is the least reasonable among all the hypotheses on $f$, $\Sigma$, $M$ used in this work and, for example, does not hold in over-parametrized neural networks. 
\end{remark}

We can now state our main theorem. The full set of assumptions and a rigorous proof will be given in \cref{as:order2}.
\begin{theorem}
  Assume \cref{LSC}, \cref{BMC}, that $f$, $\Sigma^{\frac12}$ and $M^{\frac12}$ are sufficiently regular, and $\Sigma^{\frac12}$ invertible (for RMSprop). Let $\tau=\eta$ in the \regimezero regime, and $\tau=\eta^2$ in the \regimeuno regime.
  Let $T>0$, then $(X_t)_t$ is a order-2 weak approximation of $(x_k)_k$ (according to \cref{defn:approx}),
  where either $x_k=(\theta_k,u_k)$ is the rescaled dynamics of RMSprop (in either regime) and $X_t=(\theta_t,u_t)$ is solution of respectively \eqref{eq:sde2_rmsprop_zero} or \eqref{eq:sde2_rmsprop_uno}, or $x_k=(\theta_k,\ell_k,u_k)$ is the rescaled dynamics of Adam (in either regime) and $X_t=(\theta_t,\ell_t,u_t)$ is solution of respectively \eqref{eq:sde2_adam_zero} or \eqref{eq:sde2_adam_uno}.
\end{theorem}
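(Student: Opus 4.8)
The plan is to reduce the order-2 weak approximation to a purely local, one-step conditional moment comparison, following the machinery of \cite{li_stochastic_2019} (the same criterion that underlies \cref{t:scaling}) together with its extension to genuinely diffusive limits in \cite{malladi_sdes_2022}. Writing the rescaled discrete update as $x_{k+1}=x_k+\Delta_k$ and the one-step increment of the candidate SDE over $[0,\tau]$ as $\tilde\Delta$, it suffices to exhibit functions $K\in\Gc^0$ such that, for every component multi-index of length $s$ up to the order dictated by the noise scaling,
\[
  \Bigl|\E\bigl[\textstyle\prod_{j=1}^{s}\Delta_k^{(i_j)}\mid x_k\bigr]
       -\E\bigl[\textstyle\prod_{j=1}^{s}\tilde\Delta^{(i_j)}\mid x_k\bigr]\Bigr|
    \le K(x_k)\,\tau^{3},
\]
together with a uniform bound on a sufficiently high absolute moment of $\Delta_k$. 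In the \regimezero regime all noise terms enter at size $\tau$ (small-noise scaling), so matching the drift and covariance through $O(\tau^3)$ essentially suffices; in the \regimeuno regime the parameter noise is of size $\sqrt\tau$, so the comparison must be pushed through the third and fourth conditional moments, exactly as in \cite{malladi_sdes_2022}. \cref{BMC} furnishes the absolute-moment control and all polynomial-growth constants, while the regularity of $f$, $\Sigma^{1/2}$, $M^{1/2}$ (and invertibility of $\Sigma^{1/2}$ for RMSprop in the \regimeuno regime, see \cref{r:limitInvert}) gives existence, uniqueness and moment bounds for the SDEs; the regularized square root makes $P_t$, $Q_t$ smooth so that all coefficients in \eqref{eq:positions} lie in the required symbol classes.

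The core of the work is the discrete one-step moment expansion through $O(\tau^3)$. Substituting $g_k=\nabla f(\theta_k)+z_k/\sqrt{\tau B}$ and expanding $g_k^{\odot 2}$, the second-momentum update splits into three groups: the deterministic $\nabla f(\theta_k)^{\odot 2}$, the cross term $2\nabla f(\theta_k)\odot z_k/\sqrt{\tau B}$, and the quadratic term $z_k^{\odot2}/(\tau B)$. Taking conditional expectations, the last splits into its mean $\sigma^2\Sigma_d(\theta_k)$ (which feeds $J_t$ in \eqref{eq:positions}) and a genuine fluctuation whose conditional covariance is precisely $\sigma^2 M(\theta_k)$, with $M$ as in \eqref{eq:M}. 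This fluctuation is the discrete shadow of the quadratic-variation noise isolated in the toy model of \cref{s:toymodel}, and it forces the independent Brownian term $M(\theta_t)^{1/2}\,dB_t$. Crucially, the cross-covariance between the primary noise $z_k$ and the squared fluctuation $\hat z_k^{\odot2}-\Sigma_d$ is a third moment of $z_k$, hence $O(\tau^{p-1})=O(\tau^{2})$ by \cref{LSC}; this is the mechanism (the analogue of $\E[\xi_k^3]=0$) that decouples the two noises to the required order and legitimises taking $B_t$ and $W_t$ independent. The same expansion, applied to $\theta_k$ and (for Adam) to $m_k$, produces the It\=o-correction drifts: the Hessian terms such as $P_t^{\otimes2}\nabla^2f(\theta_t)\nabla f(\theta_t)$, the $u_t^{-1/2}$-terms from differentiating $P_t$ and $Q_t$, and the $\partial_t(\Gamma_tQ_t)$ terms carrying the explicit time dependence of the bias corrections $\gamma_1,\gamma_2$.

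In parallel I compute the one-step moments of each candidate SDE by an It\=o--Taylor expansion over $[0,\tau]$: for a system $dX=a\,dt+b\,dW+c\,dB$ one has $\E[\tilde\Delta^{(i)}]=\tau a^{(i)}+\tfrac12\tau^2(\mathcal{A}a^{(i)})+O(\tau^3)$ with $\mathcal{A}$ the generator, $\E[\tilde\Delta^{(i)}\tilde\Delta^{(j)}]=\tau(bb^{\top}+cc^{\top})^{(ij)}+O(\tau^2)$ where the $O(\tau^2)$ correction is itself needed, and the higher signed moments of order $O(\tau^2)$ determined by the drift--diffusion cross terms. Independence of $W$ and $B$ makes the $bc^{\top}$ contributions vanish, matching the discrete decoupling of the previous paragraph. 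Comparing these expansions against the discrete moments termwise fixes every coefficient in \eqref{eq:sde2_rmsprop_zero}--\eqref{eq:sde2_adam_uno}; the choice of regime enters only through the $\eta$-powers already tabulated in \cref{s:scaling}, which decide whether a given gradient or noise contribution survives at order $1$ or first appears at order $2$. I would organise the computation by treating RMSprop and Adam through a single master expansion of conditional moments and then specialising to the two regimes, deferring the explicit forms of $\Lambda_1,\Lambda_2$ to \cref{as:order2}.

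The main obstacle is twofold. Conceptually, it is the correct identification and independence of the extra noise $M(\theta_t)^{1/2}\,dB_t$: one must verify that, after subtracting its mean $\sigma^2\Sigma_d$, the squared-gradient fluctuation has exactly the conditional covariance $\sigma^2 M$ of \eqref{eq:M} and, via \cref{LSC}, vanishing cross-covariance with $z_k$ to the needed order, so that the interpolated partial sums converge to a Brownian motion genuinely independent of $W$. Technically, it is the sheer volume of bookkeeping: the order-2 expansions of four coupled systems generate many It\=o-correction terms that must be sorted by their exact $\tau$-order, and the coupling through $P_t$, $Q_t$, $\Gamma_t$ and their derivatives demands care, in particular the $u_t^{-1/2}$ singularities (controlled by the regularized square root) and, for RMSprop in the \regimeuno regime, the inversion of $\Sigma(\theta_t)^{1/2}$ needed to define $\Lambda_1$ (see \cref{r:limitInvert}).
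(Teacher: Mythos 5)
Your proposal follows essentially the same route as the paper's proof in \cref{as:order2}: reduction to a one-step conditional moment comparison within $O(\tau^3)$ via the framework of \cite{li_stochastic_2019} and \cite{malladi_sdes_2022} (the paper's \cref{at:approx} under \cref{a:approx_zero} or \cref{a:approx_uno}), an It\=o--Taylor expansion of the SDE increments (the paper's \cref{lemma:momEst}), and the identification of the squared-gradient fluctuation with conditional covariance $\sigma^2 M$ decoupled from $z_k$ by the low-skewness condition, which justifies the independent driver $M^{1/2}\,dB_t$. The only point you do not address, and which the paper's rigorous statement \cref{at:scaling2} must handle, is the singularity of the bias-correction factor $\Gamma_t$ at $t=0$ for Adam, which forces the approximation to be stated only from a positive time $t_0>0$ (see \cref{r:on_gamma}).
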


\begin{remark}\label{r:negative}
  It is worth noting that the $u$ components of the order-2 continuous equations obtained in this section are not guaranteed to be non-negative, despite what one might expect from the discrete dynamics they approximate.
  However, there is no contradiction in this result. For example, the SDE $d\hat x = \hat x\,dt + \sqrt\eta\,dB_t$ is a bona fide order-1 approximation of the (continuous and non-random) dynamics $\dot x = x$. The variable $\hat x$ has variance of order $O(\eta)$, and while negative values of $\hat x$ are possible, they become increasingly unlikely as $\eta\to0$.
  For this reason in the proofs given in \cref{as:order2} we shall modify the term $\sqrt u$ to ensure that it is always well defined.
\end{remark}

\begin{table}
  \centering
  \begin{tabular}{cc}
    \includegraphics[width=.45\linewidth]{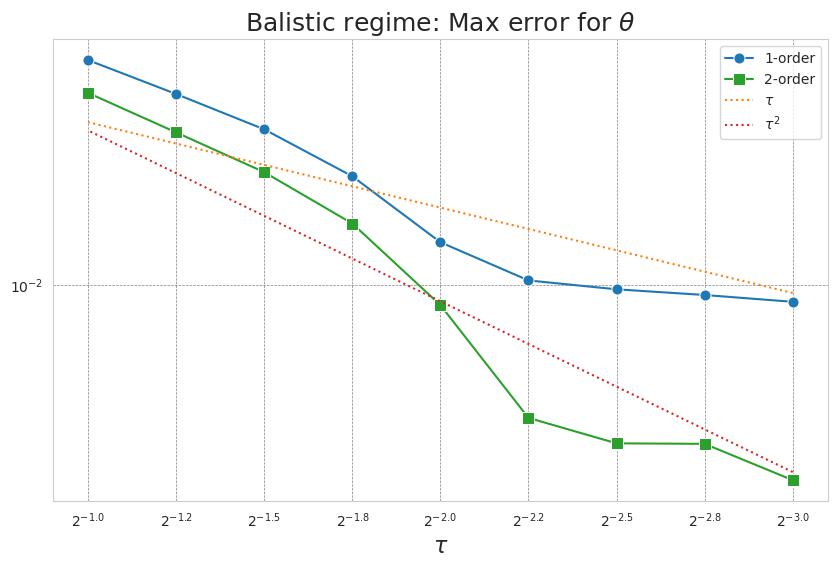} &
    \includegraphics[width=.45\linewidth]{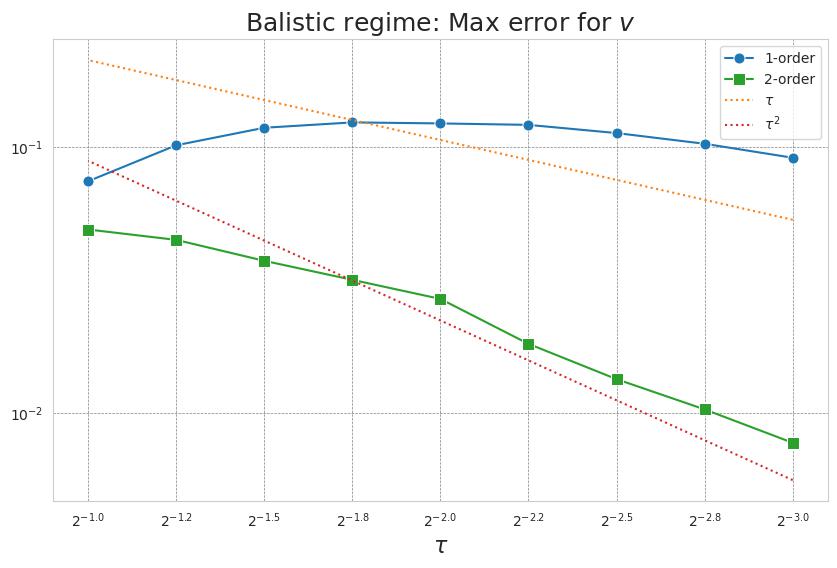} 
  \end{tabular}
  \caption{\textit{RMSprop} \regimezero regime: We show the weak error, see \cref{defn:approx}, with two test functions $f_1(\theta)=\frac{\|\theta\|_2^2}{d}$ and $f_2(v)=\frac{\|v\|_2^2}{d},$
  where $d$ denotes the dimension of $\theta$, in our case $d=6$. As predicted
by our analysis, the order-2 approximation  should give a slope = 2 
decrease in error as $\tau$ decreases (note that the x-axis is flipped).}\label{tb:zero}
\end{table}
\begin{table}
  \centering
  \begin{tabular}{cc}
    \includegraphics[width=.45\linewidth]{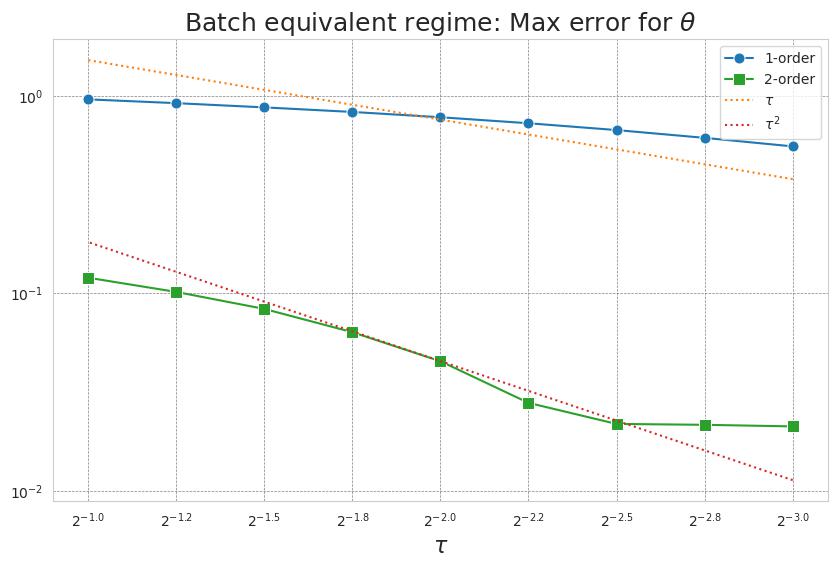} &
    \includegraphics[width=.45\linewidth]{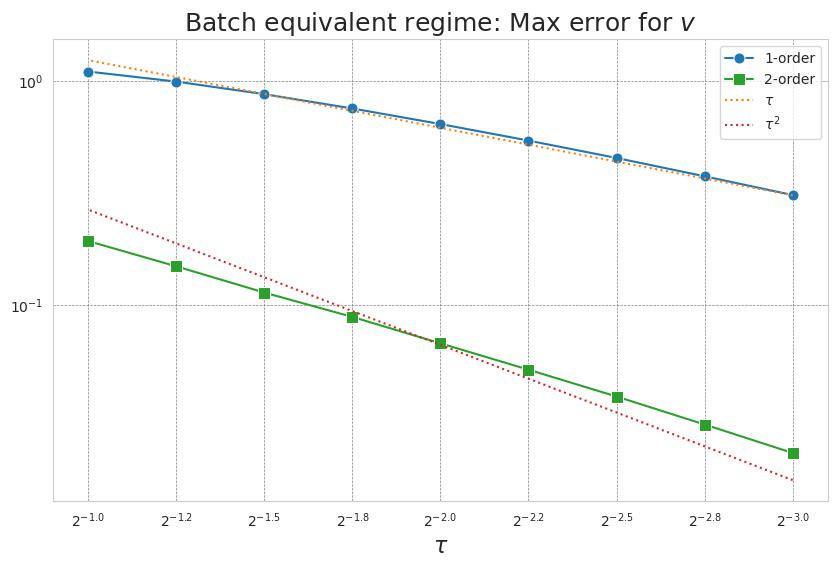} 
  \end{tabular}
  \caption{\textit{RMSprop} \regimeuno regime:  We show the weak error, see \cref{defn:approx}, with two test functions $f_1(\theta)=\frac{\|\theta\|_2^2}{d}$ and $f_2(v)=\frac{\|v\|_2^2}{d},$
  where $d$ denotes the dimension of $\theta$, in our case $d=6$.}\label{tb:uno}
\end{table}

The numerical results in \cref{tb:zero} (\regimezero regime) and \cref{tb:uno} (\regimeuno regime) have been obtained on RMSprop, with $\lambda_0=0.5$ and $\sigma=1$, using a synthetic Gaussian dataset and quadratic loss. More details on the experimental setting and a larger set of experiments are available in the appendix (see \cref{as:experiments}).

\section{Appendix}
\subsection{Details on the toy model}\label{as:toymodel}

In this section we provide additional details on the toy model presented in \cref{s:toymodel}.

This section analyses a toy model, a simplified version of \eqref{Update:RMS}. Consider
\begin{equation}\label{eq:System1}
  \begin{cases}
    \theta_{k+1}
      = \theta_k-\tau\psi(v_k)g_k,\\
    v_{k+1}
      = v_k-\tau v_k+\tau g_k^2,
  \end{cases}
\end{equation}
where $\theta_k,v_k\in \R$ and $g_k$ is given by \eqref{eq: gradDecomposition}. Since we aim to approximate this discrete evolution by an SDE, we need to identify the stochastic drivers most suited to describe the random variable $(z_k,z_k^2)$. Let us introduce the following assumptions:
\begin{equation}
    \nabla f(\theta)\equiv 0,\qquad
    B = 1,\qquad
    z_k = m(\theta_k)\sqrt{\tau}\xi_k,
\end{equation}
where $(\xi_k)_{k\geq0}$ is a sequence of \iid random variables such that 
\begin{equation}\label{eq:moments}
  \E[\xi_k]=0,\qquad
  \var(\xi_k^2) = 1,\qquad 
  |\E[\xi_k^3]|\leq\tau,\qquad
  \var(\xi_k^2) = \kappa^2,\qquad
  \sup_k\E[\xi_k^6] = \overline{c},
\end{equation}
where the constraint on the third moment corresponds to \cref{LSC}, while $\kappa,\overline{c}\in\R$ are constants independent of $\tau$.

Therefore, \eqref{eq:System1} corresponds to
\begin{equation}\label{eq:System2}
  \begin{cases}
    \theta_{k+1}
      = \theta_k - \sqrt{\tau}\psi(v_k)m(\theta_k)b_k,\\
    v_{k+1}
      = v_k - \tau(v_k-m(\theta_k)^2) + \kappa\sqrt\tau m(\theta_k)^2 \omega_k,
  \end{cases}
\end{equation}
where 
\begin{equation}\label{eq:2Noises}
  b_k
    := \sqrt{\tau}\xi_k,\qquad
  \omega_k
    := \frac{\sqrt\tau}\kappa(\xi_k^2-1).
\end{equation}
Let us highlight that the integral formulation of the previous dynamic is given by
\begin{equation}\label{eq:System2Int}
  \begin{cases}
    \theta_k
      = \theta_0 - {\sqrt\tau}\sum\limits_{j=0}^{k-1}\psi(v_j)m(\theta_j)b_j,\\
    v_k
      = v_0 - \sum\limits_{j=0}^{k-1}(v_j-m(\theta_j)^2)\tau
        + \kappa\sqrt\tau\sum\limits_{j=0}^{k-1}m(\theta_j)^2 \omega_j.
  \end{cases}
\end{equation}
Formula \eqref{eq:System2Int} should be thought as an Euler discretization with step-size $\tau$. Indeed,
\[
  \sum_{j=0}^{k-1}\tau(m(\theta_j)^2-v_j)
    \qquad\text{approximates}\qquad
  \int_0^{k\tau}(m(\theta_s)^2-v_s)\,ds.
\]
Instead, since $\omega_k$ and $b_k$ are centred random variable with variance $\tau$, then
\begin{equation}\label{eq:StocIntApprox}
  \sum_{j=0}^{k-1}
    \begin{pmatrix}
      \psi(v_j)m(\theta_j)b_j\\
      \frac1\kappa m(\theta_j)^2 \omega_j  
    \end{pmatrix} 
\end{equation}
is approximating a two-folds stochastic integral with matrix valued integrand.

Notice that the presence of $\sqrt{\tau}$ in front of these terms is necessary to model the fact that as $\tau$ approaches zero, then the dynamic tends to be deterministic.

Now for all $N\in\mathbb{N}$, consider the following linear interpolations
\begin{equation}\label{eq:LinearInter}
  \begin{aligned}
    B^N_t
      &= \sum_{j=0}^{\tN-1}b_j^{(N)}
        + (tN-\tN)b^{(N)}_{\tN},\\
    W^N_t
      &= \sum_{j=0}^{\tN-1}\omega_j^{(N)}
        + (tN-\tN)\omega^{(N)}_{\tN},
  \end{aligned}
\end{equation}
where $(b^{(N)}_j)_{j,N}$ and $(\omega^{(N)}_j)_{j,N}$ are two families of independent random variables, with distribution given by \eqref{eq:2Noises}, with $\tau=1/N$.

Since $(B^N_{k/N},W^N_{k/N})$ is approximating
\[
  \int_0^{k/N}
    \begin{pmatrix}
      1 &0\\
      0 &1\\
    \end{pmatrix}
  \,d
    \begin{pmatrix}
      Z^1_s\\
      Z^2_s
    \end{pmatrix},
\]
then identifying the limit of the law of the sequence of the stochastic processes $(B^N_t,W^N_t)_{t\geq0}$ allows us to find the driving signal of the stochastic integral approximated by \eqref{eq:StocIntApprox}.

The next proposition offers a crucial insight: an additional layer of stochasticity is needed to describe an adaptive stochastic algorithm by a continuous-in-time evolution process. A similar result holds without imposing any constraint on the third moment $|\E[z_k^3]|$, as illustrated in \cref{r:beyond}.

\begin{proposition}\label{Prop}
    The sequence $(Z^N)_{N\geq1} = (B^N,W^N)_{N\geq1}$ converges in law to a two-dimensional Brownian motion.
\end{proposition}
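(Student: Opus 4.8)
The plan is to recognise this as a multivariate functional central limit theorem (Donsker invariance principle) for a triangular array of row-wise independent increments, and to verify the two classical hypotheses: convergence of the aggregated covariance and a Lindeberg-type negligibility condition. For each fixed $N$ the pairs $(b_k^{(N)},\omega_k^{(N)})_k$ defined in \eqref{eq:2Noises} are independent and identically distributed, centred two-dimensional vectors (since $\E[\xi_k^2]=1$ forces $\E[\omega_k]=0$), so that $(B^N,W^N)$ in \eqref{eq:LinearInter} is exactly the linear interpolation of a two-dimensional random walk with \iid steps, which is the canonical Donsker setting.

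First I would compute the single-step covariance matrix. A direct calculation gives $\var(b_k)=\tau$ and $\var(\omega_k)=\tau$ (using $\var(\xi_k^2)=\kappa^2$), together with the crucial off-diagonal term $\cov(b_k,\omega_k)=\frac\tau\kappa\E[\xi_k^3]$. Summing over the $\lfloor tN\rfloor$ steps on $[0,t]$ and recalling $\tau=1/N$, the diagonal entries converge to $t$, while the off-diagonal entry is bounded by $\lfloor tN\rfloor\,\tau^2/\kappa\le t/(\kappa N)\to0$, thanks to the low-skewness bound $|\E[\xi_k^3]|\le\tau$ in \eqref{eq:moments}. Hence the aggregated covariance over $[0,t]$ converges to $t\,I_2$, which simultaneously identifies the limit as a standard two-dimensional Brownian motion and pinpoints the role of \cref{LSC}: it is precisely what forces the two components to decorrelate in the limit, in contrast with \cref{r:beyond}, where a non-vanishing skewness instead yields correlated drivers.

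Next I would establish negligibility of the increments through a Lyapunov condition, which here is cleaner than checking Lindeberg directly. Using $|(b_k,\omega_k)|^3\le C\bigl(\tau^{3/2}|\xi_k|^3+\tau^{3/2}\kappa^{-3}|\xi_k^2-1|^3\bigr)$ together with $\E|\xi_k|^3\le(\E[\xi_k^6])^{1/2}$ and $\E|\xi_k^2-1|^3\le C(\E[\xi_k^6]+1)$, and controlling the right-hand sides by the uniform sixth-moment bound $\sup_k\E[\xi_k^6]=\overline c$ of \eqref{eq:moments}, one obtains $\E[|(b_k,\omega_k)|^3]\le C'\tau^{3/2}$ uniformly in $k$ and $N$. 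Summing yields $\sum_{k=1}^N\E[|(b_k,\omega_k)|^3]\le C'N\tau^{3/2}=C'N^{-1/2}\to0$, so the Lyapunov, hence Lindeberg, condition holds. Invoking the multivariate functional CLT for triangular arrays (e.g.\ \cite{karatzas_brownian_1998}, or its Lindeberg–Feller functional version) then gives convergence in law of $(B^N,W^N)$ to a two-dimensional Brownian motion with covariance $t\,I_2$, that is, to two independent standard Brownian motions.

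I expect the only genuinely delicate point to be the bookkeeping of the triangular-array structure: because the admissible law of $\xi_k$ is permitted to vary with $N$ through the constraint $|\E[\xi_k^3]|\le\tau$, the moment controls (the sixth-moment bound $\overline c$ and the skewness bound) must be exploited to make both the covariance estimate and the Lyapunov estimate hold \emph{uniformly} across rows. Everything else is a routine verification of the hypotheses of the functional CLT; in particular, tightness is automatic once the Lindeberg condition and the covariance convergence are in hand, so no separate Kolmogorov–Chentsov moment argument is needed.
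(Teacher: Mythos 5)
Your proof is correct, and it reaches the same destination by a slightly different technical route. The paper splits the argument in two: it first establishes convergence of the finite-dimensional distributions via the Cram\'er--Wold device, decomposing $\alpha B^N_t+\beta W^N_t+\gamma B^N_s+\delta W^N_s$ into independent block sums over $[0,s]$ and $(s,t]$ and applying the Berry--Esseen inequality to each block (with the low-skewness bound killing the cross term $2\alpha\beta\kappa^{-1}\E[(\xi^{(N)})^3]$ in the normalisation $a_N^2$), and then proves tightness separately by a modification of the Donsker theorem (Theorem 4.20 in \cite{karatzas_brownian_1998}). You instead verify the hypotheses of a single multivariate functional CLT for triangular arrays: convergence of the aggregated covariance to $t\,I_2$ (where, exactly as in the paper, \cref{LSC} is what makes the off-diagonal entry $\lfloor tN\rfloor\,\tau^2\kappa^{-1}$ vanish) together with a Lyapunov third-moment condition controlled by the uniform sixth-moment bound $\overline c$. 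The two arguments use the same three ingredients --- row-wise independence, the skewness bound for decorrelation, and the sixth-moment bound for negligibility --- but your packaging is more economical, since tightness comes for free from the functional CLT rather than requiring a separate step; the paper's version is marginally more self-contained in that the Berry--Esseen computation makes the rate of Gaussian approximation explicit. Your closing remark about the triangular-array bookkeeping (the law of $\xi^{(N)}_k$ varies with $N$, so all moment bounds must be uniform across rows) is exactly the point the paper's uniform constants $\kappa$ and $\overline c$ in \eqref{eq:moments} are there to address.
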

\begin{proof}
  First, we prove the convergence of the finite dimensional laws. Up to an easy generalization, it is enough to show that $(Z^N_s,Z^N_t)$ converges in law to a centred Gaussian random variable with covariance $\begin{pmatrix}s & s\\ s & t\end{pmatrix}$, for all $0\leq s<t\leq 1$.
  
  Consider $0\leq s<t\leq 1$, then there exists $M$ such that $|s-t|>1/M$. For $N>M$ and for any $(\alpha,\beta,\gamma,\delta)\in \R^4$, we have that
  \[
    \begin{aligned}
      \lefteqn{\alpha B^N_t + \beta W^N_t + \gamma B^N_s + \delta W^N_s =}\quad&\\
        &= \sum_{j=0}^{\sN-1}
          \Bigl(\alpha+\gamma)b_j^{(N)}+(\beta+\delta)\omega_j^{(N)}\Bigr)
         + \sum_{j=\sN}^{\tN-1}
           \Bigl(\alpha b_j^{(N)} +\beta \omega_j^{(N)}\Bigr)
           + \alpha(tN-\tN)b_{\tN}^{(N)}\\
        &\quad + \beta(tN-\tN)\omega_{\tN}^{(N)}
               + \gamma(sN-\sN)b_{\sN}^{(N)}
               + \delta(sN-\sN)\omega_{\sN}^{(N)}.
    \end{aligned}
  \]
  Since the last four terms converge trivially to $0$, then we focus on the first two, that we denote respectively by $X^N$ and $Y^N$.

  Let us start with $Y^N$. If
  \[
    \begin{aligned}
      a_N^2
        &:= \E\bigl[\bigl(\alpha \xi_j^{(N)} + \beta \kappa^{-1}((\xi^{(N)}_j)^2-1)\bigr)^2\bigr]
           = \alpha^2 + \beta^2 + 2\alpha\beta\kappa^{-1}\E[(\xi_j^{(N)})^3]\\
      h^{(N)}_j
        &:= \frac{\alpha \xi_j^{(N)}+\beta\kappa^{-1}((\xi^{(N)}_j)^2-1)}
              {a_N\sqrt{\tN -\sN}},
    \end{aligned}
  \]
  then
  \[
    Y^N
      = a_N\sqrt{\tfrac1{N}(\tN -\sN})
        \sum_{j=\sN}^{\tN -1}h^{(N)}_j
  \]
  converges in law to a centred Gaussian random variable of variance $(\alpha^2+\beta^2)(t-s)$, where the convergence is due to Berry-Esseen inequality (\cite{feller_introduction_1991}) and the low skewness condition \eqref{eq:moments}.

  With similar considerations, it follows that $X^N$ converges in law to a centred Gaussian random variable of variance $((\alpha+\gamma)^2+(\beta+\delta)^2)s$. Notice that $(X^N)_N$ and $(Y^N)_N$ are independent, therefore $X^N+Y^N$ converges in law to a centred Gaussian random variable of variance $((\alpha+\gamma)^2+\beta+\delta)^2)s+(\alpha^2+\beta^2)(t-s))$, which implies the required convergence thanks to the Lévy continuity theorem, see \cite{feller_introduction_1991}.

  Since we have shown the convergence of the finite dimensional laws, it is sufficient to prove tightness of the laws of $Z^N$ as measures on $C([0,1],\mathbb{R}^2)$ in order to conclude the proof. This follows from a trivial modification of Theorem 4.20 in \cite{karatzas_brownian_1998}.
\end{proof}

\begin{remark}\label{r:SDE}
  With the above proposition at hands, it is not difficult to prove that if one considers \eqref{eq:System2} without the factor $\sqrt\tau$ in front of the noise terms, namely,
  \begin{equation}\label{eq:System3}
    \begin{cases}
      \theta_{k+1}
        = \theta_k - \psi(v_k)m(\theta_k)b_k,\\
      v_{k+1}
        = v_k - \tau(v_k-m(\theta_k)^2) + \kappa m(\theta_k)^2 \omega_k,
    \end{cases}
  \end{equation}
  then the discrete dynamics converges in law to the solution of the following SDE,
  \begin{equation}\label{eq:sde}
    \begin{cases}
      d\theta_t
        = - \psi(v_t)m(\theta_t)\,dB_t,\\
      dv_t
        = - (v_t - m(\theta_t))^2\,dt
          + \kappa m(\theta_t)^2\,dW_t,
    \end{cases}
  \end{equation}
  where $B$, $W$ are independent standard Brownian motions. More precisely, write \eqref{eq:System3} in integral form,
  \[
    \begin{cases}
      \theta_k
        = \theta_0 - \sum\limits_{j=0}^{k-1}\psi(v_j)m(\theta_j)b_j,\\
      v_k
        = v_0 - \sum\limits_{j=0}^{k-1}(v_j-m(\theta_j)^2)\tau
          + \kappa\sum\limits_{j=0}^{k-1}m(\theta_j)^2\omega_j.
    \end{cases}
  \]
  and define
  \[
    \begin{aligned}
      \theta^N_t
        &= \theta_{\tN}
          - (tN - \tN)\psi(v_\tN)m(\theta_\tN)b_\tN^N,\\
      v^N_t
        &= v_\tN
          - \tfrac1N(tN - \tN)(v_\tN - m(\theta_\tN)^2) + \kappa (tN - \tN)m(\theta_\tN)^2 \omega_\tN^N,
    \end{aligned}
  \]
  then
  \[
    \begin{aligned}
    \theta_t^N
      &= \theta_0 - \int_0^t \Bigl(\sum_{j=0}^{N-1}\psi(v_j)m(\theta_j)\mathbf{1}_{[\frac{j}{N},\frac{j+1}{N})}(s)\Bigr)\,dB_s^N,\\
    v^N_t
      &= v_0 - \int_0^t\Bigl(\sum_{j=0}^{N-1}(v_j-m(\theta_j)^2)\mathbf{1}_{[\frac{j}{N},\frac{j+1}{N})}(s)\Bigr)\,ds
        + \kappa\int_0^t \Bigl(\sum_{j=0}^{N-1}m(\theta_j)^2\mathbf{1}_{[\frac{j}{N},\frac{j+1}{N})}(s)\Bigr)\,dW_s^N,
    \end{aligned}
  \]
  where $B^N$, $W^N$ are defined in \eqref{eq:LinearInter}. The processes $(\theta^N_t,v^N_t)_{t\in[0,1]}$ converge in law to the solutions of \eqref{eq:sde} with initial condition $(\theta_0,\omega_0)$. This result though goes beyond the scopes of the paper and we omit the proof.
\end{remark}

\begin{remark}
  A natural order-1 limiting stochastic dynamics for \eqref{eq:System2}, suggested also by \cref{r:SDE} above, is
  \[
    \begin{cases}
      d\theta_t
        = - \sqrt\tau\psi(v_t)m(\theta_t)\,dB_t,\\
      dv_t
        = - (v_t - m(\theta_t))^2\,dt
          + \kappa\sqrt\tau m(\theta_t)^2\,dW_t,
    \end{cases}
  \]
  with independent Brownian motions $B$, $W$. Using the notion of quadratic variation, as already mentioned in \cref{s:toymodel}, we want to show that order-1 approximations must contain independent Brownian components, and thus our intuition is the correct one. Recall the filtration $\Fc_k=\sigma(\theta_0,v_0,\xi_0,\dots,\xi_{k-1})$, then the increments of co-variation between the two discrete processes $(\theta_k)_k$, $(v_k)_k$ are given by $\E[(\theta_{k+1} - \theta_k)(v_{k+1} - v_k)\mid\Fc_k]$, which turn out to be zero, since $b_k$ is centred and $b_k,\omega_k$ are uncorrelated.
\end{remark}

\subsection{Setting of the problem}\label{as:setting}

We first recall the setting of the problem. Consider a loss function $\hat f = \hat f(x,\theta)$, a labelled dataset $(x_j)_{j=1}^M$, and a family $(\gamma_{k,i})_{k\in \N,1\leq i\leq B}$ of \iid random variables with uniform law over $\{1,\dots,M\}$, where $B$ is batch size. If $f_\gamma(\theta) = \hat f(x_\gamma,\theta)$, the objective function to be optimised is $f(\theta) = \E[f_{\gamma_{1,1}}(\theta)]$. In particular $\nabla f_{\gamma_{k,i}}(\theta)=\nabla_{\theta}\hat f (x_{\gamma_{k,i}},\theta)$, and $\nabla f(\theta)=\E[\nabla f_{\gamma_{1,1}}(\theta)]$. Batch sampling defines an estimator for the gradient of the objective function, that at each step $k$, with step-size $\tau$, of any stochastic optimization algorithm can be decomposed as
\[
  g_k(\theta)
    = \nabla f(\theta)
      + \frac{1}{B}\sum_{i=1}^B(\nabla f_{\gamma_{k,i}}(\theta)-\nabla f(\theta))
    =: \nabla f(\theta)+\frac{z_k}{\sqrt{\tau B}},
\]
where the last formula defines $z_k$. The sequence $(z_k)_k$ is a family of independent centred random variables with covariance $\tau\Sigma(\theta)$, given by
\[
  \Sigma(\theta)
    := \E\bigl[(\nabla f_{\gamma_{k,1}}(\theta)-\nabla f(\theta))(\nabla f_{\gamma_{k,1}}(\theta)-\nabla f(\theta))^T\bigr].
\]
In our work a relevant role will be played by the covariance matrix $M(\theta)$ of $\hat{z_k}=(\tau^{-1/2} z_k)^{\odot2}$, namely
\[
  M(\theta)
    =\E\bigl[(\hat{z}_k-\Sigma_d(\theta))(\hat{z}_k-\Sigma_d(\theta))^T\bigr].   
\]
\begin{remark}
  The present setting describes SGD noise as label noise. We remark that all our result would hold equally in the case of online noise, and in general as long as the random gradient $g_k$ has the above structure, with iid fluctuations around the true gradient.
\end{remark}

We assume preliminary that $\nabla f$ is Lipschitz-continuous, and that $\Sigma^{1/2}$ is bounded and Lipschitz-continuous. Further more restrictive assumptions will be considered below in the statements of our theorems.

In order to manage the degeneracies of the problem, namely the non-differentiability of the square root of the second momentum, see \cref{r:Remark on phi}, or the non-positivity of the order-2 SDEs, see \cref{r:negative}, we introduce a regularization of the preconditioning matrix. Let $\phi:\R\to\R$ be a smooth function such that 
\begin{itemize}
  \item $\phi(x)\geq c_1>0$ for all $x\in\R$;
  \item $\phi(x)=x$ for all $x\geq c_2>c_1$.
\end{itemize}
We will replace the preconditioner $\sqrt{v}+\epsilon$ with $\sqrt{\phi(v)}+\epsilon$ in both update rules. Notice that, as long as the initial condition $v_0$ of the second momentum is (component-wise) positive, if the algorithm is run for $N$ steps, and if $c_2$ is chosen so that $c_2\leq\beta^N\min (v_0)_i$ (replace $\beta$ with $\beta_2$ for Adam), then $v_k\geq c_2$ (component-wise), and thus the stochastic algorithms run unchanged, regardless of the function $\phi$. A heuristic explanation is that we change the problem in scales that are smaller than the scale of the learning rate, so that the discrete dynamics does not see it. In contrast, the SDEs originate from an asymptotic limit in small learning rate, and thus ``see'' the small scales.

With this in mind, we re-formulate the discrete algorithms in the two scaling, and \emph{afterwards} we introduce the regularization. We remark that, as long as the initial value of the second momentum is positive, the order-1 continuous dynamics stay unchanged as well, and the overall effect of the regularization is purely at a technical level (see \cref{r:is_solution}). In contrast, the order-2 dynamics do not stay positive, in general (see \cref{r:negative} and \cref{r:is_solution2} for further explanations)

In the \regimezero regime, with $\tau=\eta$, we use the values \eqref{eq:values_zero} and use the regularization, to obtain,
\begin{equation}\label{eq:RMSprop_reg_zero}
  \begin{cases}
    \theta_{k+1}
      = \theta_{k} - \tau g_k \odot P(u_k),\\
    u_{k+1}
      = u_k - \lambda_0\tau u_k + \lambda_0\tau g_k^{\odot 2},\\
    g_k
      = \nabla f(\theta_k) + \frac\sigma{\sqrt\tau}z_k,
      \end{cases}
\end{equation}
and
\begin{equation}\label{eq:Adam_reg_zero}
  \begin{cases}
    \theta_{k+1}
      = \theta_{k} - \tau\bar\Gamma_k m_{k+1} \odot \bar Q_k(u_k),\\
    m_{k+1}
      = m_k - \lambda_1\tau m_k + \lambda_1\tau g_k,\\
    u_{k+1}
      = u_k - \lambda_2\tau u_k + \lambda_2\tau g_k^{\odot 2},\\
    g_k
      = \nabla f(\theta_k) + \frac\sigma{\sqrt\tau}z_k.
  \end{cases}
\end{equation}
where
\begin{equation}\label{eq:defPbarGamma}
  \begin{aligned}
  P(u)
    &= (\sqrt{\phi(u)} + \epsilon)^{-1}, &
  \bar Q_k(u)
    &= (\sqrt{\phi(u)} + \epsilon\sqrt{\ogamma_2(k)})^{-1},\\
  \ogamma_i(k)
    &= \bigl(1 - (1 - \lambda_i\tau)^k\bigr),\quad i=1,2,\quad &
  \bar\Gamma_k
    &= \frac{\sqrt{\ogamma_2(k)}}{\ogamma_1(k+1)}.
  \end{aligned}
\end{equation}
In the \regimeuno, with $\tau=\eta^2$, we use the values \eqref{eq:values_uno} and use the regularization, to obtain,
\begin{equation}\label{eq:RMSprop_reg_uno}
  \begin{cases}
    \theta_{k+1}
      = \theta_{k} - \tau g_k \odot P(u_k),\\
    u_{k+1}
      = u_k - \lambda_0\tau u_k + \lambda_0\tau^2 g_k^{\odot 2},\\
    g_k
      = \nabla f(\theta_k) + \frac\sigma\tau z_k,
      \end{cases}
\end{equation}
and
\begin{equation}\label{eq:Adam_reg_uno}
  \begin{cases}
    \theta_{k+1}
      = \theta_{k} - \tau\bar\Gamma_k m_{k+1} \odot \bar Q_k(u_k),\\
    m_{k+1}
      = m_k - \lambda_1\tau m_k + \lambda_1\tau g_k,\\
    u_{k+1}
      = u_k - \lambda_2\tau u_k + \lambda_2\tau^2 g_k^{\odot 2},\\
    g_k
      = \nabla f(\theta_k) + \frac\sigma\tau z_k.
  \end{cases}
\end{equation}

\subsection{Scaling rules}\label{as:scaling}

First, we recast the order-1 continuous dynamics, when the regularization is taken into account. In the balistic regime, the order-1 SDEs, corresponding to \eqref{eq:sde1_rmsprop_zero} and \eqref{eq:sde1_adam_zero}, with the regularization included, are, respectively,
\begin{equation}\label{eq:sde1_rmsprop_reg_zero}
  \begin{cases}
    d\theta_t
      = - \nabla f(\theta_t) \odot P(u_t)\,dt,\\
    du_t
      = \lambda_0(\nabla f(\theta_t)^{\odot 2} + \sigma^2\Sigma_d(\theta_t) - u_t)\,dt.
  \end{cases}
\end{equation}
and
\begin{equation}\label{eq:sde1_adam_reg_zero}
  \begin{cases}
    d\theta_t
      = - \Gamma_t m_t\odot Q_t(u_t)\,dt,\\
    dm_t
      = \lambda_1 (\nabla f(\theta_t) - m_t)\,dt,\\
    du_t
      = \lambda_2(\sigma^2\Sigma_d(\theta_t) + \nabla f(\theta_t)^{\odot 2} - u_t)\,dt,
  \end{cases}
\end{equation}
where
\begin{equation}\label{eq:defQGamma}
  Q_t(u)
    = (\sqrt{\phi(u)} + \epsilon\sqrt{\gamma_2(t)})^{-1},\qquad
  \Gamma_t
    = \frac{\sqrt{\gamma_2(t)}}{\gamma_1(t)},
\end{equation}
and $\gamma_1$, $\gamma_2$ are given as in \eqref{eq:gammas}. We recall that the diagonal $\diag(\Sigma(\theta))$ of $\Sigma(\theta)$ is denoted by $\Sigma_d(\theta)$.

In the \regimeuno regime, the order-1 SDEs, corresponding to \eqref{eq:sde1_rmsprop_uno} and \eqref{eq:sde1_adam_uno}, with the regularization included, are, respectively,
\begin{equation}\label{eq:sde1_rmsprop_reg_uno}
  \begin{cases}
    d\theta_t
      = - \bigl(\nabla f(\theta_t)\,dt + \sigma\Sigma(\theta_t)^{\frac12}\,dW_t\bigr)\odot P(u_t),\\
    du_t
      = \lambda_0(\sigma^2\Sigma_d(\theta_t) - u_t)\,dt.
  \end{cases}
\end{equation}
and
\begin{equation}\label{eq:sde1_adam_reg_uno}
  \begin{cases}
    d\theta_t
      = - \Gamma_t m_t\odot Q_t(u_t)\,dt,\\
    dm_t
      = \lambda_1 (\nabla f(\theta_t) - m_t)\,dt
        + \lambda_1\sigma\Sigma(\theta_t)^{\frac12}\,dW_t,\\
    du_t
      = \lambda_2(\sigma^2\Sigma_d(\theta_t) - u_t)\,dt.
  \end{cases}
\end{equation}

We notice that, while the function $\bar\Gamma$ defined in \eqref{eq:defPbarGamma} and appearing in the Adam discrete dynamics \eqref{eq:Adam_reg_zero} and \eqref{eq:Adam_reg_uno} makes perfectly sense for every $k\geq0$ (although divergent in the asymptotic limit $\tau\to0$), the same unfortunately does not hold for its continuous counterpart $\Gamma$ defined in \eqref{eq:defQGamma}, which is singular at $t=0$. We shall discuss this problem in \cref{r:on_gamma}.

To prove that the above SDEs are order-1 weak approximations of their respective discrete dynamics, we rely on the framework introduced in \cite{li_stochastic_2017,li_stochastic_2019}. In short, the method is based on the following ideas (see \cite[Theorem 3]{li_stochastic_2019}, and \cite[Theorem B.2]{malladi_sdes_2022} for the order-1 non-autonomous version),
\begin{enumerate}
  \item existence and uniqueness, and uniform (in time) control of a sufficient number of moments of the solution of the SDE with respect to the initial condition,
  \item  differentiability of the solution of the SDE with respect to the initial condition, with as many derivatives as needed, with uniform control of moments: this ensures differentiability of the transition semigroup,
  \item uniform control of moments, of sufficiently high order, of the discrete dynamics,
  \item smallness of the short-time increments of the discrete dynamics, within some order,
  \item closeness of the short-time increments of the discrete and continuous dynamics, within some order.
\end{enumerate}
The first two of the above items can be obtained in full generality under suitable assumptions of regularity on the coefficients of the SDE. These are classical results. In the framework of the SGD approximation, see \cite[Theorems 18, 20]{li_stochastic_2019}, where general results for non-autonomous SDE are obtained, and that cover our cases as well. So, to prove \cref{t:scaling}, we shall need to prove control of moments within order $\tau^2$. To this end, we reformulate \cref{t:scaling} in the more precise setting outlined above.

\begin{theorem}\label{at:scaling}
  Fix positive constants $\lambda_0$, $\lambda_1$, $\lambda_2$, $\epsilon$, $\sigma$, $T>0$, and $\tau\in(0,1\wedge T)$, and set $N=\lfloor T/\tau\rfloor$. Assume that $\nabla f, \Sigma^{1/2}\in\Gc^4$, that $\nabla f$, $\Sigma^{1/2}$ are Lipschitz-continuous, and that $\Sigma^{1/2}$ is bounded. Assume also that \cref{BMC} hold.
  \smallskip

  \emph{RMSprop case}. Assume moreover that $\tau<\frac1{2\lambda_0}$. Let $(y_k)_{k\geq0}=(\theta_k,u_k)_{k\geq0}$ be solution of \eqref{eq:RMSprop_reg_zero} and $(Y_t)_{t\geq0}=(\theta_t,u_t)_{t\geq0}$ solution of \eqref{eq:sde1_rmsprop_reg_zero}, such that $Y_0=y_0$. Then $(Y_t)_{t\in[0,T]}$ is a order-1 weak approximation of $(y_k)_{0\leq k\leq N}$, namely for every $g\in\Gc^5$,
  \[
    \max_{0\leq k\leq N}\big|\E[g(y_k)] - \E[g(Y_{k\tau})]\big|
      \leq c\tau,
  \]
  where $c>0$ is a number independent of $\tau$.

  If additionally \cref{LSC} holds with $p=2$, then the same conclusion holds if $(y_k)_{k\geq0}$ is solution of \eqref{eq:RMSprop_reg_uno} and $(Y_t)_{t\geq0}$ is solution of \eqref{eq:sde1_rmsprop_reg_uno}.
  \smallskip

  \emph{Adam case}. Assume moreover that $\tau<\frac1{2\lambda_2}$, fix an arbitrary $t_0\in(0,T)$ and set $k_0=\lfloor t_0/\tau\rfloor$. Let $(y_k)_{k\geq k_0}=(\theta_k,m_k,u_k)_{k\geq k_0}$ be solution of \eqref{eq:Adam_reg_zero} and $(Y_t)_{t\geq t_0}=(\theta_t,m_t,u_t)_{t\geq t_0}$ solution of \eqref{eq:sde1_adam_reg_zero} such that $Y_{t_0}=y_{k_0}$. Then $(Y_t)_{t\in[t_0,T]}$ is a order-1 weak approximation of $(y_k)_{k_0\leq k\leq N}$, namely for every $g\in\Gc^5$,
  \[
    \max_{k_0\leq k\leq N}\big|\E[g(y_k)] - \E[g(Y_{k\tau})]\big|
      \leq c\tau,
  \]
  where $c>0$ is a number independent of $\tau$.
  
  If additionally \cref{LSC} holds with $p=2$, then the same conclusion holds if $(y_k)_{k\geq k_0}$ is solution of \eqref{eq:Adam_reg_uno} and $(Y_t)_{t\geq t_0}$ is solution of \eqref{eq:sde1_adam_reg_uno}.
\end{theorem}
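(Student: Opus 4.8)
The plan is to apply the one-step error framework for stochastic modified equations of \cite[Theorems 18 and 20]{li_stochastic_2019}, in the non-autonomous form of \cite[Theorem B.2]{malladi_sdes_2022}. This meta-theorem reduces the asserted order-1 estimate to three ingredients: (i) well-posedness of the limiting SDE together with uniform-in-time moment bounds and differentiability, with polynomially growing derivatives, of the transition semigroup $\Phi(x,t)=\E[g(Y^x_t)]$; (ii) uniform-in-$\tau$ moment bounds $\sup_{k\le N}\E[|y_k|^{2m}]\le C_m(1+|y_0|^{2m})$ for the discrete schemes \eqref{eq:RMSprop_reg_zero}--\eqref{eq:Adam_reg_uno}; and (iii) a one-step comparison in which, starting both dynamics from a common point $x$ at time $k\tau$, the conditional first and second moments of the increments $\Delta=y_{k+1}-x$ and $\tilde\Delta=Y^x_{(k+1)\tau}-x$ agree up to $O(\tau^2)$, while all third conditional moments of each increment are $O(\tau^2)$ (the fourth absolute moments being automatically $O(\tau^2)$ from the $O(\sqrt\tau)$ noise scale). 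Granting (i)--(iii), the usual telescoping of $\E[g(y_N)]-\E[g(Y_{N\tau})]$ along the semigroup yields the bound $c\tau$.

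For (i), the regularization $\phi$ is the enabling device: since $\phi\ge c_1>0$, the preconditioners $P(u)=(\sqrt{\phi(u)}+\epsilon)^{-1}$ and $Q_t(u)$ are smooth and bounded with bounded derivatives, so the coefficients of \eqref{eq:sde1_rmsprop_reg_zero}--\eqref{eq:sde1_adam_reg_uno} inherit the $\Gc^4$ regularity, Lipschitz continuity and boundedness of $\nabla f$ and $\Sigma^{1/2}$. Well-posedness, moment bounds and smooth dependence on the initial datum are then the classical content of \cite[Theorems 18, 20]{li_stochastic_2019}; for Adam the only subtlety is that $\Gamma_t$ is singular at $t=0$, which is precisely why the statement is confined to $[t_0,T]$ with $t_0>0$, where $\gamma_1$ is bounded below and all constants are allowed to depend on $t_0$. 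For (ii) I would exploit the contractive structure of the second-momentum recursion: writing $u_{k+1}=(1-\lambda_0\tau)u_k+\lambda_0\tau g_k^{\odot2}$ (and the analogue with $\tau^2$ forcing in the \regimeuno regime), the hypothesis $\tau<1/(2\lambda_0)$ keeps $0<1-\lambda_0\tau<1$, so the homogeneous part is a strict contraction; the forcing, after the cancellation $\lambda_0\tau\cdot\sigma^2\tau^{-1}z_k^{\odot2}=\lambda_0\sigma^2 z_k^{\odot2}$, has all moments controlled uniformly in $\tau$ by \cref{BMC}. Boundedness of $P$ then controls the $\theta$-recursion, and for Adam the boundedness of $m_k$ together with $\bar\Gamma_k\le C(t_0)$ for $k\ge k_0$ closes the estimate via a discrete Gr\"onwall argument.

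The core of the argument is (iii). I would Taylor-expand the discrete increment and take conditional expectations using $\E[z_k]=0$, $\cov(z_k)=\tau\Sigma(\theta_k)$ and $\E[z_k^{\odot2}]=\tau\Sigma_d(\theta_k)$, and compare against the It\=o--Taylor expansion of the SDE increment. The first conditional moments reproduce exactly the drifts of the target SDEs up to $O(\tau^2)$; in particular the $g_k^{\odot2}$ term yields $\nabla f(\theta_k)^{\odot2}+\sigma^2\Sigma_d(\theta_k)$ in the \regimezero $u$-drift and only $\sigma^2\Sigma_d(\theta_k)$ in the \regimeuno one, the $\nabla f^{\odot2}$ piece being pushed to $O(\tau^2)$ by the extra factor $\tau$. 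For the second moments the decisive point is the scale of the sampling noise: in the \regimezero regime the noise enters the $\theta$- and $u$-increments with an additional $\sqrt\tau$, so the corresponding covariances are $O(\tau^2)$ and match the noiseless order-1 SDE within the admissible error; in the \regimeuno regime the $\theta$-noise $-\sigma z_k\odot P$ has covariance $\tau\sigma^2\diag(P)\Sigma(\theta_k)\diag(P)$, which reproduces the quadratic variation of the $\sigma\Sigma^{1/2}\,dW$ term of \eqref{eq:sde1_rmsprop_reg_uno} (and similarly for the $m$-noise in \eqref{eq:sde1_adam_reg_uno}). Finally, for the third moments: in the \regimezero regime every increment carries at least one $\sqrt\tau$ prefactor on each power of $z_k$, so all third moments are $O(\tau^3)$, which is why \emph{no} skewness hypothesis is needed there; in the \regimeuno regime the $\theta$-increment contains $z_k$ at order one, so its third moment and the mixed $\theta$--$u$ second moment feed on $\E[(z_k)_{i}(z_k)_{j}(z_k)_{l}]$, and it is exactly here that \cref{LSC} with $p=2$ forces these quantities to be $O(\tau^2)$, matching the Gaussian-plus-drift contributions on the continuous side. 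Even powers of $z_k$, entering the $u$-increment through $z_k^{\odot2}$, remain controlled by \cref{BMC} alone.

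I expect the main obstacle to be the bookkeeping of the competing powers of $\tau$ once the sampling noise is rescaled by $\tau^{-1/2}$ (\regimezero) or $\tau^{-1}$ (\regimeuno): each regime redistributes the stochasticity differently among the $\theta$-, $m$- and $u$-increments, and one must check in every case that the discrepancy with the order-1 SDE lands at $O(\tau^2)$, the \regimeuno estimates being sharp and hence dictating the precise form of \cref{LSC} required. A secondary difficulty is the time-inhomogeneity of the Adam semigroup together with the $t=0$ singularity of $\Gamma_t$, which forces the use of the non-autonomous meta-theorem and the propagation of the $t_0$-dependence through all the moment and derivative estimates, explaining why the Adam bounds hold only on $[t_0,T]$. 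By contrast, the regularization $\phi$ defuses what would otherwise be the most serious difficulty, the non-smoothness of $\sqrt{u}$ at the origin, by rendering all coefficients globally smooth without altering the discrete trajectories.
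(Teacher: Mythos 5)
Your proposal is correct and follows essentially the same route as the paper: reduction to the non-autonomous approximation meta-theorem of \cite{li_stochastic_2019} and \cite{malladi_sdes_2022}, verification of moment bounds for the regularised discrete schemes, and a one-step matching of conditional first and second (and, in the \regimeuno regime, third) moments of the increments within $O(\tau^2)$, with \cref{LSC} ($p=2$) entering exactly where you place it and the restriction to $[t_0,T]$ for Adam motivated by the singularity of $\Gamma_t$ at $t=0$. The only detail the paper makes explicit that you subsume under ``bookkeeping'' is that matching the Adam drifts to order $\tau^2$ requires $\ogamma_i(k)-\gamma_i(k\tau)\sim\tau^2$, which is why $\gamma_i$ is defined by the truncated-logarithm expression in \eqref{eq:gammas} rather than the naive exponential.
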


\begin{remark}\label{r:is_solution}
  We have already observed that, as long as the initial state of the second momentum is component-wise positive, and if $\tau$ is small enough that $1-\lambda_0\tau\in(0,1)$, which is ensured by \cref{at:scaling} above, then
  \[
    \min_{0\leq k\leq N}\min_i (u_k)_i
      \geq (1 - \lambda_0\tau)^N\min_i (u_0)_i
      >0,
  \]
  so, by suitably choosing the constant $c_2$ appearing in the definition of the regularising function $\phi$, the solutions of \eqref{eq:RMSprop_reg_zero} and \eqref{eq:RMSprop_reg_uno} are also solutions of the same dynamics but with $\phi\equiv1$.

  If we turn to the continuous dynamics \eqref{eq:sde1_rmsprop_reg_zero} and \eqref{eq:sde1_rmsprop_reg_uno}, we see that in both cases, $du_t\geq -u_t$, therefore
  \[
    \min_{t\in[0,T]}\min_i (u_t)_i
      \geq e^{-\lambda_0 T}\min_i (u_0)_i
      >0,
  \]
  therefore, by possibly choosing a smaller value for the constant $c_2$ in the definition of the regularising function $\phi$, the solutions \eqref{eq:sde1_rmsprop_reg_zero}, respectively of \eqref{eq:sde1_rmsprop_reg_uno}, are also solutions of \eqref{eq:sde1_rmsprop_zero}, respectively of \eqref{eq:sde1_rmsprop_uno}. In conclusion, as long as the initial state of the second momentum is component-wise positive, and as long as $\phi$ is calibrated carefully, there is no effect in the introduction of the regularisation and our results deal with the original dynamics. Since calibrations depend on initial values and parameters, it can be actually done before running the algorithms, and the regularisation is only a technical artifact to allow for rigorous proofs. We shall see (\cref{r:is_solution2}) that this is not the case for order-2 continuous dynamics.

  The same comments equally apply to Adam discrete dynamics \eqref{eq:Adam_reg_zero} and \eqref{eq:Adam_reg_uno}, by just replacing $\lambda_0$ with $\lambda_2$
\end{remark}

\begin{remark}\label{r:on_gamma}
  \cref{at:scaling} above, in the Adam version, proves order-1 matching between the discrete and the continuous dynamics. The technical reason behind this requirement is that the function $\Gamma$ (defined in \eqref{eq:defQGamma}), appearing in both continuous order-1 SDEs for Adam, is singular at $t=0$. Likewise, its discrete counterpart $\bar\Gamma$ (defined in \eqref{eq:defPbarGamma}), appearing in both discrete Adam dynamics, is asymptotically singular in the limit $\tau\downarrow0$. We notice that while a possible alternative solution could have been to tame the singularity of $\bar\Gamma$ and $\Gamma$, coherently with our approach for the non-differentiability of the preconditioning matrix, we have kept this approach to be consistent with the findings of \cite{malladi_sdes_2022}.

  We would like to discuss an interpretation of this problem. Such functions originate from the normalization of the first and second momenta of the gradient, so that they are unbiased estimators of the true quantities. In the regime when the (original) parameters $\beta$, $\beta_1$, $\beta_2$ approach $1$, in the first iterations the expected momentum are almost zero, and normalization actually amplifies fluctuations. Evaluating the dynamics at some positive time $t_0>0$ is thus equivalent to considering the problem out of this transient regime. An equivalent effect can be obtained by introducing a lower bound (inducing a macroscopic scale) to the quantities $\Gamma$, $\bar\Gamma$.
\end{remark}

Before reporting the proof of \cref{at:scaling}, let us introduce some notations which will be often used in the rest of the paper: given two functions $f(y)$ and $g(y)$, then 
\[|f(y)|\lesssim\tau ^{\alpha}, f(y)=g(y)+O(\tau^{\alpha})\]
means that there exist $K_1(y),K_2(y)\in \Gc$ s.t.
\[|f(y)|\le K_1(y)\tau^{\alpha}, |f(y)-g(y)|\le K_2(y)\tau^{\alpha}.\]

\begin{proof}[Proof of \cref{at:scaling}]
  Define, as in the statement of the theorem, $y_k=(\theta_k,u_k)$ and $Y_t=(\theta_t,u_t)$ for RMSprop dynamics, and $y_k=(\theta_k,m_k,u_k)$ and $Y_t=(\theta_t,m_t,u_t)$ for Adam dynamics, started at the same initial condition. For $k\geq0$ define $\Delta y_k = y_{k+1}-y_k$, and for $t\geq0$ define $\Delta_\tau Y_t = Y_{t+\tau}-Y_t$.
  by \cref{at:approx}, we need to
  \begin{itemize}
    \item identify the values of $\Delta_k y$ and $\Delta_\tau Y_t$ (or, more precisely, of $\E[\Delta_k y\mid\Fc_k])$, where, as before, $\Fc_k=\sigma(y_0,z_0,\dots,z_{k-1})$, and of $\E[\Delta_\tau Y_t\mid\Fc_t]$, where $(\Fc_t)_{t\geq0}$ is the natural filtration of $(Y_t)_{t\geq0}$),
    \item prove that their second moments are small within order $\tau^2$,
    \item prove that their difference is small within order $\tau^2$.
  \end{itemize}
  We first identify the values of the increments. Recall that, given a SDE
  \[
    dY_t
      = b(t,Y_t)\,dt + \sigma(t,Y_t)\,dW_t,
  \]
  adapted to the filtration $(\Fc_t)_{t\geq0}$, and with $b$, $\sigma\in\Gc^2$, by It\=o's formula,
  \[
    |\E[\Delta_\tau Y_t\mid\Fc_t] - \tau b(t,Y_t)|
      \lesssim\tau^2,
  \]
  and
  \[
    |\E[(\Delta_\tau Y_y)(\Delta_\tau Y_y)^T\mid\Fc_t]|
      \lesssim\tau^2.
  \]
  Therefore, for \eqref{eq:sde1_rmsprop_reg_zero} and \eqref{eq:sde1_adam_reg_zero}, $\E[\Delta_\tau Y_t\mid\Fc_t] = \Delta_\tau Y_t$ is equal to
  \begin{equation}\label{eq:sde1_Delta_zero}
    \tau\begin{pmatrix}
      - \nabla f(\theta_t) \odot P(u_t)\\
      \lambda_0(\nabla f(\theta_t)^{\odot 2} + \sigma^2\Sigma_d(\theta_t) - u_t)
    \end{pmatrix},\qquad
    \tau\begin{pmatrix}
      - \Gamma_t m_t\odot Q_t(u_t)\\
      \lambda_1 (\nabla f(\theta_t) - m_t)\\
      \lambda_2(\nabla f(\theta_t)^{\odot 2} + \sigma^2\Sigma_d(\theta_t) - u_t)
    \end{pmatrix},
  \end{equation}
  while for \eqref{eq:sde1_rmsprop_reg_uno}, \eqref{eq:sde1_adam_reg_uno}, $\E[\Delta_\tau Y_t\mid\Fc_t]$ is equal respectively to
  \[
    \tau\begin{pmatrix}
      - \nabla f(\theta_t)\odot P(u_t)\\
      \lambda_0(\sigma^2\Sigma_d(\theta_t) - u_t)
    \end{pmatrix},\qquad
    \tau\begin{pmatrix}
      - \Gamma_t m_t\odot Q_t(u_t)\\
      \lambda_1 (\nabla f(\theta_t) - m_t)\\
      \lambda_2(\sigma^2\Sigma_d(\theta_t) - u_t)
    \end{pmatrix}.
  \]
  In the \regimezero regime, since $\E[m_{k+1}\mid\Fc_k]=m_k + \lambda_1\tau(\nabla f(\theta_k) - m_k)$ and
  \[
    \E[g_k\mid\Fc_k]
      = \nabla f(\theta_k),\qquad
    \E[g_k^{\odot 2}\mid\Fc_k]
      = \nabla f(\theta_k)^{\odot 2} + \sigma^2\Sigma_d(\theta_k),
  \]
  the increments $\E[\Delta_k y\mid\Fc_k])$ for \eqref{eq:RMSprop_reg_zero} and \eqref{eq:Adam_reg_zero} are, respectively,
    \begin{equation}\label{eq:sgd_Delta_zero}
    \tau\begin{pmatrix}
      - \nabla f(\theta_k) \odot P(u_k)\\
      \lambda_0(\nabla f(\theta_k)^{\odot 2} + \sigma^2\Sigma_d(\theta_k) - u_k)
    \end{pmatrix},\qquad
    \tau\begin{pmatrix}
      - \bar\Gamma_k\bigl(m_k + \lambda_1\tau(\nabla f(\theta_k) - m_k)\bigr) \odot \bar Q_k(u_k)\\
      \lambda_1(\nabla f(\theta_k) - m_k)\\
      \lambda_2(\nabla f(\theta_k)^{\odot 2} + \sigma^2\Sigma_d(\theta_k) - u_k)
    \end{pmatrix}.
  \end{equation}
  Likewise, in the \regimeuno regime,
  \[
    \E[g_k^{\odot 2}\mid\Fc_k]
      = \nabla f(\theta_k)^{\odot 2} + \sigma^2\tau^{-1}\Sigma_d(\theta_k),
  \]
  therefore, the increments $\E[\Delta_k y\mid\Fc_k])$ for \eqref{eq:RMSprop_reg_uno}, \eqref{eq:Adam_reg_uno} are, respectively,
  \[
    \tau\begin{pmatrix}
      \nabla f(\theta_k)\odot P(u_k)\\
      \lambda_0(\tau\nabla f(\theta_k)^{\odot 2} + \sigma^2\Sigma_d(\theta_k) - u_k)
    \end{pmatrix},\qquad
    \tau\begin{pmatrix}
      - \Gamma_k\bigl(m_k + \lambda_1\tau(\nabla f(\theta_k) - m_k)\bigr) \odot \bar Q_k(u_k)\\
      \lambda_1(\nabla f(\theta_k) - m_k)\\
      \lambda_2(\tau\nabla f(\theta_k)^{\odot 2} + \sigma^2\Sigma_d(\theta_k) - u_k)
    \end{pmatrix}.
  \]
  Smallness of the second moments within order $\tau$ follows from regularity of coefficients and \cref{BMC}, as in \cite[Lemma C.9]{malladi_sdes_2022}

  Finally, to conclude, it is sufficient to estimate
  \[
    \E[\Delta_\tau Y_{k\tau}] - \E[\Delta_k y]
  \]
  conditional to $Y_{k\tau}=y_k$, within order $\tau^2$. Let us consider the case of Adam in the \regimezero regime, the other estimates follow with similar arguments. By comparing the second formula in \eqref{eq:sde1_Delta_zero} and \eqref{eq:sgd_Delta_zero}, we see that we need to compare only the first component,
  \[
    \E[\Delta_\tau \theta_{k\tau}] - \E[\Delta_k \theta]
      = \tau\bigl(\Gamma_{k\tau}m\odot Q_{k\tau}(u) - \bar\Gamma_k m\odot\bar Q_k(u)\bigr) - \lambda_1\tau^2\bar\Gamma_k(\nabla f(\theta)-m),
  \]
  since the other components are equal to zero, and we can include in the error the term with pre-factor $\tau^2$. By our choice of $\gamma_i$ (see \eqref{eq:gammas}), we have that $\bar\gamma_i(k) - \gamma_i(k\tau) \sim\tau^2$, therefore $\E[\Delta_\tau \theta_{k\tau}] - \E[\Delta_k \theta]\sim\tau^2$. This computation is elementary in the case of RMSprop, and crucially relies on the fact that $k\geq k_0$ and $\geq t_0$ in the case of Adam, see \cref{r:on_gamma}.
\end{proof}

\subsection{Effective order-2 continuous equations for adaptive algorithms}\label{as:order2}

We turn to order-2 approximations of the discrete dynamics. We start by writing the SDEs modelling the discrete dynamics, taking into account the regularization we have introduced. For simplicity, we shall take $\lambda_0=\lambda_1=\lambda_2=1$ in \eqref{eq:values_zero} and \eqref{eq:values_uno}, as the exact value of the constants does not play a significant role.

We first consider the \regimezero regime. The order-2 SDEs corresponding to \eqref{eq:sde2_rmsprop_zero} and \eqref{eq:sde2_adam_zero} are, respectively,
\begin{equation}\label{eq:sde2_rmsprop_reg_zero}
  \begin{cases}
    \begin{aligned}
      d\theta_t
        &= - P(u_t)\odot \nabla f(\theta_t)\,dt\\
        &\quad -\frac12\tau\Bigl(P(u_t)^{\otimes 2}\nabla^2 f(\theta_t)\nabla f(\theta_t)
           + \frac12P(u_t)^{\otimes 2}\frac{\phi'(u_t)}{\sqrt{\phi(u_t)}}\odot(J(\theta_t)-u_t)\odot \nabla f(\theta_t)\Bigr)\,dt\\
        &\quad + \sqrt{\tau}\bigl(\sigma\diag(P(u_t))\Sigma(\theta_t)^{\frac12}\,dW_t\bigr),
    \end{aligned}\\
    \begin{aligned}
      du_t
        &= (J(\theta_t) - u_t)\,dt\\
        &\quad + \tfrac12\tau\Bigl((J(\theta_t) - u_t)
         + \nabla J(\theta_t)(\nabla f(\theta_t)\odot P(u_t))\Bigr)\\
         &\quad +\sqrt{\tau}\bigl(-2\sigma\diag(\nabla f(\theta_t))\Sigma(\theta_t)^{\frac12}\,dW_t
         + \sigma^2 M(\theta_t)^{\frac12}\,dB_t\bigr),
    \end{aligned}
  \end{cases}
\end{equation}
and
\begin{equation}\label{eq:sde2_adam_reg_zero}
  \begin{cases}
    \begin{aligned}
      d\theta_t
        & = \Gamma_t\Bigl(- m_t\odot Q_t(u_t)
          - \tau \frac14 Q_t(u_t)^{\otimes2}\frac{\phi'(u_t)}{\sqrt{\phi(u_t)}} \odot m_t\odot (J(\theta_t) - u_t)\Bigl)\,dt,\\
        &\quad + \frac12\tau\Bigl(\Gamma_t(m_t - \nabla f(\theta_t))\odot Q_t(u_t)
            + m_t\odot \partial_t(\Gamma_t Q_t(\cdot))(u_t)\Bigr)\,dt\\ 
    \end{aligned}\\
    \begin{aligned}
      dm_t
        &= (\nabla f(\theta_t)-m_t)\,dt
          + \frac12\tau\Bigl((\nabla f(\theta_t)-m_t)
          + \Gamma_t\bigl(\nabla^2f(\theta_t)(m_t\odot Q_t(u_t))\bigr)\Bigr)\,dt\\
        &\quad + \sqrt\tau\sigma\Sigma(\theta_t)^{\frac12}\,dW_t,
    \end{aligned}\\
    \begin{aligned}
      du_t
        & = (J(\theta_t) - u_t)\,dt
          + \frac12\tau\Bigl((J(\theta_t)-u_t)
          + \nabla J(\theta_t)(m_t\odot \Gamma_tQ_t)\Bigr)\,dt\\
        & \quad +\sqrt\tau\Bigl(- 2\sigma\diag(\nabla f(\theta_t))\Sigma(\theta_t)^{\frac12}\,dW_t
          + \sigma^2 M(\theta_t)^{\frac12}\,dB_t\Bigr),
    \end{aligned}
  \end{cases}
\end{equation}
where $M$ is the covariance of the second moment of the noise, defined in \eqref{eq:M}, $P$ and is defined in \eqref{eq:defPbarGamma}, $J(\theta)=\nabla f(\theta)^{\odot2} + \sigma^2\diag\Sigma(\theta)$, $\Gamma$ and $Q_t$ are defined in \eqref{eq:defQGamma}, and $(B_t)_{t\geq0}$, $(W_t)_{t\geq0}$ are two \emph{independent} $d$-dimensional Brownian motions. In the \regimeuno regime, the order-2 SDEs for RMSprop and Adam are, respectively,
\begin{equation}\label{eq:sde2_rmsprop_reg_uno}
  \begin{cases}
    \begin{aligned}
      d\theta_t
        &= - P(u_t)\odot\nabla f(\theta_t)\,dt\\
        &\quad - \frac12\tau\Bigl(P(u_t)^{\otimes 2}\nabla^2 f(\theta_t)\nabla f(\theta_t)
           + \frac12P(u_t)^{\otimes2}\frac{\phi'(u_t)}{\sqrt{\phi(u_t)}}\odot(\sigma^2\Sigma_d(\theta_t)-u_t)\odot\nabla f(\theta_t)\\
        &\quad +2\sigma^2\sum_{h,k=1}^d({P(u_t)}^{\otimes 2}_{h,k}\Sigma_{h,k}(\theta_t)\nabla(\partial^2_{h,k}f(\theta_t))\odot P(u_t)\Bigr)\,dt\\
        &\quad + \sigma\diag(P(u_t))\Sigma(\theta_t)^{\frac12}\,dW_t
           + \tau\Lambda_1(\theta_t,u_t)\,dW_t,
    \end{aligned}\\
    \begin{aligned}
      du_t
        &= (\sigma^2\Sigma_d(\theta_t)-u_t)\,dt
          + \frac12\tau\Bigl((\sigma^2\Sigma_d(\theta_t)-u_t)
          + 2\nabla f(\theta_t)^{\otimes2}\\
        &\quad - \sigma^2\nabla \Sigma_d(\theta_t)(\nabla f(\theta_t)\odot P(u_t)\bigr)
          - \tau\sigma^4 \sum_{h,k=1}^d {P(u_t)}^{\otimes 2}_{h,k}\Sigma_{h,k}\partial^2_{h,k}\Sigma_d(\theta_t)\Bigr)\,dt\\
        &\quad + \tau\Lambda_2(\theta_t,u_t)\,dW_t
          + \sqrt{\tau}M(\theta_t)^{\frac12}\,dB_t.
    \end{aligned}
  \end{cases}
\end{equation}
and
\begin{equation}\label{eq:sde2_adam_reg_uno}
  \begin{cases}
    \begin{aligned}
      d\theta_t
        &= - \Gamma_t m_t\odot Q_t(u_t)\,dt
           + \frac12\tau\Bigl(\Gamma_t(m_t - \nabla f(\theta_t))\odot Q_t(u_t)
           + m_t\odot (\partial_t(\Gamma_t Q_t(\cdot))(u_t))\\
        &\quad -\frac12 \Gamma_t Q_t(u_t)^{\otimes2}\frac{\phi'(u_t)}{\sqrt{\phi(u_t)}}\odot(\sigma^2\Sigma_d(\theta_t)-u_t)\Bigr)\,dt\\
        &\quad + \frac12\tau\sigma\Gamma_t\diag(Q_t(u_t))\Sigma(\theta_t)^{\frac12}\,dW_t,
    \end{aligned}\\
    \begin{aligned}
      dm_t
        &=  (\nabla f(\theta_t)-m_t)\,dt
          + \frac12\tau\Bigl((\nabla f(\theta_t)-m_t)
          + \Gamma_t\nabla^2f(\theta_t)(m_t\odot Q_t(u_t))\Bigr)\,dt\\
        &\quad + \sigma\Sigma(\theta_t)^{\frac12}\,dW_t
          + \frac12\tau\sigma \Gamma_t\Bigl(\sum_{h=1}^d (m_t\odot Q_t(u_t))_h\partial_h\Sigma(\theta_t)^{\frac12}\Bigr)\,dW_t,
    \end{aligned}\\
    \begin{aligned}
      du_t
        &=  (\sigma^2\Sigma_d(\theta_t)-u_t)\,dt
          + \frac12\tau\Bigl(\nabla f(\theta_t)^{\odot 2} +(\sigma^2\Sigma_d(\theta_t)-u_t)\\
          &\quad+ \sigma^2\Gamma_t\nabla\Sigma_d(\theta_t)(m_t\odot Q_t(u_t))\Bigr)\,dt\\
        &\quad -2\tau\sigma\diag(\nabla f(\theta_t))\Sigma(\theta_t)^{\frac12}\,dW_t
         + \sqrt{\tau}M(\theta_t)^{\frac12}\,dB_t.
    \end{aligned}
  \end{cases}
\end{equation}
The terms $\Lambda_1$, $\Lambda_2$ in formula \eqref{eq:sde2_rmsprop_reg_uno} are as follows,
\begin{equation}
  \begin{aligned}
    \Lambda_1(\theta_t,u_t)
      &= \frac12\sigma\nabla_\theta\bigl(\nabla f(\theta_t)\odot P(u_t)\bigr)\diag(P(u_t))\Sigma(\theta_t)^{\frac12}\\
      &\quad - \frac14\diag\Bigl(P(u_t)^{\otimes 2}\frac{\phi'(u_t)}{2\sqrt{\phi(u_t)}}\odot(\sigma^2\Sigma_d(\theta_t)-u_t)\Bigr)\Sigma(\theta_t)^{\frac12}\\
      &\quad -\frac12\sigma\sum_{h=1}^d P(u_t)_h\partial_hf(\theta_t)\diag(P(u_t))\partial_h \Sigma^{\frac12}(\theta_t)\\
      &\quad + \sigma^3\sum_{h,k=1}^d P(u_t)^{\otimes 2}_{h,k}\Sigma_{h,k}\diag(P(u_t))\partial^2_{h,k} \Sigma(\theta_t)^{\frac12}\\
      &\quad + \sigma^3\Bigl(\diag(P(u_t))\Sigma(\theta_t)^{\frac12}\Bigr)^{-1} \Bigl(\sum_{h,k=1}^d P(u_t)^{\otimes 2}_{h,k}\Sigma_{h,k} \partial_h \Sigma(\theta_t)^{\frac12}\partial_k\Sigma^{\frac12}(\theta_t)\Bigr),
    \end{aligned}
\end{equation}
and
\begin{equation}
  \Lambda_2(\theta_t,u_t)
    = - 2\sigma\diag(\nabla f(\theta_t))\Sigma(\theta_t)^{\frac12}
      - \frac12\sigma^3\nabla\Sigma_d(\theta_t)\diag(P(u_t))\Sigma(\theta_t)^{\frac12}.
\end{equation}
As already noticed in \cref{r:limitInvert}, in general, $\Lambda_1$ is well-defined only if $\Sigma(\theta_t)^{\frac12}$ is invertible (unless $\Sigma$ is diagonal).

\begin{theorem}\label{at:scaling2}
  Fix positive constants $\lambda_0$, $\lambda_1$, $\lambda_2$, $\epsilon$, $\sigma$, $T>0$, and $\tau\in(0,1\wedge T)$, and set $N=\lfloor T/\tau\rfloor$. Assume that $\nabla f$, $\Sigma^{1/2}\in\Gc^6$, that $\nabla f$, $\nabla^2 f$, $\Sigma^{1/2}$, $\nabla\Sigma$ are bounded and Lipschitz-continuous, and that $M^{1/2}$ is Lipschitz-continuous. Assume also that \cref{BMC} hold.
  \smallskip

  \emph{RMSprop case}. Assume moreover that $\tau<\frac1{2\lambda_0}$ and that \cref{LSC} holds with $p=\frac52$. Let $(y_k)_{k\geq0}=(\theta_k,u_k)_{k\geq0}$ be solution of \eqref{eq:RMSprop_reg_zero} and $(Y_t)_{t\geq0}=(\theta_t,u_t)_{t\geq0}$ solution of \eqref{eq:sde2_rmsprop_reg_zero}, such that $Y_0=y_0$. Then $(Y_t)_{t\in[0,T]}$ is a order-2 weak approximation of $(y_k)_{0\leq k\leq N}$, namely for every $g\in\Gc^5$,
  \[
    \max_{0\leq k\leq N}\big|\E[g(y_k)] - \E[g(Y_{k\tau})]\big|
      \leq c\tau^2,
  \]
  where $c>0$ is a number independent of $\tau$.
  
  If additionally \cref{LSC} holds with $p=3$ and $\Sigma(\theta)^{1/2}$ is uniformly invertible (that is, the smaller eigenvalue has a positive lower bound independent of $\theta$), with Lipschitz-continuous inverse, then the same conclusion holds if $(y_k)_{k\geq0}$ is solution of \eqref{eq:RMSprop_reg_uno} and $(Y_t)_{t\geq0}$ is solution of \eqref{eq:sde2_rmsprop_reg_uno}.
  \smallskip

  \emph{Adam case}. Assume moreover that $\tau<\frac1{2\lambda_2}$ and that \cref{LSC} holds with $p=\frac52$, fix an arbitrary $t_0\in(0,T)$ and set $k_0=\lfloor t_0/\tau\rfloor$. Let $(y_k)_{k\geq k_0}=(\theta_k,m_k,u_k)_{k\geq k_0}$ be solution of \eqref{eq:Adam_reg_zero} and $(Y_t)_{t\geq t_0}=(\theta_t,m_t,u_t)_{t\geq t_0}$ solution of \eqref{eq:sde2_adam_reg_zero} such that $Y_{t_0}=y_{k_0}$. Then $(Y_t)_{t\in[t_0,T]}$ is a order-2 weak approximation of $(y_k)_{k_0\leq k\leq N}$, namely for every $g\in\Gc^6$,
  \[
    \max_{k_0\leq k\leq N}\big|\E[g(y_k)] - \E[g(Y_{k\tau})]\big|
      \leq c\tau^2,
  \]
  where $c>0$ is a number independent of $\tau$.
  
  If additionally \cref{LSC} with $p=3$, then the same conclusion holds if $(y_k)_{k\geq k_0}$ is solution of \eqref{eq:Adam_reg_uno} and $(Y_t)_{t\geq t_0}$ is solution of \eqref{eq:sde2_adam_reg_uno}.
\end{theorem}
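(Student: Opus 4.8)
The plan is to apply the abstract moment-matching criterion \cref{at:approx} underlying \cite[Theorem 3]{li_stochastic_2019} and its non-autonomous refinement \cite[Theorem B.2]{malladi_sdes_2022}, exactly as in the proof of \cref{at:scaling}, but pushing every estimate one order further. For an order-2 approximation the criterion reduces the claim to: well-posedness of the SDEs together with uniform-in-time moment bounds and differentiability of the flow with respect to the initial datum (to control the transition semigroup), uniform moment bounds for the discrete schemes, the matching of the first three conditional one-step moments of the discrete and continuous increments within order $\tau^3$, and the fourth conditional moments being individually $O(\tau^3)$. The well-posedness and flow items are classical once the coefficients are regular enough: the regularisation $\phi$ of \cref{as:setting} makes $P$, $Q_t$ and their derivatives smooth and bounded away from the singularity of the square root, so all drift and diffusion coefficients of \eqref{eq:sde2_rmsprop_reg_zero}--\eqref{eq:sde2_adam_reg_uno} lie in $\Gc$; in the \regimeuno RMSprop case the uniform invertibility of $\Sigma^{\frac12}$ is what makes $\Lambda_1$ (hence the diffusion coefficient) well defined and Lipschitz, see \cref{r:limitInvert}. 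The discrete moment bounds follow from \cref{BMC} and the linear-in-state structure of the updates, as in \cite[Lemma C.9]{malladi_sdes_2022}.

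I would then treat the first-moment (drift) matching. Writing $g_k=\nabla f(\theta_k)+\tfrac{\sigma}{\sqrt\tau}z_k$ in the \regimezero regime (respectively $\tfrac\sigma\tau z_k$ in the \regimeuno regime), a direct expansion gives the exact conditional increments; for instance, in the \regimezero RMSprop case $\E[\Delta u_k\mid\Fc_k]=\tau(J(\theta_k)-u_k)$ and $\E[\Delta\theta_k\mid\Fc_k]=-\tau\,\nabla f(\theta_k)\odot P(u_k)$, with no genuine $\tau^2$ contribution, whereas in the \regimeuno regime genuine $\tau^2$ drift terms such as $\tau^2\nabla f(\theta_k)^{\odot2}$ appear. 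On the continuous side Itô's formula gives $\E[\Delta_\tau Y_t\mid\Fc_t]=\tau b+\tfrac{\tau^2}{2}(\partial_t+\mathcal L)b+O(\tau^3)$, with $\mathcal L$ the generator. The key point, which I would verify term by term, is that the $\tfrac12\tau$ drift corrections appearing in \eqref{eq:sde2_rmsprop_reg_zero}--\eqref{eq:sde2_adam_reg_uno} are built precisely to reconcile the genuine $\tau^2$-terms of the discrete means with the Itô corrections $\tfrac{\tau^2}2(\partial_t+\mathcal L)b$ of the continuous increments; in the cleanest case this amounts to the correction being $-\tfrac12\tau\,\mathcal L_0(\text{order-1 drift})$, where $\mathcal L_0$ is the order-1 drift generator (e.g. the pieces $P^{\otimes2}\nabla^2f\,\nabla f$ and $P^{\otimes2}\tfrac{\phi'}{\sqrt\phi}\odot(J-u)\odot\nabla f$ are exactly $\mathcal L_0(-P\odot\nabla f)$). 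In the \regimeuno regime the parameter noise is of order one, so the second-order part of $\mathcal L$ already contributes at order $\tau$, which is the origin of the extra Hessian-type terms $\sum_{h,k}P^{\otimes2}_{h,k}\Sigma_{h,k}\nabla(\partial^2_{h,k}f)\odot P$ and of the $\partial^2_{h,k}\Sigma_d$ terms in the $u$-drift.

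The heart of the proof, and the genuinely new step, is the second- and third-moment matching for the second-momentum variable $u$. Expanding $g_k^{\odot2}$ and writing $z_k^{\odot2}=\tau\hat z_k=\tau\Sigma_d(\theta_k)+\tau(\hat z_k-\Sigma_d(\theta_k))$, the stochastic part of $\Delta u_k$ splits into a term linear in $z_k$, whose conditional covariance is governed by $\Sigma$, and a term linear in the centred square $\hat z_k-\Sigma_d$, whose conditional covariance is governed by $M$. These reproduce, respectively, the shared $dW$ driver $-2\sigma\diag(\nabla f)\Sigma^{\frac12}$ and the \emph{independent} $dB$ driver $M^{\frac12}$ of the $u$-equation, while the $z_k$-linear part simultaneously generates the $\theta$--$u$ cross-covariation through the common $W$. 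The decisive estimate is that the cross term between the two channels is controlled by a third moment of $z_k$: $\E[(z_k)_i(\hat z_k-\Sigma_d)_j]=\tau^{-1}\E[(z_k)_i(z_k)_j^2]$, which by \cref{LSC} is $O(\tau^{p-1})$. This is negligible at order $\tau^3$ precisely when $p\ge\tfrac52$ in the \regimezero regime and $p\ge3$ in the \regimeuno regime, which is exactly why $B$ and $W$ may be taken independent and why the stated values of $p$ appear; it is the rigorous counterpart of the heuristic of \cref{s:toymodel} and \cref{r:beyond}. The remaining third conditional moments, and the fourth moments (required to be $O(\tau^3)$), are estimated by the same expansion together with \cref{BMC} and \cref{LSC}.

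I expect the main obstacle to be, in the \regimeuno regime, the matching of the \emph{diffusion coefficients} themselves to order $\tau^3$ in the covariance, rather than only their leading quadratic variation. Because the parameter noise is of order one there, one must correct the order-one diffusion by the $O(\tau)$ terms $\Lambda_1$, $\Lambda_2$, and checking that the covariances of the continuous increments reproduce the discrete ones up to $O(\tau^3)$ is a long and delicate computation; it is also where the invertibility of $\Sigma^{\frac12}$ is unavoidable, through the factor $(\diag(P)\Sigma^{\frac12})^{-1}$ in $\Lambda_1$. Finally, for Adam the normalisation factor $\Gamma_t$ is singular at $t=0$, so the approximation is proved on $[t_0,T]$ with $t_0>0$ (as in \cref{at:scaling} and \cref{r:on_gamma}); the discrepancy between the discrete $\bar\Gamma_k$, $\bar\gamma_i(k)$ and the continuous $\Gamma_{k\tau}$, $\gamma_i(k\tau)$ must now be controlled to order $\tau^3$, which is guaranteed by the five-term truncation in the definition \eqref{eq:gammas} of $\gamma_i$, since it makes $\bar\gamma_i(k)-\gamma_i(k\tau)=O(\tau^5)$.
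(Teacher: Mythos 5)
Your proposal follows essentially the same route as the paper's proof: reduce to the abstract moment-matching criterion of \cref{at:approx}, obtain well-posedness, flow regularity and moment bounds from the general results of \cite{li_stochastic_2019} and \cite{malladi_sdes_2022}, and then verify that the first and second conditional moments of the one-step increments agree to order $\tau^3$, with the drift corrections absorbing the It\^o terms (via \cref{lemma:momEst}) and the diffusion split into the $\Sigma$-channel and the independent $M$-channel. Your explicit accounting of the cross term $\E[(z_k)_i(\hat z_k-\Sigma_d)_j]$ via \cref{LSC}, which pins down why $p=\tfrac52$ suffices in the \regimezero regime and $p=3$ in the \regimeuno regime, is a correct and welcome elaboration of a point the paper leaves implicit.
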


We remark that in Adam the requirement that the approximation should hold only from  a positive time is due to the singularity of the normalizations, see \cref{r:on_gamma}.

\begin{remark}\label{r:is_solution2}
  In contrast with order-1 approximations, for the order-2 approximating SDEs \eqref{eq:sde2_rmsprop_reg_zero}, \eqref{eq:sde2_rmsprop_reg_uno}, \eqref{eq:sde2_adam_reg_zero}, \eqref{eq:sde2_adam_reg_uno} the choice of the regularising function $\phi$ \emph{does} matter, and the solutions of these approximating SDEs do not solve the corresponding unregularised (that is, $\phi(x)=x$) versions. As already mentioned in \cref{r:negative}, this is not a contradiction, and the explanation is built-in in the result: our SDEs are order-2 approximations, and noise in the equations for the second momentum is ``small'', in the sense that there is a small pre-factor (a power of $\tau$). The approximations resolve the details of the discrete dynamics to some confidence, and the uncertainty of their driving noise is within the level of confidence.
\end{remark}

The proof of \cref{at:scaling2} will require showing that the difference between first and second moments of $\Delta y_k$ and $\Delta_{\tau}Y_{\tau k}$ is small (of order $\tau^3)$. Therefore, we report a lemma estimating the first and second moments of $\Delta_{\tau}Y_t$.

\begin{lemma}\label{lemma:momEst}
    Given $\tau >0$, $d,m\in \N$ and a $2d$-dimensional Brownian motion $(W_t)_t$. Let $Z_t=(X_t,Y_t)\in \R^{d+m}$ be the unique probabilistic strong solution to the following SDE:
    \begin{equation}\label{eq:genericSDE}
        \begin{cases}
            dZ_t=(b_0+\tau b_1)(t,Z_t)dt+\sigma_0(Z_t)+(\sqrt{\tau}\sigma_1+\tau \sigma_2)(t,Z_t)dW_t,\\
            Z_{t_0}=z,
        \end{cases}
    \end{equation}
    where $b_0,b_1,\sigma_0,\sigma_1,\sigma_2 \in \Gc^4$ are Lipschitz and with linear growth in space, uniformly in time.\\
    Moreover, assume that $\sigma_0\sigma_1^T=\sigma_2\sigma_1^T\equiv0$, that $b_0,b_1,\sigma_0,\sigma_1,\sigma_2$ are in $C^2_t\Gc^4$, i.e $C^2$ in time with derivatives in $\Gc^4$, uniformly in time.
    
    Then for all $i,j\in\{1,\dots,d+m\}$,
    \begin{align}
        &\begin{aligned}\label{e:1stMom}
            \E[(Z_{t_0+\tau}-z)_i]&=\tau (b_0)_i(t_0,z)\\
            &+\tau^2\left((b_1)_i+\frac{b_0\cdot \nabla(b_0)_i}2+\frac{Tr[\nabla^2(b_0)_i\sigma_0\sigma_0^T]}4+\frac{\partial_tb_0}2\right)(t_0,z)+O(\tau^3)
        \end{aligned}\\
        &\begin{aligned}\label{e:2ndMom}
           \E[(Z_{t_0+\tau}-z)_i(Z_{t_0+\tau}-z)_j]&=\tau (\sigma_0\sigma_0^T)_{i,j}(z)+\tau^2\Big\{(b_0\otimes b_0)_{i,j}\\
           &+\frac{Tr[(e_j\overset{S}{\otimes}\nabla(b_0)_i+e_i\overset{S}{\otimes}\nabla(b_0)_j)\sigma_0\sigma_0^T+\nabla^2(\sigma_0\sigma_0^T)_{i,j}\sigma_0\sigma_0^T]}{4}\\
           &+(\sigma_1\sigma_1^T)_{i,j}+\frac{\nabla (\sigma_0\sigma_0^T)_{i,j}\cdot b_0}2+(\sigma_0\sigma_2^T+\sigma_2\sigma_0^T)_{i,j}\Big\}(t_0,z)\\
           &+O(\tau^3),\\
        \end{aligned}
    \end{align}
    where $e_k$ is the $k$-th element of the canonical basis in $\R^{d+m}$ and $v\overset{S}{\otimes}w=v\otimes w+w\otimes v$.
\end{lemma}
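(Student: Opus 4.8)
The plan is to obtain both expansions from It\=o's formula applied to the coordinate maps $F_i(x)=(x-z)_i$ and the quadratic maps $F_{ij}(x)=(x-z)_i(x-z)_j$, followed by one further iteration of It\=o to expand the resulting integrands in powers of $\tau$. Write $b=b_0+\tau b_1$ and $\sigma=\sigma_0+\sqrt\tau\,\sigma_1+\tau\sigma_2$ for the total drift and diffusion of \eqref{eq:genericSDE}, and let $\mathcal{A}_s=\sum_k b_k(s,\cdot)\partial_k+\tfrac12\sum_{k,l}(\sigma\sigma^T)_{kl}(s,\cdot)\partial^2_{kl}$ be the time-dependent generator. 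Dynkin's formula gives $\E[F(Z_{t_0+\tau})]=F(z)+\int_{t_0}^{t_0+\tau}\E[(\mathcal{A}_sF)(Z_s)]\,ds$ for $F\in\Gc^2$, the stochastic integral having zero mean because the Lipschitz/linear-growth hypotheses yield the usual uniform-in-$s$ moment bounds $\sup_{s\in[t_0,t_0+\tau]}\E[|Z_s|^{2m}]\le C_m(1+|z|^{2m})$ on the strong solution. For $F_i$ this reduces the first moment to $\int_{t_0}^{t_0+\tau}\E[b_i(s,Z_s)]\,ds$, while for $F_{ij}$ the integrand is $(Z_s-z)_j b_i+(Z_s-z)_i b_j+(\sigma\sigma^T)_{ij}$.

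The second step is to expand each integrand to first order in $(s-t_0)$ by applying It\=o once more, now to the time-dependent coefficients themselves, and to retain only the contributions up to $\tau^2$ after integrating $ds$ over the interval of length $\tau$. The decisive structural simplification comes from the orthogonality hypotheses $\sigma_0\sigma_1^T=\sigma_2\sigma_1^T\equiv0$ (hence their transposes vanish too): these annihilate the half-integer powers in $\sigma\sigma^T=\sigma_0\sigma_0^T+\tau(\sigma_1\sigma_1^T+\sigma_0\sigma_2^T+\sigma_2\sigma_0^T)+O(\tau^2)$, so the diffusion enters only through this clean $\tau$-series. For the first moment, the $b_1$ term contributes the $\tau^2(b_1)_i$ piece directly, while expanding $h(s)=\E[(b_0)_i(s,Z_s)]$ around $s=t_0$ via $h(t_0)=(b_0)_i(t_0,z)$ and $h'(t_0)=(\partial_t(b_0)_i+b_0\cdot\nabla(b_0)_i+\tfrac12\T[\nabla^2(b_0)_i\sigma_0\sigma_0^T])(t_0,z)$, then integrating $\int_{t_0}^{t_0+\tau}(s-t_0)\,ds=\tfrac12\tau^2$, produces exactly the three remaining terms of \eqref{e:1stMom}, the $\tfrac12\partial_t b_0$ term being the signature of the explicit time dependence. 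For the second moment, the $(\sigma\sigma^T)_{ij}$ integrand yields $\tau(\sigma_0\sigma_0^T)_{ij}$ at leading order together with its Dynkin correction (the $\tfrac12\nabla(\sigma_0\sigma_0^T)_{ij}\cdot b_0$ and $\tfrac14\T[\nabla^2(\sigma_0\sigma_0^T)_{ij}\sigma_0\sigma_0^T]$ terms) plus the $\tau^2(\sigma_1\sigma_1^T+\sigma_0\sigma_2^T+\sigma_2\sigma_0^T)_{ij}$ piece from the $\tau$-part of $\sigma\sigma^T$; the two cross integrands $(Z_s-z)_j b_i$ and $(Z_s-z)_i b_j$, using $\E[(Z_s-z)_l]=(s-t_0)(b_0)_l+O((s-t_0)^2)$ and $\E[(Z_s-z)_j(Z_s-z)_l]=(s-t_0)(\sigma_0\sigma_0^T)_{jl}+O((s-t_0)^2)$, produce the $(b_0\otimes b_0)_{ij}$ term and the symmetric gradient-trace term written with the $\overset{S}{\otimes}$ notation. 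The $\tau b_1$ part of the cross integrands is $O(\tau^3)$ and drops out.

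The main obstacle is the rigorous remainder control, i.e.\ showing that every discarded term is $O(\tau^3)$ in the $\Gc$ sense (bounded by $K(z)\tau^3$ with $K\in\Gc$) rather than merely $o(\tau^2)$. This is where the full strength of the hypotheses is used: the once-iterated Dynkin remainders are integrals over $[t_0,s]\times[t_0,t_0+\tau]$ of quantities of the form $\E[(\mathcal{A}_r\mathcal{A}_s F)(Z_r)]$, and bounding them uniformly requires (i) the polynomial growth of $b_0,b_1,\sigma_0,\sigma_1,\sigma_2$ and of their derivatives up to order $4$ (the $\Gc^4$ assumption), so that $\mathcal{A}_r\mathcal{A}_sF$ has controlled polynomial growth; (ii) the uniform moment bounds on $Z$ together with the increment estimates $\E[|Z_s-z|^{2m}]\le C_m(s-t_0)^m(1+|z|^{2m})$, which convert powers of $(s-t_0)$ into powers of $\tau$ after integration; and (iii) the $C^2_t\Gc^4$ regularity, needed to Taylor-expand the explicit time dependence of the coefficients to second order with a controlled remainder. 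One also has to justify the vanishing of the stochastic-integral expectations and the Fubini interchanges throughout, both of which again reduce to the moment bounds. Assembling these estimates and matching the surviving terms with the tensor notation of the statement completes the proof.
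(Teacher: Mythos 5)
Your proposal follows essentially the same route as the paper's proof: Dynkin/It\=o applied to $(h-z)_i$ and $(h-z)_i(h-z)_j$, a second application of It\=o to expand the resulting integrands to first order in $(s-t_0)$, the orthogonality hypotheses $\sigma_0\sigma_1^T=\sigma_2\sigma_1^T\equiv0$ eliminating the half-integer powers of $\tau$ in $\sigma\sigma^T$, and moment bounds on the strong solution to control the $O(\tau^3)$ remainders. The term-by-term matching (including the $\tfrac12\partial_t b_0$ contribution from the explicit time dependence and the $e_i\overset{S}{\otimes}\nabla(b_0)_j$ trace terms from the cross integrands) is the same as in the paper, so the argument is correct and not a genuinely different approach.
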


\begin{proof}
    Given a $C^1_tC^2_x$ function $f$ such that $f(t,\cdot)\in \Gc^1$ uniformly in time. Then,  It\=o's formula and uniform estimates on the moments of $Z_s$ (see [\cite{li_stochastic_2019},Theorem 19]) imply that
    \begin{align}\label{e:Ito}
        \E[f(t,Z_t)]=f(t_0,z)+\E\left[\int_{t_0}^t\left(\partial_tf+b\cdot\nabla f+\frac{Tr[(\nabla^2f)\sigma\sigma^T]}{2}\right)(s,Z_s)\,ds\right],
    \end{align}
    where $\sigma:=\sigma_0+\sqrt{\tau}\sigma_1+\tau\sigma_2$ and $b:=b_0+\tau b_1$.\\
    Then for $f(h)=(h-z)_i$, we have that
    \begin{equation}
        \begin{aligned}
            \E[(Z_{t_0+\tau}&-z)_i]=\E\left[\int_{t_0}^{t_0+\tau}b_i(s,Z_s)\,ds\right]=\tau \,b_i(t_0,z)\\
            &+\E\left[\int_{t_0}^{t_0+\tau}\int_{t_0}^s\left(\partial_tb_0+b_0\cdot\nabla (b_0)_i+\frac{Tr[\nabla^2(b_0)_i\sigma_0\sigma_0^T]}{2}\right)(r,Z_r)\,dr\,ds\right]+O(\tau^3).
        \end{aligned}
    \end{equation}
    Then, \cref{e:1stMom} follows from applying \cref{e:Ito} to \[\left(\partial_tb_0+b_0\cdot\nabla (b_0)_i+\frac{Tr[\nabla^2(b_0)_i\sigma_0\sigma_0^T]}{2}\right)(r,h).\]

Instead, proving \cref{e:2ndMom} requires applying \cref{e:Ito} to
$f_1(h)=(h-z)_i(h-z)_j$. Indeed,
\begin{equation}\label{e:2ndMomComp}
    \begin{aligned}
        \E[(Z_{t_0+\tau}&-z)_i(Z_{t_0+\tau}-z)_j]=\E\left[\int_{t_0}^{t_0+\tau}\left((Z_s-z)_ib_j+(Z_s-z)_jb_i\right)(s,Z_s)\,ds\right]\\
        &+\E\left[\int_{t_0}^{t_0+\tau}(\sigma_0\sigma_0^T+\tau\,\sigma_0\sigma_2^T+\tau\,\sigma_2\sigma_0^T+\tau\,\sigma_1\sigma_1^T)_{i,j}(s,Z_s)\,ds\right]+O(\tau^3).
    \end{aligned}
\end{equation}
Now, applying \cref{e:Ito} to functions $f(s,h)=(h-z)_kg(s,h)$ allows to reformulate the first summand as follows:
\begin{equation}
 \begin{aligned}
    \E&\int_{t_0}^{t_0+\tau}\int_{t_0}^s(Z_r-z)_j(b_0\cdot\nabla(b_0)_i+\partial_tb_0)(r,Z_r)+(Z_r-z)_i(b_0\cdot\nabla(b_0)_j+\partial_tb_0)(r,Z_r) \\&+2(b_0)_i(b_0)_j(r,Z_r)+\frac12Tr\left[\left(e_j\overset{S}\otimes \nabla(b_0)_i+e_i\overset{S}\otimes \nabla(b_0)_j\right)\sigma_0\sigma_0^T\right](r,Z_r)\,dr+O(\tau^3)\\
    &=\E\int_{t_0}^{t_0+\tau}\int_{t_0}^s\left(2(b_0)_i(b_0)_j+\frac12Tr\left[\left(e_j\overset{S}\otimes \nabla(b_0)_i+e_i\overset{S}\otimes \nabla(b_0)_j\right)\sigma_0\sigma_0^T\right]\right)(r,Z_r)\,dr\\&+O(\tau^3)
\end{aligned}   
\end{equation}
Instead, the second term at the RHS of \cref{e:2ndMomComp} is equal to
\begin{equation}
\begin{aligned}
    \tau \,(\sigma_0\sigma_0^T&+\tau\,\sigma_0\sigma_2^T+\tau\,\sigma_2\sigma_0^T+\tau\,\sigma_1\sigma_1^T)_{i,j}(t_0,z)\\
    &+\E\int_{t_0}^{t_0+\tau}\int_{t_0}^s\left(b_0\cdot\nabla(\sigma_0\sigma_0^T)_{i,j}+\frac12Tr[\nabla^2(\sigma_0\sigma_0^T)_{i,j} \sigma_0\sigma_0^T]\right)(r,Z_r)\,dr+O(\tau^3)
\end{aligned}
\end{equation}
Then, the claim follows from applying \cref{e:Ito} to
\begin{align*}
    &b_0\cdot\nabla(\sigma_0\sigma_0^T)_{i,j}+\frac12Tr[\nabla^2(\sigma_0\sigma_0^T)_{i,j} \sigma_0\sigma_0^T],\\
    &(b_0)_i(b_0)_j+\frac12Tr\left[\left(e_j\overset{S}\otimes \nabla(b_0)_i+e_i\overset{S}\otimes \nabla(b_0)_j\right)\sigma_0\sigma_0^T\right].
\end{align*}
\end{proof}

\begin{proof}[Proof of \cref{at:scaling2}]
  We follow the strategy outlined in \cref{as:scaling} and introduced by \cite{li_stochastic_2017,li_stochastic_2019}. As in the proof of \cref{at:scaling}, existence, uniqueness and uniform control of moments of the SDEs and of the derivatives with respect to the initial conditions follow from the general theorems contained in \cite[Appendix A]{li_stochastic_2019}.

  %4. smallness of the short-time increments of the discrete dynamics, within some order
  Smallness of the moments of the short-time increments $\cDelta_k(y)$ within order $\tau^3$ follows from regularity of coefficients and \cref{BMC}, as in \cite[Lemma C.9]{malladi_sdes_2022}. One can see that under \cref{a:approx_zero} one-dimensional moments have order $\tau$ because of the small (in $\tau$) coefficient in front of the noise terms. In contrast, under \cref{a:approx_uno}, some noise coefficients are of order 1, thus one-dimensional moments have order $\sqrt\tau$.

  % 3. uniform control of moments of the discrete dynamics
  With a control of the moments of the short-time increments, uniform control of moments of the discrete dynamics follows by \cite[Lemma B.5]{malladi_sdes_2022} (that can be applied with either set of assumptions).

Therefore, the proof will be completed once we show that 
\begin{align}
    &|\E[\Delta_{\tau}Y_{k\tau}]-\E[\Delta y_k]|\lesssim\tau^3;\\
    &\label{2ndCond}|\E[(\Delta_{\tau}Y_{k\tau})^{\otimes2}]-\E[(\Delta y_k)^{\otimes 2}]|\lesssim\tau^3;
\end{align}
for all $k\geq 0$ in the RMSprop cases (where $Y=(\theta,u)$) and all $k\geq k_0$ in the Adam ones where $Y=(\theta,m,u)$).

For the sake of readability, we show that the previous inequalities hold for Adam in the balistic regime. 
By standard computations, it follows that
\begin{equation}
\begin{aligned}
\E[\Delta_k\theta]=\bar\Gamma_k(-\tau m_k+&\tau^2(m_k-\nabla f(\theta_k)))\odot \bar Q_k(u_k), \quad \E[\Delta_k m]=\tau (\nabla f(\theta_k)-m_k), \\
&\E[\Delta_ku]=\tau(\nabla f(\theta_k)^{\odot 2}+\sigma^2\Sigma_d(\theta_k)-u_k),
\end{aligned}
\end{equation}
up to $O(\tau^3)$ terms, i.e. additive terms given by $K(\theta_k,m_k,u_k)\tau^3$ with $K \in \Gc$.

Then, \cref{e:1stMom} and \cref{eq:sde2_adam_reg_zero} imply that 
\begin{equation*}
\begin{aligned}
    |\E[\Delta_{\tau}Y_{k\tau}]-\E[\Delta y_k]|&\le\tau\Bigg|\begin{pmatrix}
        -\Gamma_{\tau k} m_k\odot Q_{k\tau}(u_k)\\
        \nabla f(  \theta_k)-m_k\\
        \nabla f(\theta_k)^{\odot 2}+\sigma^2 \Sigma_d(\theta_k)-u_k
    \end{pmatrix}-
    \begin{pmatrix}
        -\bar\Gamma_k m_k\odot \bar Q_k(u_k)\\
        \nabla f(  \theta_k)-m_k\\
        \nabla f(\theta_k)^{\odot 2}+\sigma^2 \Sigma_d(\theta_k)-u_k
    \end{pmatrix}\Bigg|\\
    &+\tau^2\Bigg|(\bar\Gamma_k[m_k-\nabla f(\theta_k)]\odot \bar Q_k(u_k), 0,0)\\
    &-\left(b_1+\frac{b_0\cdot \nabla(b_0)}2+\frac{\partial_tb_0}2\right)(k\tau,\theta_k,m_k,u_k)\Bigg|+O(\tau^3),
\end{aligned}
\end{equation*}
where we recall from \cref{eq:sde2_adam_reg_zero} that 
\[b_0(t,\theta,m,u)=\left( -\Gamma_t m\odot Q_t(u)\right),\nabla f(\theta)-m,J(\theta)-u_t).\]
Notice that $b_1$, in \cref{eq:sde2_adam_reg_zero}, is given by
\[-\left(\frac{b_0\cdot \nabla(b_0)}2+\frac{\partial_tb_0}2\right) +(\Gamma_t[m-\nabla f(\theta)]\odot Q_t(u),0,0).\]
Therefore, our choice of the $\gamma_i$ (see \cref{eq:gammas}) implies that the distance between discrete and continuous first moments is of order $\tau^3$.

Now, standard computations and the definition of $g_k$ (see \cref{as:setting}) imply that
\begin{equation}\label{eq:2ndMomDiscr}
    \begin{aligned}
        &\E[(\Delta_k\theta)^{\otimes 2}]=\bar \tau^2 \Gamma_k^2(m_k\odot \bar Q_k(u_k))^{\otimes 2},\, \E[(\Delta_km)^{\otimes 2}]=\tau^2\left((m_k-\nabla f(\theta_k))^{\otimes 2}+\sigma^2 \Sigma(\theta_k)\right),\\
        &\E[(\Delta_k u)^{\otimes 2}]=\tau^2\left[(u-\nabla J(\theta_k))^{\otimes 2}+\sigma^4 M(\theta_k)+4\sigma^2 \diag(\nabla f(\theta_k))\Sigma(\theta_k)\diag(\nabla f(\theta_k))\right],\\
        &\E[(\Delta_k \theta)\otimes (\Delta_k m)]=\tau^2 \bar \Gamma_k (m_k\odot \bar Q_k(u_k))\otimes(m_k-\nabla f(\theta_k)),\\
        &\E[(\Delta_k \theta)\otimes (\Delta_k u)]=\tau^2 \bar \Gamma_k (m_k\odot \bar Q_k(u_k))\otimes(u_k-J(\theta_k)),\\
        &\E[(\Delta_k m)\otimes (\Delta_k u)]=\tau^2[(m_k-\nabla f(\theta_k))\otimes (u_k-J(\theta_k))+2\sigma^2 \Sigma(\theta_k) \diag (\nabla f(\theta_k))],
    \end{aligned}
\end{equation}
up to additive $O(\tau^3)$ terms.\\
Notice that there is no term of order $\tau$ in the previous equalities. Since the general second moment of an SDE of the form \ref{eq:genericSDE} is given by \cref{e:2ndMom}, then $\sigma_0=\sigma_2=0$ in \cref{eq:sde2_adam_reg_zero}. 
At last, the proof is completed, i.e. \cref{2ndCond} holds too, since $\sigma_1$ in \cref{eq:sde2_adam_reg_zero} has been chosen to cancel out both the second discrete moments, given by \cref{eq:2ndMomDiscr}, and all the spurious terms due to $b_0$ appearing in \cref{e:2ndMom}.

\end{proof}

\subsubsection{A non-autonomous approximation theorem}

In this section, for the sake of completeness, we extend to arbitrary order the order-1 approximation theorem in \cite[Theorem B.2]{malladi_sdes_2022}, together with a version of \cite[Theorem 3]{li_stochastic_2019} for non-autonomous problems.

We first recall the setting introduced in \cite{li_stochastic_2019}.

\begin{definition}
  Denote by $\Gc$ the set of real valued continuous functions defined on $\R^d$ of at most polynomial growth, namely for $f\in\Gc$, there are $a>0$, $c>0$ such that for all $x\in\R^d$,
  \[
    |f(x)|
      \leq c(1+|x|^a).
  \]
  For an integer $m\geq1$, let $\Gc^m$ be the set of functions belonging to $\Gc$ together with their partial derivatives up to order $m$.

  The definition extends to vector valued functions coordinate-wise. Moreover, if a function $g$ depends on additional parameters, by $g\in\Gc$ we mean that polynomial growth holds uniformly in the additional parameters (that is, the number $a$, $c$ in the inequality above do not depend on the additional parameters).
\end{definition}

Given a family of measurable functions $h_k:\R^d\times\Gamma\times[0,\infty)\to\R^d$, and a random variable $\gamma$ with values in $\Gamma$, define the \emph{stochastic iteration},
\begin{equation}\label{eq:iteration}
  y_{k+1}
    = y_k + \tau h_k(y_k,\gamma_k,\tau),
\end{equation}
where $\tau>0$, $(\gamma_k)_{k\geq0}$ is a collection of \iid random variables with common law equal to the law of $\gamma$, and $y_0$ is given and independent of $(\gamma_k)_{k\geq0}$. We shall denote by $(y_k(k_0,y))_{k\geq k_0}$ the solution of the above iteration \eqref{eq:iteration} with initial state $y_{k_0}(k_0,y)=y$.

Consider the SDE,
\begin{equation}\label{eq:sde_tau}
  \begin{cases}
    dY_t
      = b(t,\tau,Y_t)\,dt
        + \sigma(t,\tau,Y_t)\,dW_t,\\
    Y_0
      = y_0,
  \end{cases}
\end{equation}
where $(W_t)_{t\geq0}$ is a standard $m$-dimensional Brownian motion for some $m\geq1$, and $b:[0,T]\times[0,1]\times[0,1]\times\R^d\to\R^d$, $\sigma:[0,T]\times[0,1]\times[0,1]\times\R^d\to\R^{d\times m}$ are continuous functions such that
\begin{itemize}
  \item $b$, $\sigma$ have at most linear growth, uniformly in time and $\tau$,
  \item $b$, $\sigma$ are Lipschitz-continuous, uniformly in time and $\tau$.
\end{itemize}
Under these assumptions, \eqref{eq:sde_tau} has a unique solution for every initial condition. Denote by $(Y_t(s,y))_{t\geq s}$ the solution of \eqref{eq:sde_tau} above with initial condition $Y_s(s,y)=y$.

Define
\[
  \dDelta_k(y)
    = y_{k+1}(k,y) - y
    = \tau h_k(y,\gamma_k,\tau),\qquad
  \cDelta_\tau(y,t)
    = Y_{t+\tau}(t,y) - y,
\]
and notice that, by independence, $\dF_k = \sigma(y_0,\gamma_1,\dots,\gamma_{k-1})$, then
\[
  \E[y_{k+1} - y_k\mid\dF_k]
    = \dDelta_k(y_k).
\]
Likewise, by the Markov property, if $(\cF_t)_{t\geq s}$ is the filtration generated by $(Y_t(s,y))_{t\geq s}$,
\[
  \E[Y_{t+\tau}(s,y) - Y_t(s,y)\mid\cF_t]
    = \cDelta_\tau(Y_t(s,y),t).
\]
For brevity, we shall set $\cDelta_k(y) = \cDelta_\tau(y,k\tau)$.

Due to the intrinsic difference between the set of equations in the two regimes we have identified, we shall state two sets of assumptions. The first covers the \regimezero case and originates from \cite{li_stochastic_2019}.

\begin{assumption}\label{a:approx_zero}
  Given $\alpha\geq1$, assume that
  \begin{enumerate}
    \item the functions $b$, $\sigma\in\Gc^{\alpha+1}$ are Lipschitz (uniformly in time),
    \item there exists $K_1\in\Gc$, independent of $\tau$, such that for every multi-index $\beta\in\N^d$, with $|\beta|\leq\alpha$, every $y\in\R^d$ and every $k=0,1,\dots,N$,
      \[
        \big|\E\bigl[\bigl(\dDelta_k(y)\bigr)^\beta\bigr]
            - \E\bigl[\bigl(\cDelta_k(y)\bigr)^\beta\bigr]\big|
          \leq K_1(y)\tau^{\alpha+1},
      \]
    \item there exist $a>1$ and $K_2\in\Gc$, independent of $\tau$, such that for every multi-index $\beta$ with $|\beta|=\alpha+1$, every $y\in\R^d$ and every $k=0,1,\dots,N$,
      \[
        \big|\E\bigl[|(\dDelta_k(y))^\beta|^a\bigr]\big|
          \leq K_2(y)\tau^{a(\alpha+1)},
      \]
    \item for all $m\geq1$, there is $K_3\in\Gc$, independent of $\tau$ such that for all $y$,
      \[
        \sup_{k=0,1,\dots,N} \E[|y_k(0,y)|^{2m}]
          \leq K_3(y).
      \]
  \end{enumerate}
\end{assumption}
In the above definition by $y^\beta$ for $y\in\R^d$ and a multi-index $\beta\in\N^d$ we mean $y^\beta=\prod_{j=1}^d y_j^{\beta_j}$, and $|\beta|=\beta_1+\beta_2+\dots+\beta_d$.

\begin{remark}
  As in \cite{li_stochastic_2019}, one can relax the requirements on coefficients by turning from derivatives to weak derivatives. In that case, Assumption (2) of \cref{at:approx} is replaced by the following assumption: there are $\rho:(0,1)\to[0,\infty)$ and $K\in\Gc_1$, independent of $\tau$, such that for every multi-index $\beta$, with $|\beta|\leq\alpha$, and every $y\in\R^d$,
  \[
    \big|\E\bigl[\bigl(\dDelta_k(y)\bigr)^\beta\bigr]
        - \E\bigl[\bigl(\cDelta_k(y)\bigr)^\beta\bigr]\big|
      \leq K_1(y)\bigl(\tau\rho(\epsilon) + \tau^{\alpha +1}\bigr).
  \]
\end{remark}

We then give a set of assumptions suitable for the \regimeuno regime, following \cite{malladi_sdes_2022}.

\begin{assumption}\label{a:approx_uno}
  Given $\alpha\geq1$, assume that
  \begin{enumerate}
    \item the functions $b$, $\sigma\in\Gc^{2\alpha+2}$ are Lipschitz (uniformly in time),
    \item there exists $K_1\in\Gc$, independent of $\tau$, such that for every multi-index $\beta\in\N^d$, with $|\beta|\leq2\alpha+1$, every $y\in\R^d$ and every $k=0,1,\dots,N$,
      \[
        \big|\E\bigl[\bigl(\dDelta_k(y)\bigr)^\beta\bigr]
            - \E\bigl[\bigl(\cDelta_k(y)\bigr)^\beta\bigr]\big|
          \leq K_1(y)\tau^{\alpha+1},
      \]
    \item there is a set $P\subset\{1,2,\dots,d\}$ for which the following properties hold,
  \begin{itemize}
    \item there exist constants $c_1 >0$ and $\omega_1>0$, independent of $\tau$, such that for all $k=0,1,\dots,N$ and $y\in\R^d$,
      \[
        \begin{aligned}
          \|\E[(\dDelta_k(y))_P]\|
            &\leq c_1\tau (1 + \|(y)_P\|),\\
          \|\E[(\dDelta_k(y))_R]\|
            &\leq c_1\tau (1+ \|(y)_P\|^{\omega_1})(1+\|(y)_R\|),
        \end{aligned}
      \]
    \item for all $m\geq 1$, there are constants $c_{2m}>0,$ and $\omega_{2m}>0$, independent of $\tau$, such that for all $k=0,1,\dots,N$ and $y\in\R^d$,
      \[
        \begin{aligned}
          \E[\|(\cDelta_k(y))_P\|^{2m}]
            &\leq c_{2m} \tau^m (1+\|(y)_P\|^{2m}),\\
          \E[\|\cDelta_k(y)\|^{2m}_R]
            &\leq c_{2m} \tau^m (1+ \|(y)_P\|_P^{\omega_{2m}})(1+\|(y)_R\|^{2m}).
        \end{aligned}
      \]
  \end{itemize}
  Here $R = \{1,2,\dots,d\}\setminus P$, and $((x_i)_{i=1,2,\dots,d})_P=(x_i)_{i\in P}$.
  \end{enumerate}
\end{assumption}

\begin{remark}
    Condition (3) in \cref{a:approx_uno} above is tailored for adaptive methods, and takes into account the special structure of the problem.
\end{remark}

We finally state the approximation theorem, which is a minor adaptation of \cite[Theorem 3]{li_stochastic_2019} and \cite[Theorem B.2]{malladi_sdes_2022}.

\begin{theorem}\label{at:approx}
  Let $\alpha\geq 1$, $T>0$, $\tau\in(0,1\wedge T)$, and set $N=\lfloor T/\tau\rfloor$. Assume either \cref{a:approx_zero} or \cref{a:approx_uno}.
  Then for every $g\in\Gc^{\alpha+1}$ (in the case of \cref{a:approx_zero}) or $g\in\Gc^{2\alpha+2}$ (in the case of \cref{a:approx_uno}), there is $c>0$ such that
  \[
    \max_{0\leq k\leq N}|\E[g(y_k)] - \E[g(Y_{k\tau})]|
      \leq c\tau^\alpha,
  \]
  where $y_k = y_k(0,y)$ and $Y_t = Y_t(0,y)$, with $y\in\R^d$.
\end{theorem}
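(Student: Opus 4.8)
The plan is to run the classical weak-error telescoping argument of \cite{li_stochastic_2019}, carrying the extra time dependence of the coefficients through every step. Fix $g$ and a target index $k\le N$, and introduce the (non-autonomous) transition semigroup $u(t,y)=\E[g(Y_{k\tau}(t,y))]$ for $t\in[0,k\tau]$, which solves the backward Kolmogorov equation $\partial_t u + b(t,\tau,\cdot)\cdot\nabla u + \frac12\T[\sigma\sigma^T\nabla^2 u]=0$ with terminal datum $u(k\tau,\cdot)=g$. The first task is to establish, under the regularity hypotheses on $b,\sigma$, that $u(t,\cdot)$ lies in $\Gc^{\alpha+1}$ (respectively $\Gc^{2\alpha+2}$) with polynomial-growth constants uniform in $t\in[0,T]$ and in $\tau$; this is exactly the semigroup-regularity content of \cite[Appendix A, Theorems 18--20]{li_stochastic_2019}, obtained by differentiating the flow $Y_t(s,\cdot)$ with respect to its initial condition and controlling moments of the derivative processes. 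Since $u$ solves the Kolmogorov equation, $t\mapsto u(t,Y_t)$ is a martingale, whence $\E[u((j+1)\tau,Y_{(j+1)\tau})\mid\cF_{j\tau}]=u(j\tau,Y_{j\tau})$.

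With this in hand I telescope the weak error. Because $u(k\tau,\cdot)=g$ and $Y_0=y_0$ is deterministic, $\E[g(Y_{k\tau})]=u(0,y_0)$ and $\E[g(y_k)]=\E[u(k\tau,y_k)]$, so
\[
\E[g(y_k)]-\E[g(Y_{k\tau})]=\sum_{j=0}^{k-1}\E\bigl[u((j+1)\tau,y_{j+1})-u(j\tau,y_j)\bigr].
\]
I then estimate each summand by conditioning on $\dF_j$. The one-step increment of $u$ along the discrete chain uses $y_{j+1}=y_j+\dDelta_j(y_j)$, while the continuous martingale property rewrites $u(j\tau,y_j)=\E[u((j+1)\tau,y_j+\cDelta_j(y_j))]$. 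Taylor-expanding $u((j+1)\tau,\cdot)$ around $y_j$ up to order $\alpha+1$ (respectively $2\alpha+2$) turns the one-step error into a finite combination of moment differences $\E[(\dDelta_j(y_j))^\beta]-\E[(\cDelta_j(y_j))^\beta]$, weighted by spatial derivatives of $u$, plus Taylor remainders. The moment-matching hypothesis (item (2) of \cref{a:approx_zero}, respectively \cref{a:approx_uno}) bounds every such difference by $K_1(y_j)\tau^{\alpha+1}$, while the higher-order moment bounds (item (3)) control the remainders at the same order; multiplying by the uniformly polynomially-bounded derivatives of $u$ and taking expectations with the uniform moment control of $y_j$ (item (4) in the \regimezero case, the coordinate-split bounds of \cref{a:approx_uno} in the \regimeuno case) yields $|\E[u((j+1)\tau,y_{j+1})-u(j\tau,y_j)]|\le C\tau^{\alpha+1}$ uniformly in $j$. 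Summing over $j<k\le N\le T/\tau$ produces the claimed bound $c\tau^\alpha$, uniformly in $k$.

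The main obstacle is the uniform-in-time, uniform-in-$\tau$ regularity of $u$ in the $\Gc^m$ scale secured in the first step: the weak-error argument is only as sharp as the control one has on the derivatives of the transition semigroup, and these must not degrade as $t$ ranges over $[0,T]$ or as $\tau\to0$. In the \regimeuno regime this is compounded by the fact that the leading noise coefficient $\sigma_0$ is order one, so the one-step increments are only $O(\sqrt\tau)$ in the $P$-coordinates; consequently one must match moments up to order $2\alpha+1$ and expand $u$ to order $2\alpha+2$ (hence the stronger requirement $g\in\Gc^{2\alpha+2}$), and the polynomial couplings between the $P$ and $R$ blocks in \cref{a:approx_uno} must be tracked so that the remainder and moment estimates close. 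Since the coefficients depend on time only through the smooth, uniformly bounded normalisations $\gamma_1,\gamma_2$, the non-autonomous extension of the cited autonomous statements is routine once the semigroup regularity is in place; the remaining effort is bookkeeping the time argument through the Taylor expansions and the flow-derivative moment estimates.
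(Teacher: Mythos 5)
Your proposal is correct and follows essentially the same route as the paper: the same telescoping over the non-autonomous transition semigroup $u$, the same reduction of each summand to $\E[u_{j+1}(y_j+\dDelta_j(y_j))]-\E[u_{j+1}(y_j+\cDelta_j(y_j))]$ (the paper writes this via the flow property of $Y$, you via the martingale/Kolmogorov formulation, which is the same identity), and the same Taylor-expansion-plus-moment-matching estimate with the higher expansion order $2\alpha+2$ needed in the \regimeuno case because increments are only $O(\sqrt\tau)$. The ingredients you flag as the main obstacles (uniform-in-$t$, uniform-in-$\tau$ $\Gc^m$ regularity of $u$, and the uniform moment control of $y_j$) are exactly the ones the paper delegates to \cite[Proposition 25]{li_stochastic_2019} and \cite[Lemma B.5]{malladi_sdes_2022}.
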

\begin{proof}
  Let $u(y,s,t)=\E[g(Y_t(s,y))]$ and $y_k = y_k(0,y_0)$, then we have that
  \[
    \begin{aligned}
      \lefteqn{|\E[g(y_k)]-\E[g(X_{k\tau}(0,y_0))]| =}\qquad&\\
        &= |\E[g(Y_{k\tau}(k\tau,y_k))] - \E[g(Y_{k\tau}(0,y_0))]|\\
        &\leq \sum_{j=0}^{k-1} |\E[g(Y_{k\tau}((j+1)\tau,y_{j+1}))]
          - \E[g(Y_{k\tau}(j\tau, y_j))]|\\
        &= \sum_{j=0}^{k-1} |\E[g(Y_{k\tau}((j+1)\tau,y_{j+1}))]
          - \E[g(Y_{k\tau}((j+1)\tau,Y_{(j+1)\tau}(j\tau,y_j))]|\\
        &= \sum_{j=0}^{k-1} |\E[g(Y_{k\tau}((j+1)\tau),y_j+\dDelta_{j}(y_j))]
          - \E[g(Y_{k\tau}((j+1)\tau, y_j + \cDelta_j(y_j)))]|\\  
        &= \sum_{j=0}^{k-1} |\E\bigl[\E[g(Y_{k\tau}((j+1)\tau),y_j+\dDelta_{j}(y_j))\mid\dF_j]\bigr]\\
        &\qquad  - \E\bigl[\E[g(Y_{k\tau}((j+1)\tau, y_j + \cDelta_j(y_j)))\mid\dF_j]\bigr]|\\
        &= \sum_{j=0}^{k-1} |\E[u_{j+1}(y_j+\dDelta_{j}(y_j))]
          - \E[u_{j+1}(y_j + \cDelta_j(y_j))]|,
    \end{aligned}
  \]
  where for brevity we have set $u_j(y)=u(y,j\tau,k\tau)$.
  
  Under \cref{a:approx_zero}, by \cite[Proposition 25]{li_stochastic_2019}, $u\in \Gc^{\alpha+1}$ uniformly in $s$, $t$, so $u_1, \dots, u_N \in \Gc^{\alpha+1}$ uniformly. By \cref{l:taylor}, there is $K\in\Gc$ such that
  \[
    |\E[g(y_k)]-\E[g(X_{k\tau}(0,y_0))]|
      \leq \tau^{\alpha+1}\sum_{j=0}^{k-1} \E[K(y_j)]
      \lesssim k\tau^{\alpha+1}
      \lesssim \tau^\alpha,
  \]  
  where we have also used condition (4) of \cref{a:approx_zero}. The same conclusion follows under \cref{a:approx_uno}, since \cite[Proposition 25]{li_stochastic_2019} ensures that $u_1, \dots, u_N \in \Gc^{2\alpha+2}$, while \cite[Lemma B.5]{malladi_sdes_2022} ensures that condition (4) of \cref{a:approx_zero} also holds in this case.
\end{proof}

We conclude with a lemma that summarises \cite[Lemma 27]{li_stochastic_2019} and extends \cite[Lemma B.6]{malladi_sdes_2022} to higher order.
\begin{lemma}\label{l:taylor}
  Assume either \cref{a:approx_zero} (set $\gamma=\alpha$) or \cref{a:approx_uno} (set $\gamma=2\alpha+1$). Let $u_1,u_2,\dots,u_J\in\Gc^{\gamma+1}$ uniformly, then there is $K\in\Gc$, independent of $\tau$, such that
  \[
    |\E[u_j(y+\cDelta_k(y))] - \E[u_j(y+\dDelta_k(y))]|
      \leq K(y)\tau^{\alpha+1},
  \]
  for every $j=1,2,\dots,J$, $k=0,1,\dots,N$, $y\in\R^d$.
\end{lemma}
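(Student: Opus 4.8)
The plan is to Taylor-expand each $u_j$ about the base point $y$ to the order dictated by $\gamma$, match the low-order terms through the moment-matching hypothesis, and control the Taylor remainders separately by higher-moment bounds on the two increments. Since $u_j\in\Gc^{\gamma+1}$ uniformly in $j$, Taylor's theorem with integral remainder gives, for any increment $\Delta$,
\[
  u_j(y+\Delta)
    = \sum_{|\beta|\leq\gamma}\frac{\partial^\beta u_j(y)}{\beta!}\,\Delta^\beta
      + (\gamma+1)\sum_{|\beta|=\gamma+1}\frac{\Delta^\beta}{\beta!}\int_0^1(1-s)^{\gamma}\,\partial^\beta u_j(y+s\Delta)\,ds.
\]
Applying this with $\Delta=\cDelta_k(y)$ and with $\Delta=\dDelta_k(y)$, taking expectations and subtracting, the polynomial part contributes $\sum_{|\beta|\leq\gamma}(\beta!)^{-1}\partial^\beta u_j(y)\bigl(\E[\cDelta_k(y)^\beta]-\E[\dDelta_k(y)^\beta]\bigr)$, while the two remainder integrals are left over.

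First I would dispose of the polynomial part. Under \cref{a:approx_zero} (with $\gamma=\alpha$), respectively \cref{a:approx_uno} (with $\gamma=2\alpha+1$), condition~(2) bounds each moment difference with $|\beta|\leq\gamma$ by $K_1(y)\tau^{\alpha+1}$. As the derivatives $\partial^\beta u_j$ lie in $\Gc$ with growth constants uniform in $j$, and the set of multi-indices with $|\beta|\leq\gamma$ is finite, the whole polynomial part is bounded by $K(y)\tau^{\alpha+1}$ for some $K\in\Gc$ independent of $\tau$ and of $j$.

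The heart of the argument is the Taylor remainder, which must be shown to be $O(\tau^{\alpha+1})$ \emph{separately} for each increment, and it is here that the two regimes diverge --- precisely why $\gamma$ is chosen differently. Bounding $|\partial^\beta u_j(y+s\Delta)|\le c(1+|y|^a+|\Delta|^a)$ by polynomial growth and applying H\"older's inequality, each remainder reduces to terms of the shape $\E[|\Delta^\beta|\,(1+|y|^a+|\Delta|^a)]$ with $|\beta|=\gamma+1$, hence to moment bounds on the components of $\Delta\in\{\cDelta_k(y),\dDelta_k(y)\}$. In the \regimezero regime the noise coefficients carry a $\sqrt\tau$ prefactor, so every component of the increment is of size $\tau$: condition~(3) of \cref{a:approx_zero} gives $\E[|\dDelta_k(y)^\beta|]\lesssim\tau^{\alpha+1}$ for $|\beta|=\alpha+1$ via Jensen's inequality, the companion bound for $\cDelta_k(y)$ follows from the standard moment estimates of \cite{li_stochastic_2019} for the SDE increment over a window of length $\tau$, and the spurious factor $|\Delta|^a$ is absorbed by Cauchy--Schwarz together with the uniform moment control (condition~(4) and \cref{BMC}). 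In the \regimeuno regime some noise coefficients are of order one, so the increment has size only $\sqrt\tau$ in the coordinates of $P$; this forces the expansion to order $\gamma+1=2\alpha+2$, so that a monomial of total degree $2\alpha+2$ collects $2\alpha+2$ factors each of size at most $\sqrt\tau$, giving the required $\tau^{\alpha+1}$.

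The main obstacle is exactly this remainder estimate in the \regimeuno regime, where the increment genuinely has different sizes in the $P$- and $R$-coordinates and carries different $y$-dependence, so a single isotropic bound $\E[|\Delta|^{2\alpha+2}]\lesssim\tau^{\alpha+1}$ does not retain the information needed to produce a $\Gc$ constant. I would therefore split each monomial $\Delta^\beta$ with $|\beta|=2\alpha+2$ into its $P$-part and $R$-part and estimate them through the two moment bounds in condition~(3) of \cref{a:approx_uno}, combined by a generalized H\"older inequality: each factor --- whether in $P$ or, a fortiori, in the drift-dominated $R$ --- contributes at least $\sqrt\tau$, while the $y$-growth is collected into a $\Gc$ constant (the $R$-bound supplying the extra factor $(1+\|(y)_P\|^{\omega})$). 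Summing over the finitely many $\beta$, and repeating the argument for the discrete increment $\dDelta_k=\tau h_k$ directly from its explicit structure together with \cref{BMC} (as in \cite[Lemma~B.5]{malladi_sdes_2022}), yields $O(\tau^{\alpha+1})$ for both remainders. Adding the polynomial part and taking $K$ to be the finite maximum over $j=1,\dots,J$ of the resulting $\Gc$ constants completes the proof.
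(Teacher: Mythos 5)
Your proposal is correct and follows essentially the same route as the paper's proof: Taylor expansion of $u_j$ about $y$ to order $\gamma$, the polynomial part controlled by the moment-matching condition~(2) of the relevant assumption, and the integral remainders bounded via H\"older's inequality together with the higher-moment conditions~(3) (with the exponent $a$ from \cref{a:approx_zero}, respectively $a=2$ under \cref{a:approx_uno}, where $\gamma+1=2\alpha+2$ yields $\tau^{(\gamma+1)/2}=\tau^{\alpha+1}$). Your extra care with the $P$/$R$ anisotropy is a slightly more explicit rendering of what the paper compresses into the appeal to condition~(3) of \cref{a:approx_uno}, not a different argument.
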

\begin{proof}
  Fix $j\in\{1,2,\dots,J\}$. Use Taylor's formula around $y$ to obtain
  \[
    \begin{aligned}
      \lefteqn{\E[u_j(y+\cDelta_k(y))] - \E[u_j(y+\dDelta_k(y))] =}\qquad&\\
        &= \sum_{\beta\leq\gamma} D^\beta u_j(y)(\E[(\cDelta_k(y))^\beta] - \E[(\dDelta_k(y))^\beta])
        + \E[\cR_j] + \E[\dR_j],
    \end{aligned}
  \]
  and
  \[
    \dR_j
      = \sum_{|\beta|=\gamma+1}\frac{|\beta|}{\beta!}(\dDelta_k(y))^\beta
        \int_0^1(1-\lambda)^{|\beta|-1}D^\beta u_j(y+\lambda\dDelta_k(y))\,d\lambda,
  \]
  and $\cR_j$ is defined similarly but with $\dDelta_k(y)$ replaced by $\cDelta_k(y)$. By either assumption,
  \[
    \sum_{\beta\leq\gamma} D^\beta u_j(y)(\E[(\cDelta_k(y))^\beta] - \E[(\dDelta_k(y))^\beta])
      \leq K(y)\tau^{\alpha+1},
  \]
  for some $K\in\Gc$. We turn to the estimate of the remainders. Consider first $\dR_j$,
  \[
    \begin{aligned}
    |\E\dR_j|
      &\leq c_\beta\sum_{|\beta|=\gamma+1}\int_0^1(1-\lambda)^{|\beta|-1}
        |\E\bigl[(\dDelta_k(y))^\beta D^\beta u_j(y+\lambda\dDelta_k(y))\bigr]|\,d\lambda,\\
      &\leq c_\beta\sum_{|\beta|=\gamma+1}|\E\bigl[|(\dDelta_k(y))^\beta|^a\bigr]^{\frac1a}
        \int_0^1(1-\lambda)^{|\beta|-1}
        \E\bigl[|D^\beta u_j(y+\lambda\dDelta_k(y))|^{a'}\bigr]^{\frac1{a'}}|\,d\lambda,\\
      &\leq K(y)\sum_{|\beta|=\gamma+1}|\E\bigl[|(\dDelta_k(y))^\beta|^a\bigr]^{\frac1a},
    \end{aligned}
  \]
  for a suitable function $K\in\Gc$, where $a>0$, $\frac1{a}+\frac1{a'}=1$. Under \cref{a:approx_zero}, we can use condition (3) therein (taking for $a$ the value provided there) to conclude. Under \cref{a:approx_uno}, we use condition (3) therein, take $a=2$, to obtain again $|\E[\cR_j]|\leq K(y)\tau^{\frac12(\gamma+1)}=K(y)\tau^{\alpha+1}$. The estimate of $\E[\dR]$ proceeds similarly, using estimates of the moments of the increments as in \cite[Lemma 26]{li_stochastic_2019}, that cannbe easily adapted to higher moments.
\end{proof}

\subsection{Experimental results}\label{as:experiments} 
We experimentally evaluate our results by minimizing a function using discrete algorithms and its continuous approximations.  For the RMSProp algorithm, we set $\lambda_0 = 0.5$, while for Adam we use $(\lambda_1, \lambda_2) = (1, 0.5)$. Further details on the experimental setup can be found in Appendix~\ref{as:experiments setup}.\\

\subsection{\regimezero regime}\label{subsec:Experiments_zero}
\cref{fig: reg_zero_RMSProp_1approx_tau_fixed} compare the weak error \[\E[f(X_{k\tau})-f(x_k)]\] along the dynamics for $X_t$ given by the order-1 and by the order-2 approximations of RMSProp in \regimezero regime at $\tau$ fixed to $2^{-3}$. \cref{fig: reg_zero_Adam_1approx_tau_fixed} present the same comparison for Adam.\\
 In \cref{fig: reg_zero_RMSProp_1approx_tau_fixed,fig: reg_zero_Adam_1approx_tau_fixed}, we observe that the first-order deterministic dynamics struggle to provide accurate approximations when the behavior is largely noise-driven as in the final part of the simulation, whereas the second-order approximations maintain a low error throughout the entire dynamics.\\
 \cref{tb:due} shows the weak error the approximation of Adam. As predicted
by our analysis, the order-2 approximation should give a slope = 2
decrease in error as $\tau$ decrease.

\begin{figure}[ht]
    \centering
    \includegraphics[width=.45\linewidth]{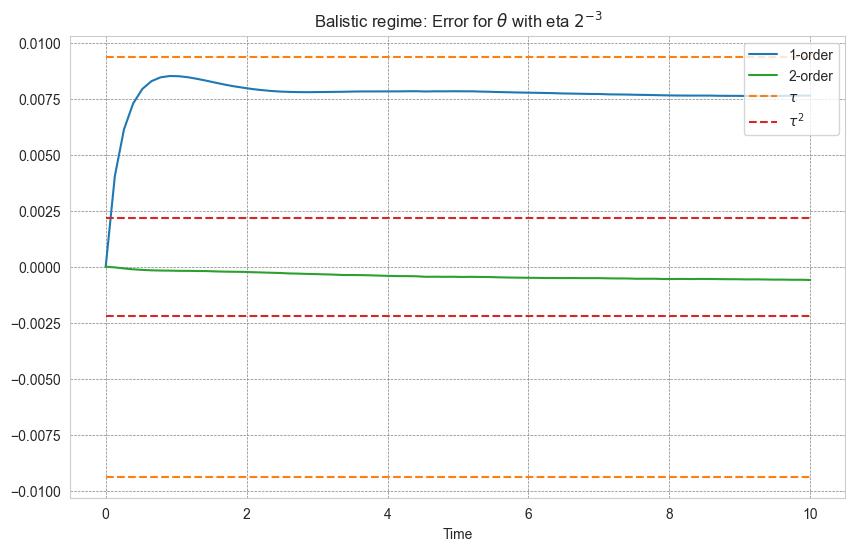}
    \hspace{1 em}
    \includegraphics[width=.45\linewidth ]{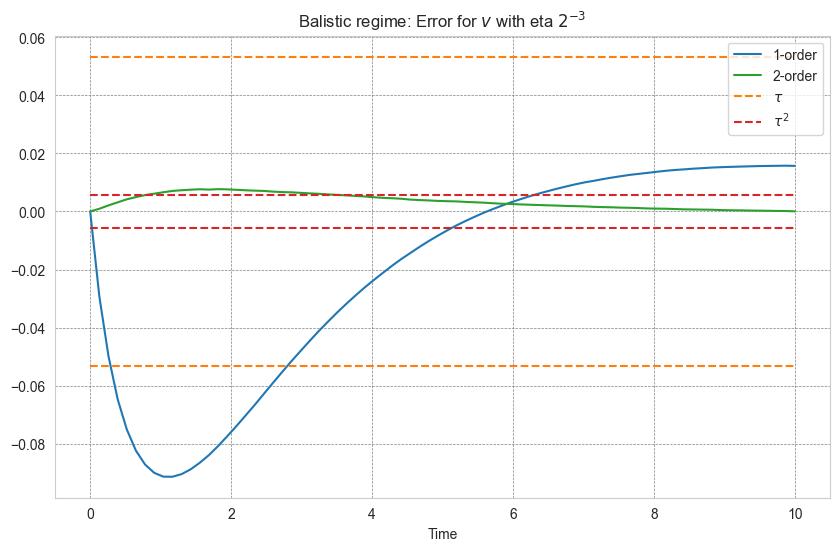}
    \caption{\textit{RMSprop} \regimezero regime: Comparison between the first-order~\eqref{eq:sde1_rmsprop_reg_zero} and second-order~\eqref{eq:sde2_rmsprop_reg_zero} continuous approximations of RMSProp for $\tau = 2^{-3}$.}
    \label{fig: reg_zero_RMSProp_1approx_tau_fixed}
\end{figure}

\begin{table}
  \centering
  \begin{tabular}{ccc}
    \includegraphics[width=.3\linewidth]{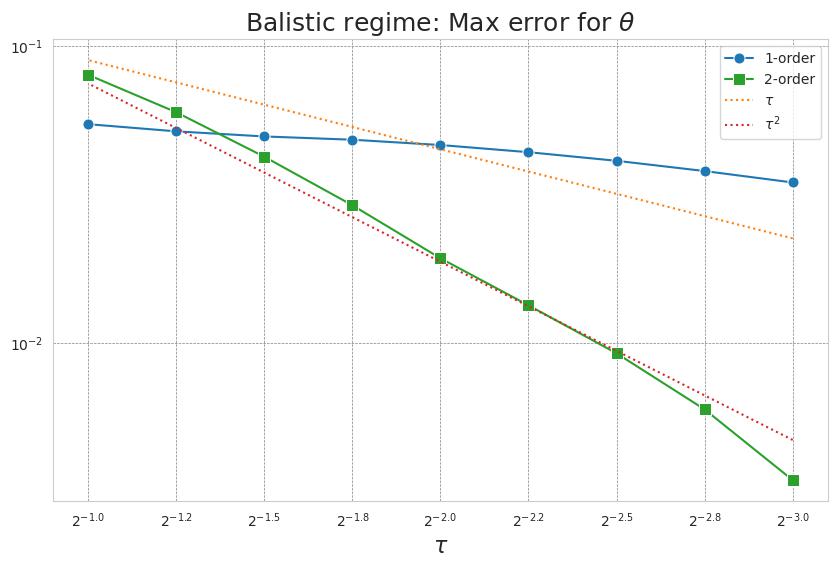} &
    \includegraphics[width=.3\linewidth]{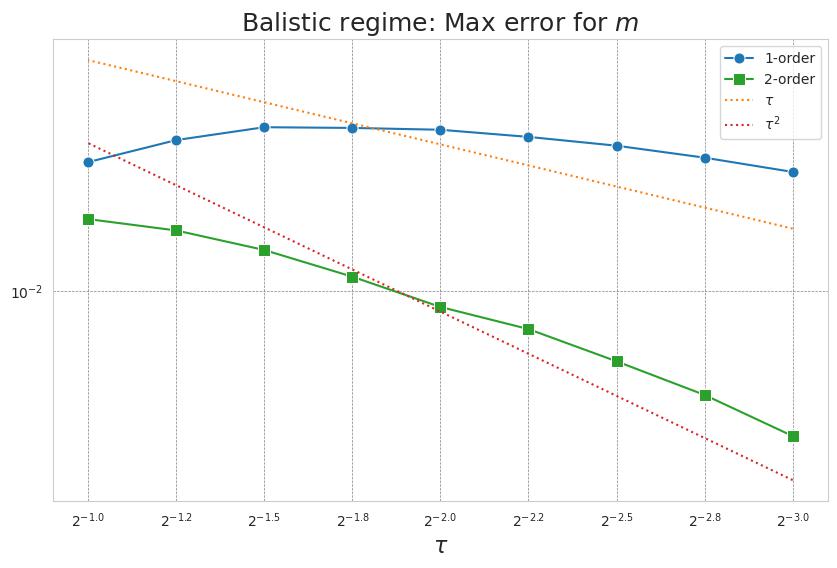}  &

    \includegraphics[width=.3\linewidth]{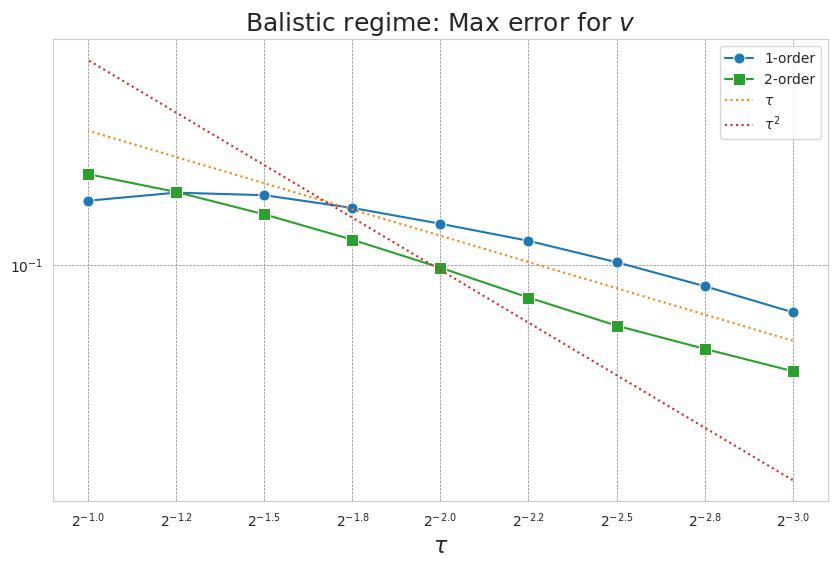}
    
  \end{tabular}
  \caption{\textit{Adam} \regimezero regime: We show the weak error, see \cref{defn:approx}, with test functions $f_1(\theta)=\frac{\|\theta\|_2^2}{d}$, $f_2(m) = \frac{\|m\|_2^2}{d}$ and $f_3(v)=\frac{\|v\|_2^2}{d},$
  where $d$ denotes the dimension of $\theta$, in our case $d=6$.}\label{tb:due}
\end{table}

\begin{figure}[ht]
    \centering
    \includegraphics[width=0.32\textwidth]{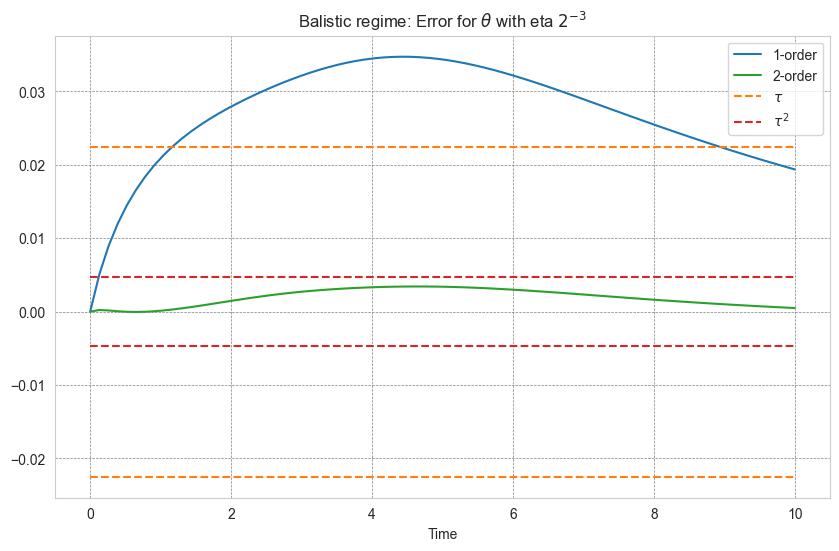}
    \includegraphics[width=0.32\textwidth]{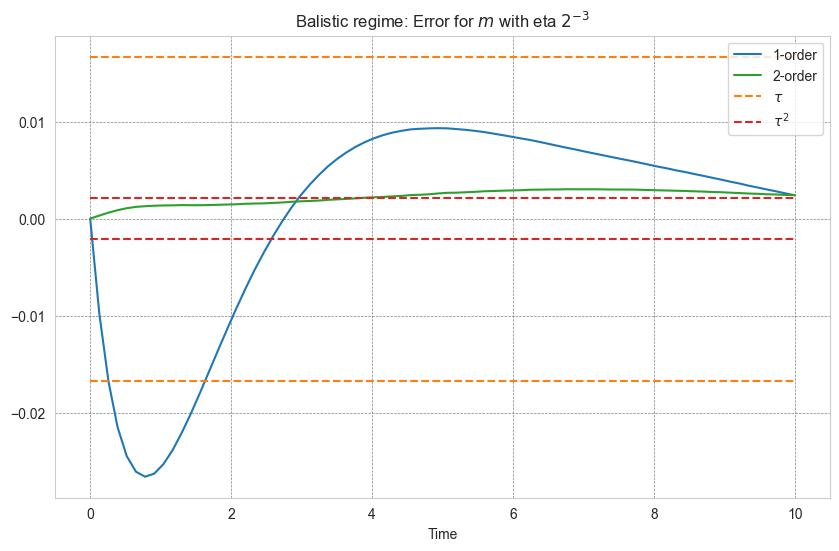}    
    \includegraphics[width=0.32\textwidth]{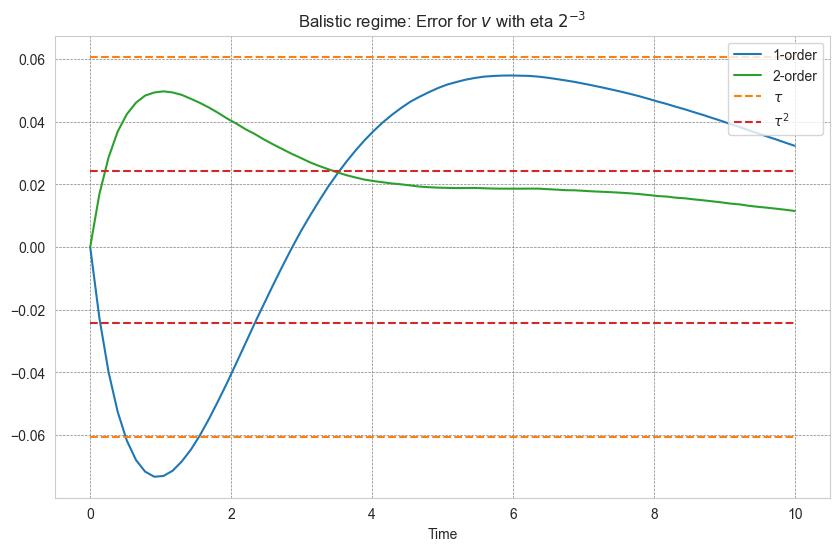}
    
    \caption{\textit{Adam} \regimezero regime: Comparison between the first-order~\eqref{eq:sde1_adam_reg_zero} and second-order~\eqref{eq:sde2_adam_reg_zero} continuous approximations of Adam for $\tau = 2^{-3}$. }
    \label{fig: reg_zero_Adam_1approx_tau_fixed}
\end{figure}

\subsection{\regimeuno regime}
Analogously to \cref{subsec:Experiments_zero}, \cref{fig: reg_uno_RMSProp_1sde_tau_fixed}
compare the weak error along the dynamics of the order-1 and the order-2 approximations of RMSProp in \regimeuno regime, when $\tau$ is fixed to $2^{-3}$. \cref{fig: reg_uno_Adam_1sde_tau_fixed} present the same comparison for Adam.\\
 In particular,  \cref{fig: reg_uno_RMSProp_1sde_tau_fixed,fig: reg_uno_Adam_1sde_tau_fixed} show that at the beginning of the dynamics rapid changes take place and the absence of the second-order term in the drift of first-order approximation leads to a significant error compared to the second-order approximation.\\
\cref{tb:tre} shows the maximum weak error for the approximation of Adam when varying $\tau$. 

\begin{figure}[ht]
    \centering
    \includegraphics[width=.35\linewidth]{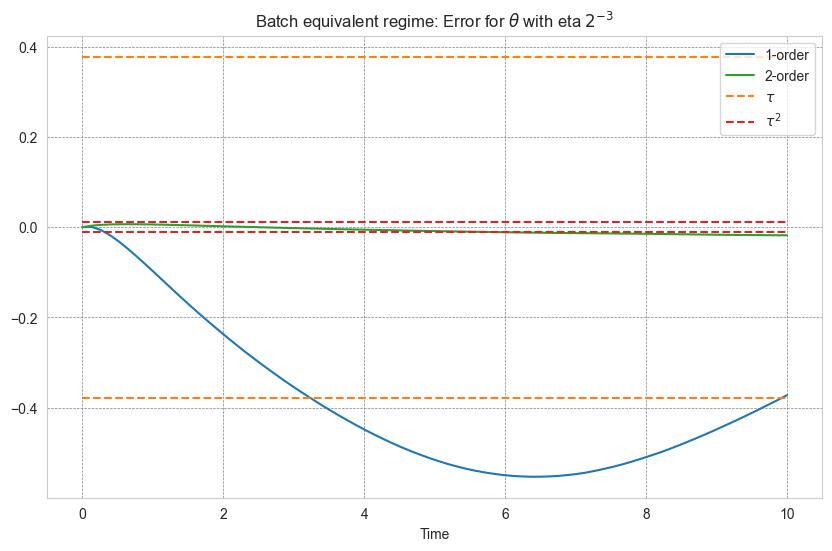}
    \hspace{1 em}
    \includegraphics[width=.35\linewidth ]{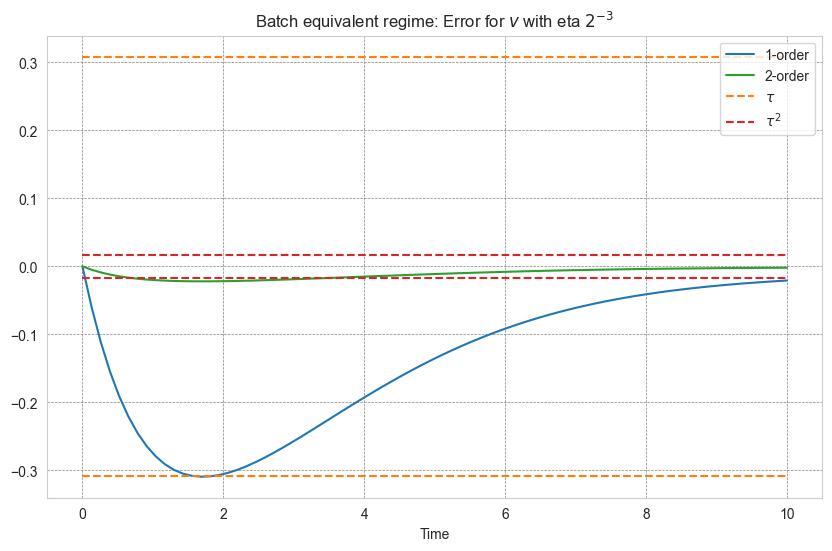}
    \caption{\textit{RMSprop} \regimeuno regime: Comparison between the first-order~\eqref{eq:sde1_rmsprop_reg_uno} and second-order~\eqref{eq:sde2_rmsprop_reg_uno} continuous approximations of RMSProp for $\tau = 2^{-3}$.}
    \label{fig: reg_uno_RMSProp_1sde_tau_fixed}
\end{figure}

\begin{table}
  \centering
  \begin{tabular}{ccc}
    \includegraphics[width=.3\linewidth]{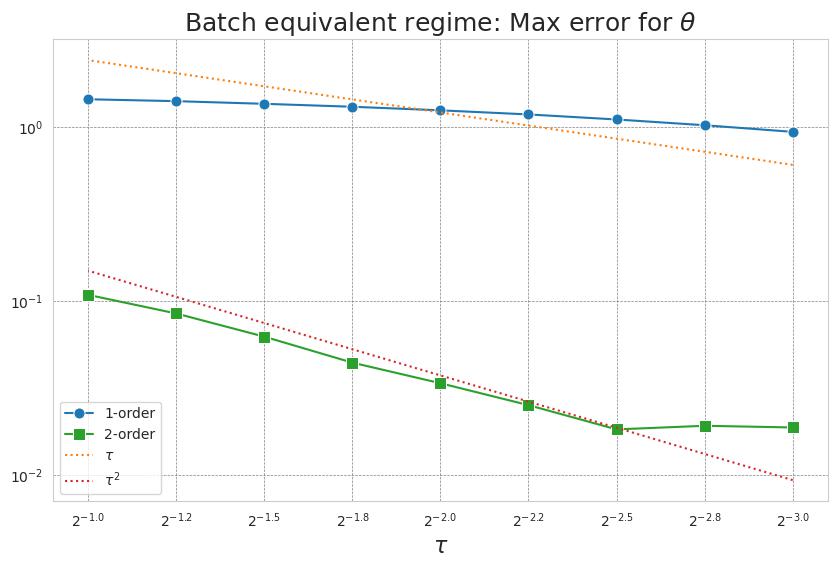} &
    \includegraphics[width=.3\linewidth]{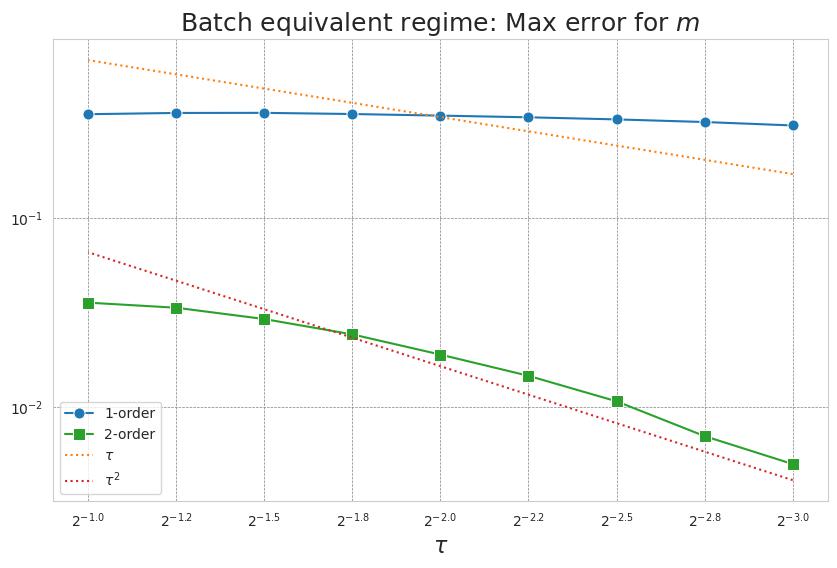}  &

    \includegraphics[width=.3\linewidth]{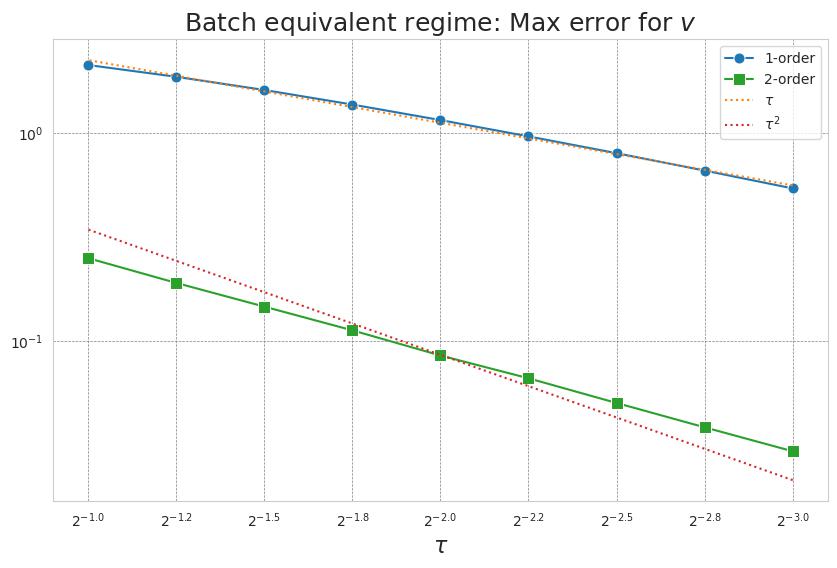}
    
  \end{tabular}
  \caption{\textit{Adam} \regimeuno regime: We show the weak error, see \cref{defn:approx}, with test functions $f_1(\theta)=\frac{\|\theta\|_2^2}{d}$, $f_2(m) = \frac{\|m\|_2^2}{d}$ and $f_3(v)=\frac{\|v\|_2^2}{d},$
  where $d$ denotes the dimension of $\theta$, in our case $d=6$..}\label{tb:tre}
\end{table}

\begin{figure}[ht]
    \centering
    \includegraphics[width=0.32\textwidth]{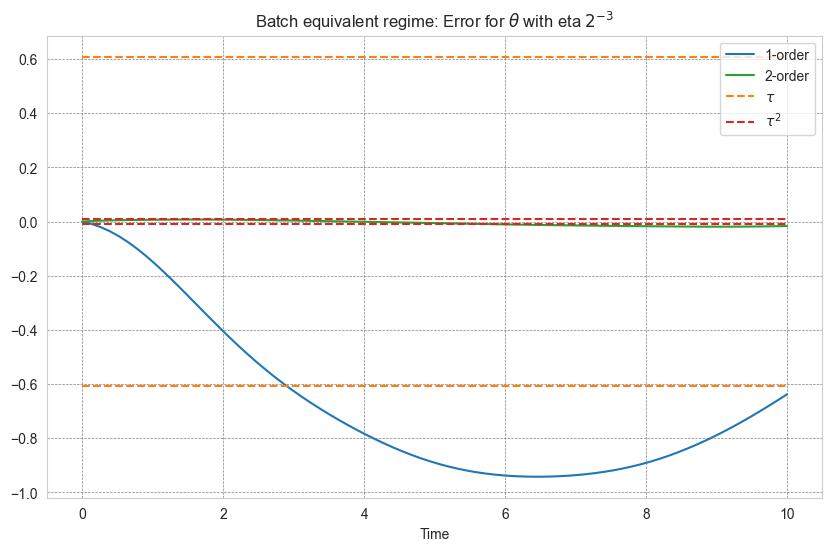}
    \includegraphics[width=0.32\textwidth]{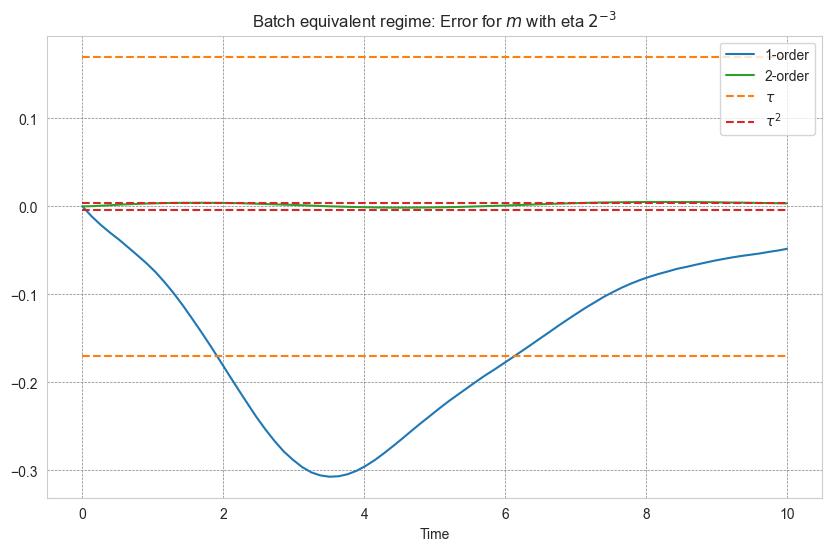}    
    \includegraphics[width=0.32\textwidth]{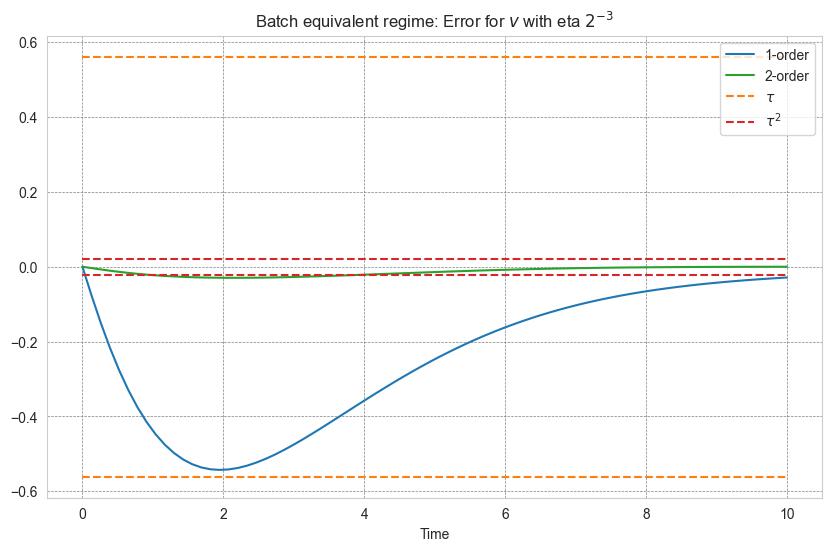}
    
    \caption{\textit{Adam} \regimeuno regime: Comparison between the first-order~\eqref{eq:sde1_adam_reg_uno} and second-order~\eqref{eq:sde2_adam_reg_uno} continuous approximations of RMSProp for $\tau = 2^{-3}$. }
    \label{fig: reg_uno_Adam_1sde_tau_fixed}
\end{figure}

\subsection{Experimental setup}
\label{as:experiments setup}
In our simulations, we minimize the following quadratic objective function:
\begin{align*}
    f_\gamma (\theta) = \frac{1}{2}(\theta-\gamma)^T H
    (\theta-\gamma)- \frac{1}{2}\operatorname{Tr}(H),
\end{align*}
where $\gamma \sim \mathcal{N}(0, Id)$.\\
For the discrete algorithms, $\gamma$ is sampled from a fixed synthetic Gaussian dataset consisting of 128,000 samples, which is shared across all experiments and the same randomly chosen initial point is used in all experiments. The stochastic differential equations are simulated using Euler-Maruyama scheme with a time-step of $\tau^2$, starting from the second iterate of the discrete dynamics, which are computed deterministically. While for the ordinary differential equation Euler method is used.\\
We use the following regularization function:
\[
\phi(x) = \begin{cases}
    x & \text{if } x>c\\
    c & \text{if }x\leq c
\end{cases}
\]
with threshold $c=\tau$.\\
We generated randomly a definite matrix $H$ of dimension 6 and in each run of the first-order approximation, the number of simulations is set to
\[
\min\left\{ \max\left\{ 100\sqrt{T}\,\tau^{-2},\ 10^4 \right\},\ 10^7 \right\},
\]
while for the second-order approximation we use
\[
\min\left\{ \max\left\{ 100\sqrt{T}\,\tau^{-4},\ 10^5 \right\},\ 10^7 \right\},
\]
where \( T = 10 \). The values of $\epsilon$ and $\sigma$ are the same.\\
The parameter $\epsilon$ is kept constant at $10^{-6}$ and $\sigma$ is fixed to $1$.
All experiments were conducted on a computing node equipped with a NVIDIA A40 GPU (46 GB memory) and an AMD EPYC 7763 64-core CPU.\\

\subsection*{Acknowledgments}
{M.~R.} acknowledges the partial support of the project PNRR - M4C2 - Investimento 1.3, Partenariato Esteso PE00000013 - \emph{FAIR - Future Artificial Intelligence Research} - Spoke 1 \emph{Human-centered AI}, funded by the European Commission under the NextGeneration EU programme, of the project \emph{Noise in fluid dynamics and related models} funded by the MUR Progetti di Ricerca di Rilevante Interesse Nazionale (PRIN) Bando 2022 - grant 20222YRYSP, of the project \emph{APRISE - Analysis and Probability in Science} funded by the the University of Pisa, grant PRA\_2022\_85, and of the MUR Excellence Department Project awarded to the Department of Mathematics, University of Pisa, CUP I57G22000700001.
\bibliographystyle{plain}
\bibliography{ref}

\end{document}